\documentclass[twoside,11pt]{article}

\usepackage[preprint]{jmlr2e}
\usepackage{blindtext}
\usepackage{graphicx} 
\usepackage{xcolor}
\usepackage{hyperref}
\usepackage{url}
\usepackage{mlmath}
\usepackage{array}
\usepackage{tabularx} 
\let\proof\relax

\usepackage{amsmath,amsfonts,amssymb,mathtools,amsthm}

\newtheorem{theorem}{Theorem}
\newtheorem*{theorem*}{Theorem}
\newtheorem{lemma}{Lemma}
\newtheorem*{lemma*}{Lemma}

\newtheorem*{property*}{Property}
\newtheorem{remark}{Remark}

\newtheorem{assumption}{Assumption}
\newtheorem*{assumption*}{Assumption}
\newtheorem*{prop*}{Proposition}
\newtheorem{prop}{Proposition}

\newtheorem*{setting*}{Setting}

\newtheorem{task}{Task}

\usepackage{lastpage}
\jmlrheading{23}{2026}{1-\pageref{LastPage}}{}{}{21-0000}{Junjie Yao, Liangkai Hang, and Zhi-Qin John Xu}

\ShortHeadings{Context Staircase: Signature-Aligned Token Embedding Dynamics}{Yao, Hang, and Xu}
\firstpageno{1}

\begin{document}

\title{Context Staircase: Signature-Aligned Dynamics of Token Embeddings under Small Initialization}

\author{\name Junjie Yao \email yaojj22@sjtu.edu.cn \\
       \addr School of Mathematical Sciences\\
       Shanghai Jiao Tong University\\
       Shanghai 200240, China
       \AND
       \name Liangkai Hang \email hangliangkai@sjtu.edu.cn \\
       \addr School of Mathematical Sciences\\
       Shanghai Jiao Tong University\\
       Shanghai 200240, China
       \AND
       \name Zhi-Qin John Xu\textsuperscript{*} \email xuzhiqin@sjtu.edu.cn \\
       \addr Institute of Natural Sciences, School of Mathematical Sciences\\
       Shanghai Jiao Tong University\\
       Shanghai 200240, China\\
       Corresponding author}


\maketitle

\begin{abstract}
Token embeddings are the basic representational units that connect discrete tokens with continuous computation in language models. Although modern language models learn embeddings from random initialization through gradient-based training, the dynamical mechanism by which meaningful embedding structures emerge remains unclear. In this work, we identify that the evolving embedding structures are closely related to token-conditioned label and contextual distributions, which we formalize as probability signatures. We observe a progressive learning process, which we term \emph{Context Staircase}: embeddings learn the low-order statistic signatures of the data before the high-order ones. More specifically, we observe that early in training they align with the simplest, context-free signature linking a token to its label, and as training proceeds, they progressively reflect signatures involving more and more context tokens. We then analyze the gradient flow of embeddings under small initialization to explain this phenomenon, deriving embedding evolution equations for feed-forward and self-attention architectures. We further extend these observations to real language-model training. Finally, we show that these embedding structures play an important role in both task learning and the incorporation of semantic structure into the embedding space. Overall, our results provide a dynamic explanation of how data statistics and architecture jointly shape token embeddings in language models, and reveal an implicit bias in the space of data statistics: training proceeds from simpler, low-order statistical relations toward increasingly complex, context-dependent ones.
\end{abstract}

\begin{keywords}
  token embeddings, training dynamics, gradient flow
\end{keywords}

\section{Introduction}

Large language models (LLMs) have become powerful general-purpose systems, achieving strong performance across a wide range of language tasks~\citep{brown2020language,touvron2023llama,openai2023gpt4,guo2025deepseekr1}. At the foundation of these systems lies the token embedding space, which connects discrete symbols to continuous computation. Early static word representations were often constructed from distributional statistics, such as word co-occurrence, or through matrix factorization~\citep{pennington2014glove,levy2014neural}. Predictive approaches such as CBOW and Skip-gram instead learned word embeddings through dedicated context-prediction objectives, by predicting a target word from its surrounding context or vice versa~\citep{mikolov2013efficient,mikolov2013distributed}. Later approaches further incorporated subword information into these representations~\citep{bojanowski2017enriching}. Modern language models initialize token embeddings randomly and optimize them jointly with the rest of the model under the language modeling objective~\citep{vaswani2017attention,radford2019language,brown2020language}. This raises a basic question: what statistical structures are written into token embeddings during training, and how do these structures evolve over time?

Prior work has shown that learned representations contain meaningful semantic organization~\citep{ethayarajh2019contextual,hewitt2019structural,gurnee2024language,han2024word}. Most of these studies analyze representations after training or probe what information they contain. Much less is known about how these structures emerge in the embedding. In particular, it remains unclear whether the geometry observed at different stages of training can be related to identifiable statistics of the data and the model architecture. Recent studies have further shown that initialization can substantially affect the evolution and geometry of token embeddings~\citep{zhang2024init,zhang2025complexity,yao2025analysis}. In particular, small initialization tends to produce more structured embedding spaces that better reflect the underlying relations of the task, and such embedding structures have been closely associated with improved reasoning ability.

Motivated by these observations, we systematically investigate the evolution of token embeddings under small initialization. We introduce a hierarchy of token-conditioned statistical quantities, called \emph{probability signatures}, which describe the joint distributions among a token, the target label, and the context tokens. Across controlled sequence-prediction tasks, we identify a characteristic evolution that we term \emph{Context Staircase}: the early embedding geometry is closely associated with low-order signatures, while structures associated with higher-order signatures become increasingly visible as training proceeds. As illustrated by the addition task in Figure~\ref{fig:overview}, the embeddings first form a clear linear structure, closely matching that of the low-order signature. Later in training, the embedding geometry develops a pronounced odd--even separation, which is likewise reflected in the higher-order signature. More broadly, Context Staircase reveals an \emph{implicit bias in the space of data statistics}: training preferentially incorporates simpler, low-order statistical relations involving fewer context variables before progressively incorporating more complex, higher-order relations involving richer context.

To explain these empirical observations, we analyze the gradient flow of token embeddings under small initialization. We show that the embedding gradient can be decomposed into contributions associated with label signatures of different orders. Small initialization strongly suppresses higher-order contributions at the beginning of training, explaining the early association with low-order signatures. As training proceeds and the parameter scale grows, contributions from all accessible signature orders reach the same leading scale and can jointly affect the embedding dynamics, with the highest-order contribution being the largest under our late-stage conditions. Thus, small initialization provides a dynamical mechanism for the observed implicit bias, inducing an ordered progression from simple to increasingly context-dependent statistical structures.

For self-attention models, the corresponding signature terms are position-dependent and modulated by attention scores, which control how strongly and how rapidly different tokens affect the embedding dynamics. We further connect these analyses to distinct computational paths in a full decoder-only Transformer and test the resulting hypotheses on a real language-model training run.

\begin{figure}[htb]
    \centering
    \includegraphics[width=1\linewidth]{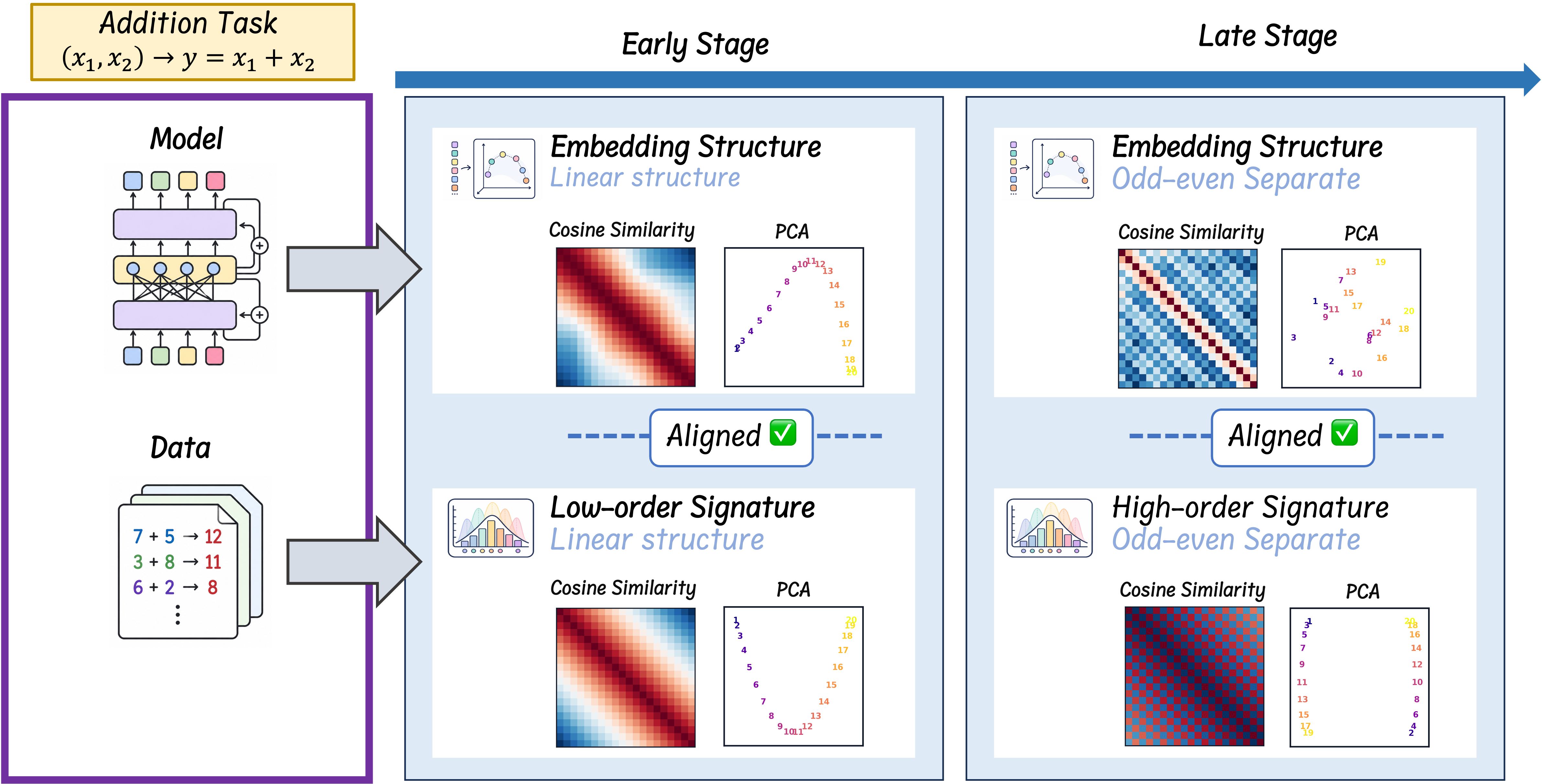}
    \caption{Under small initialization, the embedding geometry evolves progressively during training.}
    \label{fig:overview}
\end{figure}

Finally, we investigate the functional role of these embedding structures and show that they can directly facilitate model training and task learning. Using a simple numerical comparison task, we demonstrate that once the embedding space acquires the structure encoded by the corresponding signature, the model can naturally learn the underlying comparison rule, suggesting that signature-aligned geometry provides a useful inductive basis for solving the task. We further examine signatures derived from real-world language corpora and find that they already exhibit meaningful semantic organization, including structured relations among numbers, cyclic relations among weekdays, and category-level clustering of semantically related nouns. These results suggest that signatures capture statistical structures in language that can serve as a basis for the emergence of semantic organization in the learned embedding space.

Our main contributions are as follows:
\begin{enumerate}
\item We identify a progressive evolution of token-embedding structures, which we term \emph{Context Staircase}: embeddings are closely associated with low-order signatures in the early stage of training, while higher-order signature structures gradually emerge later. This reveals an implicit bias in the space of data statistics, from simpler, low-order relations to increasingly complex, context-dependent ones.

\item We provide a dynamical explanation of this phenomenon through gradient-flow analysis. We show that signatures of different orders explicitly enter the embedding updates, and that small initialization induces their ordered influence over training.

\item We extend the analysis from feed-forward networks to self-attention architectures and further to real language-model training.

\item We investigate the role of the signature-aligned embedding structures, showing that they have important effects on task learning and the incorporation of semantic structure into the embedding space.
\end{enumerate}

\section{Related Work}
\paragraph{Token embeddings and representation geometry.} Early studies of word embeddings focused on constructing static vector representations that encode the distributional properties of words. Neural language models introduced distributed word representations as trainable parameters learned from language modeling objectives~\citep{bengio2003neural}, while word2vec showed that efficient local context-prediction objectives can produce word vectors with strong semantic and syntactic regularities~\citep{mikolov2013distributed}. GloVe further connected word embeddings to global word--word co-occurrence statistics through a matrix-factorization-style objective~\citep{pennington2014glove}, and fastText enriched static embeddings with subword information by representing words as sums of character $n$-gram vectors~\citep{bojanowski2017enriching}. These works established the classical view that embedding geometry is closely tied to contextual distributions in text.

With the rise of contextualized and large language models, subsequent studies shifted from constructing embeddings to analyzing what linguistic and semantic structures are encoded in learned representations. Contextual representations in ELMo, BERT, and GPT-2 were shown to exhibit nontrivial geometric properties such as anisotropy and context dependence~\citep{ethayarajh2019contextual}, and syntactic structures were found to be recoverable from neural word representation spaces through structural probes~\citep{hewitt2019structural}. More recent work has shown that modern large language models organize world knowledge, semantic concepts, and output word embeddings through structured directions or regions in representation space~\citep{gurnee2024language,han2024word,park2025geometry}.

A closely related line of recent work further shows that the geometry of learned representations can depend strongly on initialization. In particular, studies of Transformer models have found that different initialization scales can lead to qualitatively different representation structures and learning behaviors~\citep{zhang2024init,zhang2025complexity,yao2025analysis}. Small initialization tends to produce more structured embedding spaces that more clearly reflect the underlying relations of the task, and these structures have been associated with improved reasoning and compositional generalization. These findings highlight initialization as an important factor in the emergence of embedding structure during training.

However, it remains unclear what data-dependent statistical structures these evolving embeddings correspond to and how such structures emerge progressively through gradient-based training. Our work addresses this question by directly analyzing the training dynamics of token embeddings under small initialization and relating their evolving geometry to a hierarchy of probability signatures derived from the data.

\paragraph{Training dynamics and gradient-flow analyses.} A complementary line of work studies deep learning through the dynamics induced by gradient descent or its continuous-time limit, gradient flow. Early analyses of deep linear networks showed that even linear input--output maps can exhibit highly nontrivial learning dynamics, including plateaus and rapid transitions~\citep{saxe2014exact}. Kernel-based analyses, such as the neural tangent kernel framework, characterize training in the lazy regime where feature representations change only weakly~\citep{jacot2018neural}. More relevant to our setting are recent studies of the feature-learning regime under small initialization. These works show that small initialization can induce qualitatively different training dynamics, and that gradient flow may drive parameters toward structured or low-complexity configurations through phenomena such as condensation~\citep{xu2025overview,zhou2022towards,boursier2022gradient}. Such results suggest that small initialization provides a particularly useful regime for studying how nontrivial representation structures emerge during training.

For Transformer models, the dynamical effects of initialization and feature learning are only beginning to be understood. Mean-field analysis has been used to study gradient scales and training stability in Transformers~\citep{xiong2020layer}, while other works analyze how attention-based architectures acquire in-context learning or token-level dependencies through gradient descent and gradient-flow dynamics~\citep{vonoswald2023transformers,yang2024training,yang2025transformers}. Under small initialization, \citet{yao2025analysis} analyzes the early-stage training dynamics of language models and characterizes the coupled evolution of token embeddings and self-attention during training. More recently, \citet{chen2026focus} develops a stage-wise gradient-flow analysis of attention learning, revealing a multi-stage process in which projection parameters first undergo condensation, attention subsequently becomes activated and develops structured focus. Most closely related to our work, \citet{im2026associate} analyzes how attention-based language models learn token associations from natural language data by approximating early-stage gradients with their leading terms. They derive closed-form characterizations of Transformer weights in terms of corpus-dependent basis functions, including bigram, token-interchangeability, and context mappings, thereby connecting parameter formation with interpretable data statistics. Their analysis, however, focuses primarily on parameters inside Transformer blocks, such as attention- and MLP-related weights, whereas the evolution of the token embedding matrix itself remains largely unexplored. In contrast, we focus on token embeddings under small initialization: starting from their empirically observed geometric evolution, we use gradient-flow analysis to identify the data-dependent probability signatures that enter the embedding updates through different computational paths.

\section{Preliminary}

\subsection{Prediction Problem}

We consider a token prediction problem over a finite vocabulary $\mathcal V$, with vocabulary size $|\mathcal V|=V$. Each sample consists of an input sequence $X$ and a target token $y$:
\begin{equation}\label{eq:task_pre}
X=(x_1,x_2,\ldots,x_L)\in\mathcal V^L,\qquad y\in\mathcal V .
\end{equation}
For each token $\alpha\in\mathcal V$, we use $\ve_\alpha\in\mathbb R^V$ to denote its one-hot vector. Consider a model $\vf: \fV^L\rightarrow \mathbb{R}^V$, which contains an input embedding matrix $\vW^E\in\mathbb R^{d\times V}$. The embedding vector of token $\alpha\in\mathcal V$ is denoted by $\vw^E_\alpha\in\mathbb R^d$.
The model also contains an output unembedding matrix $\vW^U\in\mathbb R^{V\times d}$ and $\vg(X)\in\mathbb R^d$, which satisfies
\begin{equation*}
\vf(X)=\vW^U\vg(X)\in\mathbb R^V .
\end{equation*}
The predicted distribution is $\vp(X)=\operatorname{softmax}(\vf(X))$, and training objective is the standard cross-entropy loss $\ell(X,y)=-\log \vp(X)_y$.

Throughout this work, we study the gradient-flow dynamics of the input embedding vectors:
\begin{equation}\label{eq:grad_all}
\begin{aligned}
\frac{d\vw^E_\alpha}{dt}&=-\mathbb E_{\mathcal D}\left[\frac{\partial \ell(X,y)}{\partial \vw^E_\alpha}\right]=-\mathbb E_{\mathcal D}\left[\left(\frac{\partial \vf(X)}{\partial \vw^E_\alpha}\right)^T\frac{\partial \ell(X,y)}{\partial \vf(X)}\right]\\
&= \mathbb E_{\mathcal D}\left[\left(\frac{\partial \vf(X)}{\partial \vw^E_\alpha}\right)^T\left(\ve_y - \vp(X)\right)\right] = \mathbb E_{\mathcal D}\left[\left(\frac{\partial \vf(X)}{\partial \vw^E_\alpha}\right)^T\ve_y\right]- \mathbb E_{\mathcal D}\left[\left(\frac{\partial \vf(X)}{\partial \vw^E_\alpha}\right)^T\vp(X)\right]\\
& := \left.\frac{d\vw^E_\alpha}{dt}\right|^y - \left.\frac{d\vw^E_\alpha}{dt}\right|^p.
\end{aligned}
\end{equation}
Our goal is to characterize how the data distribution and model components shape the embedding updates during training, and to relate these updates to the empirical evolution of embedding geometry.

\subsection{Token-conditioned Signatures}

In language modeling, the meaning of a token is often characterized by the distributional context in which it appears~\citep{harris1954distributional,firth1957papers,turney2010frequency,lenci2018distributional}. Motivated by this distributional view, we introduce a hierarchy of token-conditioned distributions for our sequence prediction setting~\eqref{eq:task_pre}.
For a token $\alpha\in\mathcal V$, we denote its occurrence times in $X$ by $N_{\alpha}(X)$. Let $I\sim {\rm Unif}([L])$
be sampled independently of the training sample $(X,y)$, and condition on the event $x_I=\alpha$. We define the occurrence probability of $\alpha$ by
\begin{equation*}
    r_\alpha:=\mathbb P(x_I=\alpha)=\frac{1}{L}\mathbb E[N_\alpha(X)].
\end{equation*}
For notational simplicity, throughout this paper we write $\mathbb P(\,\cdot\mid\alpha)$ as shorthand for $\mathbb P(\,\cdot\mid x_I=\alpha)$, and use the same shorthand for conditional expectations. The most basic statistical object associated with $\alpha$ is the conditional label distribution
\begin{equation*}
    \mathbb P\left(y=\nu\mid\alpha\right),\qquad \nu\in\fV.
\end{equation*}
Conditioned on $\alpha$, we remove the occurrence at position $I$ from the sequence and write the remaining context tokens as
\begin{equation*}
    X_{-\alpha}=(x_1,x_2,\ldots,x_{L-1}).
\end{equation*}
For $m\in\{1,\ldots,L-1\}$, let
\begin{equation*}
    \mathcal I_m
    :=
    \left\{(j_1,\ldots,j_m)\in[L-1]^m:
    j_a\neq j_b\ \text{for all }a\neq b\right\},
\end{equation*}
so that $|\mathcal I_m|=(L-1)_m=(L-1)(L-2)\cdots(L-m)$. We sample an ordered tuple of distinct context positions
\begin{equation*}
    (J_1^{(m)},\ldots,J_m^{(m)})\sim {\rm Unif}(\mathcal I_m),
\end{equation*}
independently of $(X,y,I)$. The order-$m$ co-occurrence and label signatures are defined componentwise by
\begin{align*}
    \varphi_\alpha^m(\beta_1,\ldots,\beta_m)
    &:={\mathbb P}\left(
    x_{J_1^{(m)}}=\beta_1,\ldots,x_{J_m^{(m)}}=\beta_m
    \mid \alpha\right),\\
    \varphi_\alpha^{y;m}(\nu,\beta_1,\ldots,\beta_m)
    &:={\mathbb P}\left(
    y=\nu,
    x_{J_1^{(m)}}=\beta_1,\ldots,x_{J_m^{(m)}}=\beta_m
    \mid \alpha\right).
\end{align*}
Equivalently,
\begin{align*}
    \varphi_\alpha^{y;m}(\nu,\beta_1,\ldots,\beta_m)
    =\frac{1}{(L-1)_m}
    \sum_{(j_1,\ldots,j_m)\in\mathcal I_m}
    {\mathbb P}\left(
    y=\nu,x_{j_1}=\beta_1,\ldots,x_{j_m}=\beta_m
    \mid\alpha\right),
\end{align*}
with the analogous expression for $\varphi_\alpha^m$. For $m=0$, we define
\begin{equation*}
    \varphi_\alpha^{y;0}(\nu)
    :=\mathbb P\left(y=\nu\mid\alpha\right).
\end{equation*}
The corresponding tensors are
\begin{align*}
    \vvarphi_\alpha^m
    &: =
    \sum_{\beta_1,\ldots,\beta_m\in\fV}
    \varphi_\alpha^m(\beta_1,\ldots,\beta_m)
    \bigotimes_{i=1}^{m}\ve_{\beta_i},\\
    \vvarphi_\alpha^{y;m}
    &: =
    \sum_{\nu,\beta_1,\ldots,\beta_m\in\fV}
    \varphi_\alpha^{y;m}(\nu,\beta_1,\ldots,\beta_m)
    \ve_\nu\otimes\bigotimes_{i=1}^{m}\ve_{\beta_i}.
\end{align*}
We refer to $\vvarphi_\alpha^m$ and $\vvarphi_\alpha^{y;m}$ as the order-$m$ co-occurrence signature and the order-$m$ label signature of $\alpha$, respectively. Each signature is a normalized probability tensor. Moreover, the hierarchy is marginally consistent: marginalizing any context mode of an order-$m$ signature recovers the corresponding order-$(m-1)$ signature. We formalize this property below and use it both in the small-initialization analysis and in the frequency interpretation of the signature hierarchy.

In this work, we first focus on sequence prediction settings in which the context tokens are exchangeable, so that their statistical role does not depend on their specific positions in the sequence. This abstraction is also consistent with classical permutation-invariant embedding models such as CBOW, where context tokens are aggregated without distinguishing their order. We later relax this assumption in our analysis of self-attention, where different token positions are explicitly distinguished and position-dependent signatures are considered.

\subsection{Parameter Initialization}
Throughout this paper, all parameter matrices are initialized as
\begin{equation*}
W_{ij}\sim\mathcal N(0,d_{\rm in}^{-2\gamma}),
\end{equation*}
where $d_{\rm in}$ is the input dimension of the matrix. Equivalently, each coordinate has standard deviation $d^{-\gamma}$. The hyperparameter $\gamma$ controls the initialization scale: larger $\gamma$ corresponds to smaller initialization. We use the small-initialization regime $\gamma>\frac{1}{2}$ when analyzing the relative scale of different signature contributions at initialization. This regime is closely related to the condensed/nonlinear training regime studied in prior work~\citep{luo2021phase,zhou2022towards}. 

\subsection{Main Result: Context Staircase}
\label{sec:main_result}

With the hierarchy of signatures defined above, we now introduce the central phenomenon studied in this work. We find that \emph{small initialization} induces a characteristic progressive evolution of token embeddings, which we term \emph{Context Staircase}. Under small initialization, the embedding geometry is initially dominated by low-order signatures, especially the zeroth-order label signature. As training proceeds, the influence of higher-order signatures gradually increases, and richer contextual structures become increasingly visible in the embedding space.

Figure~\ref{fig:overview} illustrates this phenomenon in the addition task. 
At the early stage of training, the number embeddings form a smooth ordered structure that closely matches the zeroth-order signature. 
Later in training, an additional odd--even structure emerges, which is contained in the higher-order signature. 
Similar transitions are observed across the controlled tasks studied in the following sections.

The theoretical results of this paper provide a simple explanation for this progressive evolution.

\begin{prop*}[Low-order dominance under small initialization, informal]
Under small initialization, higher-order signatures have a much weaker influence on the embedding dynamics than the zeroth-order signature. Consequently, the embedding space is shaped predominantly by low-order statistical structure at the beginning of training.
\end{prop*}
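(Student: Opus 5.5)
I will make the informal statement precise for a concrete architecture. The later sections treat a feed-forward (CBOW-like) model, so I take $\vg(X)=\sigma\bigl(\vW\sum_{i=1}^L \vw^E_{x_i}\bigr)$, or a deeper MLP with a smooth activation satisfying $\sigma(0)=0$, $\sigma'(0)\neq 0$, applied to the summed embeddings. Then $\vf(X)=\vW^U\vg(X)$. The strategy has three parts: expand the embedding gradient~\eqref{eq:grad_all} as a Taylor series in the context embeddings around zero, identify each term with a label signature, and track powers of the initialization scale.

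\textbf{Step 1 (gradient decomposition).} The Jacobian $\partial\vf(X)/\partial\vw^E_\alpha$ equals $N_\alpha(X)$ times the Jacobian with respect to the common input $\vh(X)=\sum_i\vw^E_{x_i}$. I expand $\sigma'(\vW\vh)$ around $\vh=\vw^E_\alpha$, i.e.\ in powers of the context sum $\sum_{j\neq I}\vw^E_{x_j}$. A term of degree $m$ in context embeddings is a symmetric multilinear form in $\vw^E_{x_{j_1}},\dots,\vw^E_{x_{j_m}}$. Terms with repeated indices are lower-order corrections, which I absorb.

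Next I take the expectation over $(X,y)$ and rewrite $\mathbb E[N_\alpha(X)\,\cdot\,]$ as $L\,r_\alpha\,\mathbb E[\,\cdot\mid\alpha]$ using the uniform index $I$. This turns the degree-$m$ term of $\frac{d\vw^E_\alpha}{dt}\big|^y$ into a contraction of $\vvarphi_\alpha^{y;m}$ with $\vW^U$ on the $\nu$-mode and with $\vW^E$ on each context mode:
\begin{equation*}
\left.\frac{d\vw^E_\alpha}{dt}\right|^y=L r_\alpha\sum_{m=0}^{L-1}(L-1)_m\, \mathcal T_m\!\left[\vvarphi_\alpha^{y;m};\vW^U,\vW^E,\vW,\vw^E_\alpha\right]+\text{(repeated-index remainder)},
\end{equation*}
with $\mathcal T_m$ multilinear of degree $m$ in $\vW^E$. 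For the $p$-part, $\vp(X)=\mathrm{softmax}(\vf)$ is near uniform at small initialization, because $\vf=O(\text{small})$. Expanding it likewise gives co-occurrence signatures $\vvarphi^m_\alpha$ contracted against $\tfrac1V\mathbf 1$ plus higher corrections.

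\textbf{Step 2 (scale counting).} Under $\gamma>1/2$, each matrix entry is $O(d^{-\gamma})$, so embedding norms are $O(d^{1/2-\gamma})\ll1$, and operator norms are similarly small. The $m=0$ term involves only $\vW^U$, $\vW$, and $\sigma'(0)$ (plus $\vw^E_\alpha$-dependent corrections). Each additional context order multiplies it by at least one factor of $\|\vW\|\,\|\vw^E\|=O(d^{1-2\gamma})\to0$, weighted by $\sigma^{(m+1)}(0)$ and by combinatorial factors that are bounded for fixed $L$. Hence the order-$m$ contribution is $O(\epsilon^m)$ relative to the order-$0$ one, with $\epsilon=d^{1-2\gamma}$. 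I will verify that the leading term $L r_\alpha\,\sigma'(0)\,\vW^\top\vW^{U\top}(\vvarphi_\alpha^{y;0}-\tfrac1V\mathbf 1)$ is generically nonzero with a norm I can bound below. Gaussian concentration of $\vW^{U}$ and $\vW$ restricted to the fixed vector $\vvarphi^{y;0}_\alpha-\tfrac1V\mathbf1$ gives this with high probability, provided that vector is nonzero. This is a nondegeneracy assumption I will state.

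\textbf{Step 3 (persistence over an initial time window).} The per-step bound only shows the dominance instantaneously. To conclude that the \emph{embedding space} is shaped by low-order structure, I integrate over a window $[0,T_\epsilon]$ during which all parameters stay within a constant factor of their initial scale. I obtain this by a Grönwall argument: the velocity field is itself small, so parameters grow at most exponentially at rate $O(\|\vW\|\|\vW^U\|)$. On that window, $\vw^E_\alpha(t)-\vw^E_\alpha(0)$ is aligned with $\vW^\top\vW^{U\top}\vvarphi^{y;0}_\alpha$ up to relative error $O(\epsilon)$.

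The main obstacle is making Step 3 rigorous. $\vW$ and $\vW^U$ evolve simultaneously and may align, and the linear-in-$\vvarphi^{y;0}$ image is only meaningful once the embedding update dominates its random initialization. I would first handle this in the regime where embedding displacement exceeds the initial norm while $\epsilon$ remains small, relying on the stated condition $\gamma>1/2$. If a clean bound fails, I would settle for the instantaneous comparison of Step 2, which suffices for the informal claim.
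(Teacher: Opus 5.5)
Your Steps 1--2 are essentially the paper's argument. The paper writes the label-driven gradient as $\sum_m \fA_{\alpha,m}\cdot\vvarphi^{y;m}_\alpha$ (Theorem~\ref{thm:grad_ffn_y}) for its model without your inner matrix $\vW$, notes that the lowest-degree piece of the order-$m$ term carries $m$ extra embedding factors of scale $d^{-\gamma}$, lower-bounds the $m=0$ term through the Gaussian unembedding, and uses $\|\vvarphi^{y;m}_\alpha\|_F\le\|\vvarphi^{y;0}_\alpha\|_F$ to get an $O(d^{-m\gamma})$ root-mean-square ratio at initialization, for the label-driven part only. The paper proves nothing like your Step 3, so your fallback to the instantaneous comparison is exactly what it establishes; as written, Step 3's $O(\epsilon)$ alignment claim would also need $\vW,\vW^U$ to change by relative $O(\epsilon)$ over the window, not merely stay within a constant factor of their initial scale.
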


The intuition is that higher-order signature contributions involve more embedding factors. When the embeddings are initialized at a sufficiently small scale, these additional factors strongly suppress the corresponding contributions. This creates a natural hierarchy among signature orders at the beginning of training.

\begin{prop*}[Emergence of higher-order signatures during training, informal]
As training proceeds and the parameter scale grows, the initial suppression of higher-order signatures gradually disappears. 
Higher-order signatures can then have a substantial influence on the embedding dynamics together with the lower-order signatures. 
In the late-stage regime considered in our analysis, the highest accessible signature order has the strongest influence among them.
\end{prop*}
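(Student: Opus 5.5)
Since the statement is informal, the plan is to make it precise in a concrete model and then compare the scales of the signature terms in the gradient flow \eqref{eq:grad_all}. The model I have in mind is a feed-forward network acting on the embedded tokens $\vw^E_{x_1},\ldots,\vw^E_{x_L}$ through a smooth activation. First I Taylor-expand $\vg(X)$ around the small initialization. A degree-$k$ term in this expansion is a multilinear form in $k$ embedding vectors, weighted by the inner layer weights.

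Differentiating with respect to $\vw^E_\alpha$ and taking the expectation, I collect the terms by how many embeddings other than $\vw^E_\alpha$ they contain. Call this number $m$. Conditioning on $x_I=\alpha$ and using exchangeability of the context, the part of the $y$-term $\left.\frac{d\vw^E_\alpha}{dt}\right|^y$ carrying $m$ context embeddings has the form
\[
r_\alpha\,\mathcal{T}_m\!\left(\vvarphi_\alpha^{y;m};\vW^U,\{\vw^E_\beta\}\right),
\]
where $\mathcal{T}_m$ contracts the label mode of the order-$m$ label signature with $\vW^U$ and each context mode with one copy of $\vW^E$. The $p$-term is handled the same way. Under small initialization $\vp(X)\approx\frac1V\mathbf 1$, so it reduces to expressions in the co-occurrence signatures $\vvarphi_\alpha^m$.

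The key step is the scale count. At initialization each entry of $\vW^E$ is $O(d^{-\gamma})$, so a $d$-dimensional contraction of an embedding with a random weight vector is $O(d^{1/2-\gamma})$. The order-$m$ contribution therefore carries an extra factor $\epsilon^m$ relative to the order-$0$ term, with $\epsilon:=d^{1/2-\gamma}\to0$ whenever $\gamma>\tfrac12$. Two facts make this a clean comparison.
\begin{itemize}
\item Marginal consistency lets the pieces of the order-$m$ term that factor through lower-order signatures be absorbed into those lower orders. The genuinely new order-$m$ content is then of relative size $\epsilon^m$.
\item An a priori bound keeps $\|\vW^E\|,\|\vW^U\|=O(d^{-\gamma}\sqrt d)$ on an initial time window of length $O(\log(1/\epsilon))$. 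I would prove this by a Grönwall argument on the gradient flow.
\end{itemize}
Together these give the first proposition: on that window the dynamics equal the zeroth-order flow, driven by $\vvarphi_\alpha^{y;0}-\frac1V\mathbf 1$ through $\vW^U$, up to a relative error of $O(\epsilon)$. In particular, the embeddings stay aligned with the zeroth-order label signature.

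For the second proposition I redo the same bookkeeping in a late-stage regime. There I assume the parameter norms have grown so that the effective contraction scale $\epsilon(t)$ is of order one or larger. All orders $m\le L-1$ then enter at the same leading scale. If $\epsilon(t)>1$, which I would take as the late-stage condition, the order-$(L-1)$ term carries the largest power and dominates. This is the ``highest accessible order'' statement.

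The main obstacle is the a priori control in the first proposition. The embeddings and $\vW^U$ grow together, so I have to show that the zeroth-order term drives their growth without letting $\epsilon(t)$ escape before the window ends. My first attempt would be a balancedness-type conserved quantity, namely $\vW^{U\top}\vW^U-\vW^E\vW^{E\top}$ staying approximately constant. This would bound the joint growth and yield the required window length.
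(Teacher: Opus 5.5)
Your late-stage step rests on the count ``the order-$m$ term carries an extra factor $\epsilon^m$.'' That count describes only the lowest-degree piece of the order-$m$ term, so it fails in exactly the regime this statement is about. In the paper's setting the order-$m$ contraction is $\mathcal A_{\alpha,m}=\sum_{k=m+1}^{K}kC_k\mathcal W^{y,m,k}_\alpha$. Its degree-$k$ piece contains $(\vw^E_\alpha)^{\odot l}$ together with $(\vw^E_{\beta_r})^{\odot n_r}$, where $n_r\geq 1$ and $l+\sum_r n_r=k-1$. That is $k-1$ embedding factors in total, whatever $m$ is. Your $\mathcal T_m$, with exactly one copy of $\vW^E$ per context mode and the order-$0$ term treated as $\epsilon$-free, discards all of these.

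At small scale the $k=m+1$ piece wins, which gives the initial $s_E^m$ hierarchy. Once $s_E(t)$ is large, however, the $k=K$ piece wins for \emph{every} $m$. So all accessible orders scale as $s_U(t)s_E(t)^{K-1}$, including $m=0$ through $(\vw^E_\alpha)^{\odot(K-1)}$. No order carries a larger power than another, and ``the order-$(L-1)$ term carries the largest power'' is false there. Relatedly, a Taylor expansion of a generic smooth activation around the small initialization is not a valid tool once $\epsilon(t)\gtrsim 1$. The paper instead takes $\sigma$ to be an exact polynomial of finite degree $K$ with $C_K\neq 0$. It is this common top degree that erases the scale separation between orders.

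The idea you are missing for the last sentence of the statement is combinatorial, not a power of the scale. At degree $K$ the order-$m$ term carries the multinomial count $B_m(K)=(m+1)!\,S(K-1,m+1)\sim(m+1)^{K-1}$, times $\binom{L-1}{m}$, where $S$ is the Stirling number of the second kind. For $K\gg L$ the order $L-1$ therefore has by far the largest prefactor. Together with the assumed uniform non-degeneracy of the normalized contractions, this makes it the largest of several same-scale terms.

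Your two-regime picture cannot deliver both halves of the claim at once. At $\epsilon(t)\sim 1$ nothing in your argument singles out the top order. For $\epsilon(t)>1$ your scaling would suppress the lower orders by $\epsilon^{-(L-1-m)}$, which is replacement rather than influence ``together with the lower-order signatures.''

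Finally, the Gr\"onwall/balancedness control you flag as the main obstacle belongs to the early-stage proposition. For this statement the paper does not derive the late-stage regime from the flow; it assumes it ($s_E(t)$ large, $s_U(t)$ finite, uniform non-degeneracy). That is not where the work lies.
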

Together, these two results explain the basic mechanism of \emph{Context Staircase}: small initialization first favors simple, low-order statistical relations, while training progressively allows richer, higher-order contextual statistics to shape the embedding space. 
Importantly, higher-order signatures do not necessarily replace lower-order ones. 
Rather, multiple signature orders can jointly influence the late-stage embedding geometry, with the highest accessible order becoming the most prominent under the conditions of our analysis.

The following sections provide the detailed empirical evidence and formal analysis for this phenomenon. 
We first study Context Staircase in a controlled feed-forward model, and then examine how it extends to self-attention architectures and real language-model training.

\section{Signature-Aligned Dynamics in Feed-Forward Models}
We first study a controlled feed-forward model. This setting allows us to present the empirical signature-alignment phenomenon in its simplest form and then derive a gradient-flow explanation for why the same statistical structures enter the embedding updates.
\subsection{Model Definition}

We first consider the feed-forward network and study how the embedding space develops. Given an input sequence $X=(x_1,\ldots,x_L)$, we define that:
\begin{equation}\label{eq:ffn}
\vf_{\mathrm{ffn}}(X):=\vW^U\vg_{\mathrm{ffn}}(X)=\vW^U\sigma\left(\sum_{i=1}^{L}\vw^E_{x_i}\right),
\end{equation}
where $\sigma$ denotes an element-wise activation. This model can be viewed as a CBOW-type architecture~\citep{mikolov2013efficient}. We use this controlled model first to expose the empirical signature-alignment phenomenon and then to derive the gradient-flow mechanism that can generate it.

\subsection{Empirical Signature Alignment}
We begin with the central empirical observation of this work. In controlled sequence-prediction tasks, the geometry of the learned token embeddings evolves through a hierarchy of token-conditioned signatures. The early embedding space is well explained by the zero-order label signature, whereas the effects of higher-order signatures become jointly visible later in training. We first document this empirical change and then use gradient-flow analysis to explain why different signature orders enter the embedding dynamics and how their relative scales evolve during training.

\paragraph{Alignment with $\vvarphi_\alpha^{y;0}$ in the early stage.}
We first examine how the embedding space is organized during the early
stage of training, when the model has only begun to extract statistical
regularities from the data. Across a series of controlled tasks, we observe
that the emerging embedding structures are not arbitrary, but exhibit a
close and systematic relation to the zeroth-order label signature
$\vvarphi_{\alpha}^{y;0}$. We study this phenomenon from two complementary
perspectives: the embedding structure of an individual token and the
geometric organization of the embedding space across multiple tokens. In
the first class of tasks, we prescribe a classical label distribution for
a specific token $a$ and compare the resulting early-stage embedding of
$a$ with this distribution. In the second class of tasks, we construct
datasets in which the zeroth-order signatures of different tokens exhibit
predefined geometric patterns, and examine whether the corresponding
embeddings develop analogous structures.

We construct the first class of tasks as follow:
\begin{task}\label{task:random_label}
Let $\mathcal{V}$ be a finite vocabulary and let $\mathcal{Y}$ be a finite label space. 
We fix a single anchor token $a \in \mathcal{V}$ and a predefined context-token set $\mathcal{X} \subseteq \mathcal{V}\setminus\{a\}$. 
The input space associated with the anchor token $a$ is defined as
\begin{equation*}
\mathcal{D}_a := \{(a,x_1,x_2): x_1,x_2 \in \mathcal{X}\}.
\end{equation*}
For each sequence $(a,x_1,x_2)\in \mathcal{D}_a$, we assign its target label by independently sampling
\begin{equation*}
y(a,x_1,x_2) \sim p_a,
\end{equation*}
where $p_a\in\Delta(\mathcal{Y})$ is a prescribed label distribution associated with the anchor token $a$. 
\end{task}

We compare $\vvarphi_{a}^{y;0}$ with the distribution induced by the
embedding logits,
${\rm softmax}\left(\vW^{U}\vw^{E}_{a}\right)$, under four prescribed
label distributions: uniform, normal, Poisson, and exponential.

As shown in Figure~\ref{fig:early_stage}A, after a short period of
training, the distribution induced by the embedding logits of token $a$
closely matches its zeroth-order label signature
$\vvarphi_a^{y;0}$. This correspondence is consistently observed across
all four target distributions, as reflected by the high correlation and
small distributional distance between
${\rm softmax}\left(\vW^{U}\vw^{E}_{a}\right)$ and
$\vvarphi_a^{y;0}$ (In Figure~\ref{fig:early_stage}A, $r$ denotes the Pearson correlation coefficient, KL denotes the Kullback--Leibler divergence, and JSD denotes the Jensen--Shannon divergence between the two distributions). These results show that the statistical profile
encoded by the early-stage embedding of token $a$ is closely associated
with its zeroth-order label distribution.

Then we construct the second type of tasks as follow:
\begin{task}\label{task:2token_math}
Let $\mathcal{X}=\{1,\ldots,N\}$ be a finite set of number tokens, 
we consider input sequences of length two,
\begin{equation*}
X=(x_1,x_2), \qquad x_1,x_2\in\mathcal{X}.
\end{equation*}
For each input sequence $(x_1,x_2)$, the target label is determined by an arithmetic rule
\begin{equation*}
y = F(x_1,x_2),
\end{equation*}
where $F$ is chosen from one of the following operations:
\begin{equation*}
F_{\mathrm{add}}(x_1,x_2)=x_1+x_2,\quad
F_{\mathrm{sub}}(x_1,x_2)=|x_1-x_2|,\quad
F_{\mathrm{mul}}(x_1,x_2)=x_1x_2.
\end{equation*}
In addition, we construct the task $F_{\rm mod}(x_1,x_2)=x_1+x_2+1\mod{N}$, which will be used for the analysis in the next section.

\end{task}

These tasks are chosen because they induce distinctive and interpretable label-distribution structures over number tokens, which makes the comparison between embedding geometry and signature geometry more direct.

Let $\vw^{E,(t)}_a$ denote the learned input embedding of token $a$ at epoch $t$. For every pair of number tokens $a,b\in\mathcal{X}$, we compute the pairwise embedding similarity
\begin{equation}\label{eq:S_emb}
S^{\mathrm{emb},(t)}_{a,b}
=
\frac{\langle \vw^{E,(t)}_a,\vw^{E,(t)}_b\rangle}{\|\vw^{E,(t)}_a\|_2\|\vw^{E,(t)}_b\|_2},
\end{equation}
and the pairwise signature similarity
\begin{equation}\label{eq:S_sig_0}
S^{\mathrm{sig},(0)}_{a,b}
=
\frac{\langle \vvarphi^{y;0}_a,\vvarphi^{y;0}_b\rangle}
{\|\vvarphi^{y;0}_a\|_2\|\vvarphi^{y;0}_b\|_2}.
\end{equation}
We denote $S^{{\rm emb},(t)}$ as the cosine similarity matrix of all tokens, and similar with the $S^{{\rm sig},(0)}$. We then quantify the agreement between these two geometries by computing the correlation between their off-diagonal entries:
\begin{equation}\label{eq:rho_0}
\rho^{(t)}_{(0)}
=
\operatorname{Corr}
\left(
\{S^{\mathrm{emb},(t)}_{a,b}:a,b\in\mathcal{X},a<b\},
\{S^{\mathrm{sig},(0)}_{a,b}:a,b\in\mathcal{X},a<b\}
\right).
\end{equation}
A larger value of $\rho$ indicates that the learned embedding geometry is more consistent with the signature geometry.

As shown in Figure~\ref{fig:early_stage} B-D, the $S^{{\rm emb},(t)}$ exhibit highly similar structural patterns to the corresponding $S^{{\rm sig},(0)}$ across all three arithmetic tasks. 
For the addition task, both matrices display a smooth diagonal band structure, reflecting the gradual change of label distributions across nearby number tokens. 
For the absolute-subtraction task, both matrices show a symmetric ``X''-shaped pattern. 

\begin{figure}[htbp]
    \centering
    \includegraphics[width=0.95\linewidth]{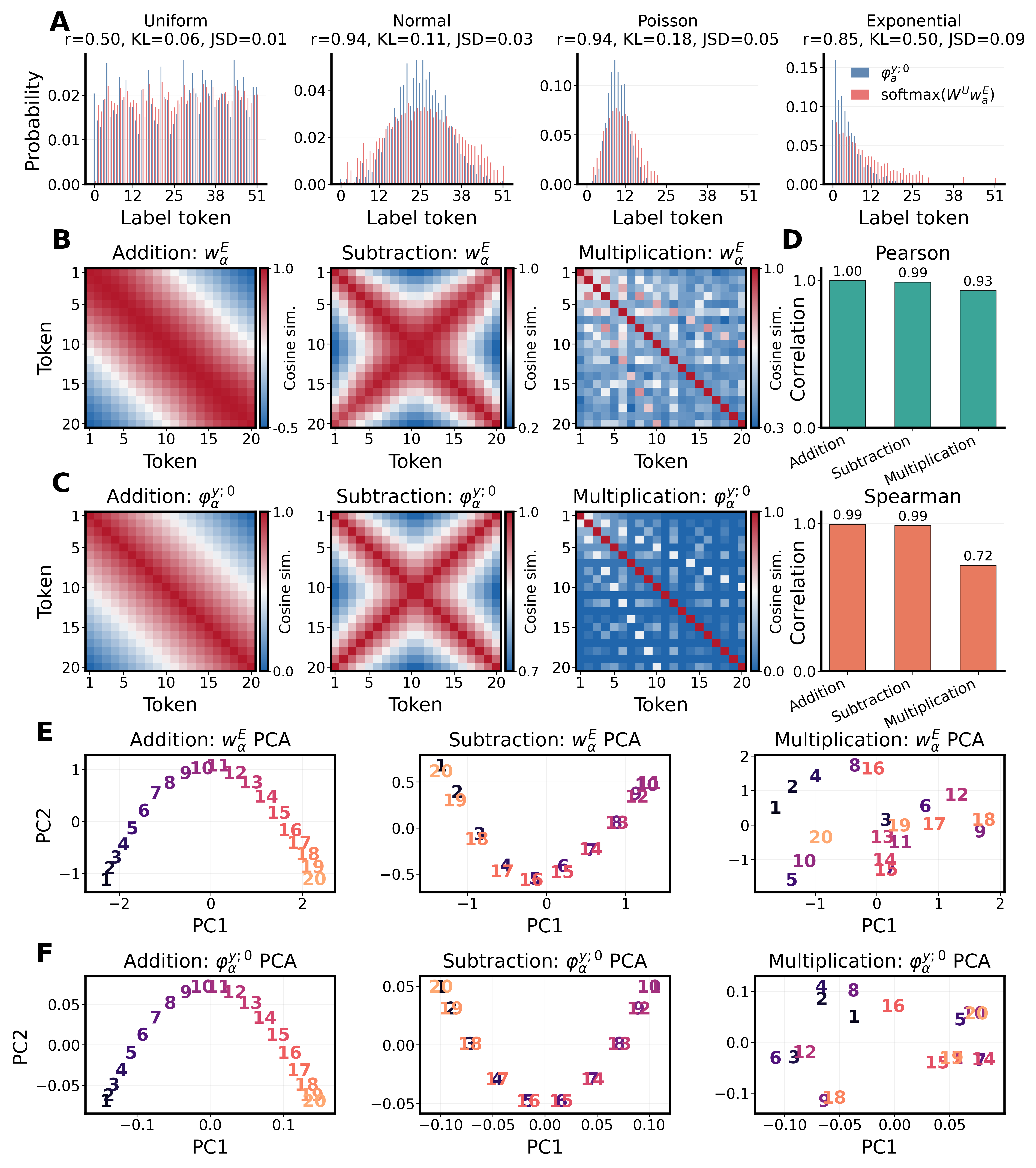}
    \caption{A: Comparison between the $\vvarphi^{y;0}_a$ and $\mathrm{softmax}(\vW^{U}\vw^{E}_{a})$ in Task~\ref{task:random_label} under four target label distributions: uniform, normal, Poisson, and exponential. Here, $r$ denotes the Pearson correlation coefficient, KL denotes the Kullback--Leibler divergence, and JSD denotes the Jensen--Shannon divergence between the two distributions. B: Heatmap of $S^{{\rm emb},(t)}$ for the addition, absolute-subtraction, and multiplication tasks, with $t=100$. C: Heatmap of $S^{{\rm sig}}$ for the three arithmetic tasks. D: Pearson and Spearman correlations between the off-diagonal entries of $S^{{\rm emb},(t)}$ and $S^{{\rm sig}}$ across the three arithmetic tasks, where $t=100$. E: PCA projections of the learned token embeddings for the three arithmetic tasks at epoch 100. F: PCA projections of the corresponding zero-order label signatures for the three arithmetic tasks.
}
    \label{fig:early_stage}
\end{figure}

We further visualize the structures of embedding space and signature space by the lens of PCA projection in Figure~\ref{fig:early_stage} E, F, respectively. The PCA projections provide a more intuitive visualization of the strong correspondence between the two spaces. In the addition and subtraction tasks, both the embedding space and the signature space exhibit the same ordered geometric structure. In the multiplication task, the numbers instead form distinct clusters according to their odd components. For example, $\{1,2,4,8,16\}$ form one cluster; $\{3,6,12\}$ form another; and $\{5,10,20\}$ form a third. These results establish the first part of the empirical phenomenon: during early training, numerical embeddings acquire spatial structures that closely match the geometry of the zero-order label signatures.

\paragraph{Emergence of higher-order signature effects.}
We next examine how the embedding geometry continues to evolve beyond the early stage of training. A natural question is whether the embedding geometry remains confined to the structure associated with the zeroth-order signature throughout training, or whether additional structures emerge as training proceeds. To investigate this question, we
consider two complementary settings: one in which the zeroth-order signature already contains sufficient information for solving the task, and another in which it cannot distinguish different input tokens or capture the structure required by the task. This comparison allows us to examine whether the emergence of additional embedding structures depends
on their necessity for task learning.

To illustrate this process, we compare two arithmetic tasks: $F_{\rm add}\left(x_1,x_2\right)=x_1 + x_2$ and $F_{\rm mod}\left(x_1,x_2\right)=x_1+x_2+1\mod{N}$ in Task~\ref{task:2token_math}. Figure~\ref{fig:dynamics_emb} A,B show how the training loss and accuracy evolve over the training for the two tasks, together with the learned embeddings at the early and late stages of training.
These two tasks differ significantly in their early stage. 
In $F_{\rm add}$, different number tokens induce distinct label distributions. 
Therefore, the embeddings form an ordered geometry.  In contrast, for $F_{\rm mod}$, all number tokens have identical label distributions. 
As a result, the early embeddings are almost aligned in the same direction. 
In this case, the zero-order signature provides almost no useful information for distinguishing different number tokens, and the model fails to learn the task during the early stage. Despite these different early-stage behaviors, both tasks exhibit a clear change in embedding geometry as training continues. At the late stage, the embeddings no longer simply reflect the zero-order signatures, but a more complex structure. In the addition task, the embeddings of different number tokens exhibit an odd-even separated structure in the late stage of training. In the modular-addition task, the embeddings of different number tokens form a modulo-19 ring structure: the successor of token $1$ is $(1+19)\bmod N=20$, and the next token is $(20+19)\bmod N=39$.

To quantify this process, we measure the correlation between $S^{{\rm sig},(0)}$ and $S^{{\rm emb},(t)}$ across $t$ (see \eqref{eq:S_emb},\eqref{eq:S_sig_0} and \eqref{eq:rho_0}). Furthermore, we define that
\begin{equation}\label{eq:S_sig_1}
    S^{\mathrm{sig},(1)}_{a,b}=\frac{\langle \vvarphi^{y;1}_a\vv,\vvarphi^{y;1}_b\vv\rangle}
{\|\vvarphi^{y;1}_a\vv\|_2\|\vvarphi^{y;1}_b\vv\|_2},
\end{equation}
where $\vv \in\sR^{|\fX|}$ is the probe vector which is obtained through optimizing the $\vv$ by maximizing the correlation between $S^{\mathrm{sig},(1)}$ and $S^{{\rm emb},(t)}$ for each $t$. Figure~\ref{fig:dynamics_emb} C exhibits the correlation and MSE between $S^{{\rm emb},(t)}$ and $S^{{\rm sig},(0)}$, $S^{{\rm sig},(1)}$. It's noted that in Modular addition task, $S^{{\rm sig},(0)}$ is the matrix whose all elements equal to 1 and the pearson correlation between $S^{{\rm sig},(0)}$ and $S^{{\rm emb}}$ is invalid. So we measure the correlation here by $1-{\rm MSE}(S^{{\rm sig},(0)},S^{{\rm emb}})$. As shown, $S^{{\rm sig},(0)}$ explains the embedding geometry almost perfectly at the beginning of training, achieving correlation close to $1$ and near-zero MSE. 
As training proceeds, however, the correlation drops substantially. 
In contrast, $S^{\mathrm{sig},(1)}$ becomes a much better predictor of the embedding geometry after the early stage. 
It maintains a high correlation and achieves consistently lower prediction error than $S^{{\rm sig},(0)}$ throughout later training.

\begin{figure}[htb]
    \centering
    \includegraphics[width=1\linewidth]{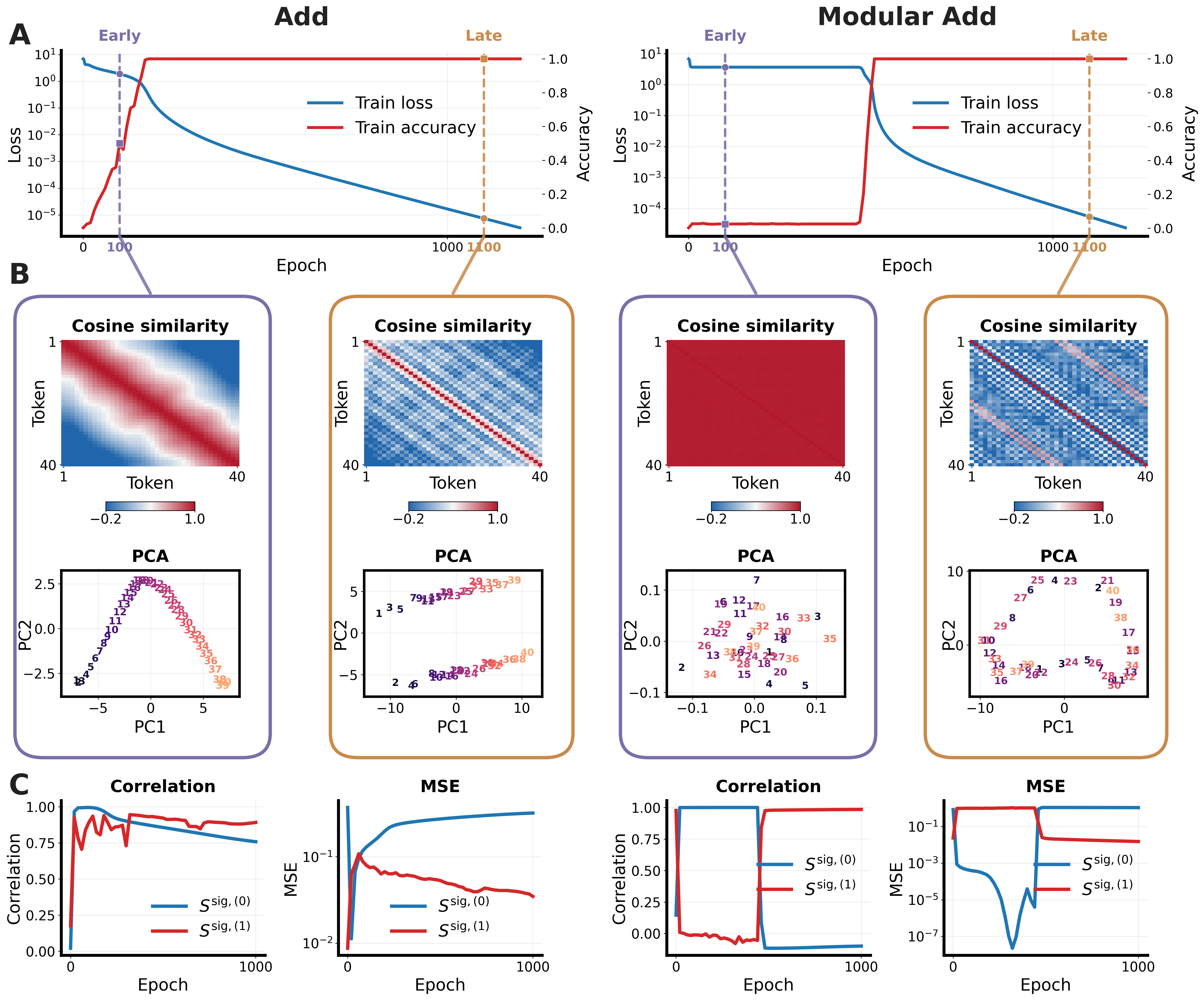}

    \caption{A: Training loss and training accuracy over epochs for the addition task and the modular-addition task. The dashed vertical lines mark the early and late epochs used for visualization. B: Cosine-similarity matrices and PCA projections of the learned token embeddings at the early and late epochs for the addition task and the modular-addition task. C: Temporal evolution of the correlation and MSE between the $S^{\mathrm{emb},(t)}$ and the $S^{\mathrm{sig},(0)}$, as well as the optimized $\widetilde{S}^{\mathrm{sig},(1)}$, for the addition task and the modular-addition task.}
    \label{fig:dynamics_emb}
\end{figure}

These results reveal a progressive change in the statistical structure
reflected by the embeddings. During the early stage of training, the
embedding geometry is closely associated with the zeroth-order signature.
As training proceeds, this association gradually weakens, while structures
related to the first-order signature become increasingly visible. The
late-stage behavior depends on the information contained in the
zeroth-order signature. When it is already informative for the task, part
of the corresponding structure can remain visible in the embedding space.
In contrast, when the zeroth-order signature cannot distinguish different
input tokens, the late-stage embedding geometry exhibits little relation
to it and is instead much more closely associated with the first-order
signature. Since the sequences considered here have length two, the
first-order signature is the only accessible higher-order signature. The
joint effects of multiple higher-order signatures are further examined in
Appendix~\ref{app:length3}.

\subsection{Gradient-Flow Explanation}

We now analyze the gradient flow of the embeddings to explain the evolution of embedding structures observed above, from their early association with the zeroth-order signature to the emergence of higher-order signature effects later in training. We first establish the following assumation
\begin{assumption}\label{assump:polynomial}
    We assume that the activation function $\sigma$ can be approximated by a coordinate-wise polynomial:
    \begin{equation}\label{eq:polynomial}
\sigma(\vz)=\sum_{k=1}^{K}C_k \vz^{\odot k},
\end{equation}
where $\max_k |C_k|<\infty$ and $\vz^{\odot k}$ denotes the coordinate-wise $k$-th power. 
\end{assumption}

We then compute the Jacobian of $\vf_{\rm ffn}(X)$ with respect to the embedding vector $\vw^E_{\alpha}$:
\begin{equation}
    \frac{\partial \vf_{\rm ffn}(X)}{\partial \vw^E_{\alpha}} = N_{\alpha}(X)\vW^{U}{\rm Diag}\left(\sigma^{\prime}\left(\sum_{i=1}^L\vw^E_{x_i}\right)\right).
\end{equation}
By the definition of $I$, for any integrable function $\vH(X,y)$ of the training sample,
\begin{equation*}
\mathbb E\left[N_\alpha(X)\vH(X,y)\right]
=Lr_\alpha\mathbb E\left[\vH(X,y)\mid\alpha\right].
\end{equation*}
Recall~\eqref{eq:grad_all}, then we have that
\begin{equation*}
\begin{aligned}
    \frac{d\vw^E_\alpha}{dt}&=\left.\frac{d\vw^E_\alpha}{dt}\right|^y-\left.\frac{d\vw^E_\alpha}{dt}\right|^p,
\end{aligned}
\end{equation*}
where
\begin{equation*}
\begin{aligned}
&\left.\frac{d\vw^E_\alpha}{dt}\right|^y:=Lr_\alpha\mathbb E\left[\sigma'\left(\sum_{i=1}^L\vw^E_{x_i}\right)\odot \vW^{U,T}\ve_y\mid \alpha\right],\\
&\left.\frac{d\vw^E_\alpha}{dt}\right|^p:=Lr_\alpha\mathbb E\left[\sigma'\left(\sum_{i=1}^L\vw^E_{x_i}\right)\odot \vW^{U,T}\vp(X)\mid \alpha\right].
\end{aligned}
\end{equation*}
For the label-driven term $\left.\frac{d\vw^E_\alpha}{dt}\right|^y$, we have the following result:
\begin{theorem}\label{thm:grad_ffn_y}
    Define that $\fW^{y,m,k}_{\alpha}\in\left(\sR^{d}\right)^{\underbrace{V\times\cdots\times V}_{m+1}}$ is a tensor with order $m+1$, whose elements is a $d$-dimensional vector with
\begin{equation}\label{eq:W}
\fW^{y,m,k}_{\alpha}\left(\nu,\beta_1,\cdots,\beta_m\right)=\binom{L-1}{m}\sum_{\substack{l\geq0,n_1,\cdots,n_m\geq1\\l+\sum_{r=1}^mn_r=k-1}}\frac{(k-1)!}{l!\prod_{r=1}^m n_r!}(\vw^E_\alpha)^{\odot l}\odot\vw^U_{\nu}\odot\bigodot_{r=1}^{m}(\vw^E_{\beta_r})^{\odot n_r}.
\end{equation}
With the Assumption~\ref{assump:polynomial}, we have that
\begin{equation}
    \left.\frac{d\vw^E_\alpha}{dt}\right|^y=Lr_\alpha\sum_{m=0}^{\min\left\{L-1,K-1\right\}} \fA_{\alpha,m}\cdot\vvarphi_{\alpha}^{y;m},
\end{equation}
where $\fA_{\alpha,m}=\sum_{k=m+1}^{K}kC_k\mathcal W^{y,m,k}_{\alpha}$, $C_k$ is the approximation constant in \eqref{eq:polynomial}. Here $\fA_{\alpha,m}\cdot\vvarphi_{\alpha}^{y;m}$ denotes the contraction over the label index and the $m$ context-token indices, resulting in a vector in $\mathbb{R}^d$.
\end{theorem}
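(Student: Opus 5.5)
We start from the expression for the label-driven term derived just before the statement,
\[
\left.\frac{d\vw^E_\alpha}{dt}\right|^y=Lr_\alpha\,\mathbb E\left[\sigma'\left(\textstyle\sum_{i=1}^L\vw^E_{x_i}\right)\odot \vW^{U,T}\ve_y\,\middle|\, \alpha\right].
\]
We observe that $\vW^{U,T}\ve_y=\vw^U_y$. Under Assumption~\ref{assump:polynomial}, $\sigma'(\vz)=\sum_{k=1}^K kC_k\vz^{\odot(k-1)}$ coordinatewise. Conditioned on $x_I=\alpha$, we split the sum as $\vw^E_\alpha+\sum_{j=1}^{L-1}\vw^E_{x_j}$, where $x_j$ now ranges over the context $X_{-\alpha}$. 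We expand $(\vw^E_\alpha+\sum_j\vw^E_{x_j})^{\odot(k-1)}$ with the coordinatewise multinomial theorem, which is legitimate because Hadamard products commute. This gives a sum over exponents $l$ (for $\alpha$) and $(n_1,\dots,n_{L-1})\ge0$ with $l+\sum_j n_j=k-1$, each term weighted by $\frac{(k-1)!}{l!\prod_j n_j!}$.

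Next we group the terms by their support. For each term, let $S=\{j:n_j\ge1\}$ and $m=|S|$. Since the exponents sum to $k-1$, we have $m\le\min\{L-1,k-1\}$. For a fixed unordered set $S=\{j_1<\dots<j_m\}$, the inner sum over positive $n_r$ and $l\ge0$ reproduces exactly the summand of \eqref{eq:W}, evaluated at $\nu=y$ and $\beta_r=x_{j_r}$, except for the prefactor $\binom{L-1}{m}$. Taking the conditional expectation of a function of $(y,x_{j_1},\dots,x_{j_m})$ is the same as contracting with the joint pmf $\mathbb P(y=\nu,x_{j_1}=\beta_1,\dots\mid\alpha)$.

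The main step is to convert the sum over the $\binom{L-1}{m}$ subsets $S$ into the uniformly averaged signature $\varphi^{y;m}_\alpha$. We do this in three moves:
\begin{itemize}
\item First, we observe that the summand defining $\fW^{y,m,k}_\alpha(\nu,\beta_1,\dots,\beta_m)$ is symmetric under permuting $(\beta_r,n_r)$ jointly. It is therefore symmetric in $(\beta_1,\dots,\beta_m)$, so each unordered subset can be replaced by the average over its $m!$ orderings.
\item Second, the sum over unordered subsets equals $\frac{1}{m!}\sum_{\mathcal I_m}$. Hence
\[
\sum_{S}\mathbb E[\cdot]=\binom{L-1}{m}\frac{1}{(L-1)_m}\sum_{(j_1,\dots,j_m)\in\mathcal I_m}\mathbb E[\cdot]=\binom{L-1}{m}\,(\text{contraction with }\vvarphi^{y;m}_\alpha).
\]
\item Third, this accounts for the prefactor $\binom{L-1}{m}$ in \eqref{eq:W}.
\end{itemize}
This step needs no exchangeability assumption, because the signature is defined as a uniform average over position tuples. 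The case $m=0$ is the pure $l=k-1$ term contracted with $\mathbb P(y=\nu\mid\alpha)$.

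Finally, we swap the order of summation, moving the sum over $k$ from $1$ to $K$ inside and the sum over $m$ outside. For fixed $m$, only $k\ge m+1$ contribute, which yields $\fA_{\alpha,m}=\sum_{k=m+1}^K kC_k\fW^{y,m,k}_\alpha$, with $m$ ranging from $0$ to $\min\{L-1,K-1\}$. Multiplying by $Lr_\alpha$ gives the claim.

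I expect the main obstacle to be the combinatorial bookkeeping in the middle step. Two points need care. The multinomial coefficient must split correctly between $\alpha$ and the selected context positions. Separately, the ordered-versus-unordered tuple counting must produce exactly the factor $\binom{L-1}{m}$ rather than $(L-1)_m$. A further subtlety is context tokens equal to $\alpha$: these are treated as ordinary context positions, so no double counting arises.
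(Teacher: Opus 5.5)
Your proposal is correct and follows the paper's proof essentially step by step: you expand $\sigma'$ as a polynomial, apply the coordinatewise multinomial expansion grouped by the support of the context exponents, use the symmetry of the summed kernel to pass from unordered subsets to the uniform average over $\mathcal I_m$ (which produces the $\binom{L-1}{m}$ factor), and finally interchange the sums over $k$ and $m$. Your explicit ordered-versus-unordered counting, $\sum_S=\frac{1}{m!}\sum_{\mathcal I_m}=\binom{L-1}{m}\cdot\frac{1}{(L-1)_m}\sum_{\mathcal I_m}$, is just a more careful rendering of the paper's one-line symmetrization remark.
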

Theorem~\ref{thm:grad_ffn_y} provides the central mechanism behind the empirical observations: label signatures from order $0$ up to the largest order appear explicitly in the embedding gradient. The result identifies the statistical components available to drive the embedding updates; the geometric alignment documented above is the corresponding empirical manifestation.

For the prediction-induced term $\left.\frac{d\vw^E_\alpha}{dt}\right|^p$, to have a similar result, we use the Taylor expansion of the softmax around the origin and keep its leading constant term: $\vp(X)=\frac{1}{V}\vone+O(\|\vf(X)\|)$. We have the following result:
\begin{theorem}\label{thm:grad_ffn_p}
    Define that $\mathcal U^{m,k}_{\alpha,0}\in\left(\sR^d\right)^{\underbrace{V\times\cdots\times V}_m}$ is a tensor with order $m$, whose element is a $d$-dimensional vector with
\begin{equation}\label{eq:U}
\mathcal U^{m,k}_{\alpha,0}\left(\beta_1,\ldots,\beta_m\right):=\binom{L-1}{m}\sum_{\substack{l\geq 0,\ n_1,\ldots,n_m\geq 1\\ l+\sum_{r=1}^mn_r=k-1}}\frac{(k-1)!}{l!\prod_{r=1}^mn_r!}\vb_0\odot(\vw^E_\alpha)^{\odot l}\odot\bigodot_{r=1}^m(\vw^E_{\beta_r})^{\odot n_r},
\end{equation} 
where $\vb_0:=\frac{1}{V}\vW^{U,T}\vone$. Under the Assumption~\ref{assump:polynomial} and $\vp(X)=\frac{1}{V}\vone+O(\|\vf(X)\|)$, then we have
\begin{equation}
    \left.\frac{d\vw^E_\alpha}{dt}\right|^p=Lr_\alpha\sum_{m=1}^{\min\left\{L-1,K-1\right\}} \fB_{\alpha,m}\cdot\vvarphi_{\alpha}^{m}+\vR_\alpha,
\end{equation}
where $\fB_{\alpha,m}=\sum_{k=m+1}^{K}kC_k\mathcal U^{m,k}_{\alpha,0}$, $\vR_\alpha$ denotes the remainder term from the softmax expansion.
\end{theorem}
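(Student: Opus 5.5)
Start from the expression for $\left.\frac{d\vw^E_\alpha}{dt}\right|^p$ derived just before Theorem~\ref{thm:grad_ffn_y}, and substitute $\vp(X)=\frac1V\vone+O(\|\vf(X)\|)$. This splits the prediction-induced term into a leading part $Lr_\alpha\,\mathbb E[\sigma'(\sum_i\vw^E_{x_i})\odot\vb_0\mid\alpha]$ with $\vb_0=\frac1V\vW^{U,T}\vone$, plus a remainder. All terms of the remainder involve $\vW^{U,T}(\vp(X)-\frac1V\vone)$, and we collect them into $\vR_\alpha$. The leading part has exactly the structure of the label-driven term in Theorem~\ref{thm:grad_ffn_y}, except that $\vw^U_y$ is replaced by the deterministic vector $\vb_0$. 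So I will follow the same argument as in Theorem~\ref{thm:grad_ffn_y}, with the label mode removed.

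\textbf{Expanding the activation.} Under Assumption~\ref{assump:polynomial}, $\sigma'(\vz)=\sum_{k=1}^K kC_k\vz^{\odot(k-1)}$. Conditioned on $x_I=\alpha$, write $\sum_i\vw^E_{x_i}=\vw^E_\alpha+\sum_{j=1}^{L-1}\vw^E_{x_j}$, where the second sum runs over the context $X_{-\alpha}$. Apply the coordinatewise multinomial theorem to $(\vw^E_\alpha+\sum_j\vw^E_{x_j})^{\odot(k-1)}$. Group the terms by the set $S\subseteq[L-1]$ of context positions that receive a positive exponent, and let $m=|S|$. A term then carries a factor $\frac{(k-1)!}{l!\prod_r n_r!}(\vw^E_\alpha)^{\odot l}\odot\bigodot_{r}(\vw^E_{x_{j_r}})^{\odot n_r}$ with $n_r\ge1$ and $l+\sum n_r=k-1$. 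This forces $m\le k-1$, and trivially $m\le L-1$, which explains the range $m\le\min\{L-1,K-1\}$ and $k\ge m+1$.

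\textbf{Converting to signatures.} Taking the conditional expectation of the term for a fixed unordered set $S$ gives $\sum_{\beta}\mathbb P(x_{j_1}=\beta_1,\ldots,x_{j_m}=\beta_m\mid\alpha)\,(\cdots)$. The exponent-weighted sum over $(n_r)$ is symmetric under relabeling, so summing over the $\binom{L-1}{m}$ subsets $S$ equals $\binom{L-1}{m}$ times the average over a uniformly random ordered tuple in $\mathcal I_m$ (averaging unordered sets is the same as averaging ordered tuples). That average is exactly $\varphi^m_\alpha(\beta_1,\ldots,\beta_m)$. This reproduces $\mathcal U^{m,k}_{\alpha,0}$ with its $\binom{L-1}{m}$ prefactor, and summing over $k$ with weights $kC_k$ gives $\fB_{\alpha,m}\cdot\vvarphi^m_\alpha$.

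\textbf{The $m=0$ term and the main obstacle.} The $m=0$ contribution is $\sum_k kC_k\,\vb_0\odot(\vw^E_\alpha)^{\odot(k-1)}$. It involves no label or context statistics, since its signature is the trivial probability $1$. This is why the stated sum begins at $m=1$; I will either absorb this term into $\vR_\alpha$ or note it separately as a signature-free drift. I expect this bookkeeping to be the main obstacle: making the multinomial regrouping match the stated coefficient exactly. In particular, the ordered-versus-unordered tuple count must produce $\binom{L-1}{m}$ rather than $(L-1)_m$, and the treatment of $m=0$ must be consistent with the statement.
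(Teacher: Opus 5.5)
Your proposal follows the paper's proof essentially step for step: you replace $\vW^{U,T}\vp(X)$ by $\vb_0$ plus a softmax remainder, expand $\sigma'$ multinomially around $\vw^E_\alpha$, group terms by the support of the context exponents, and use the symmetry of the kernel to turn the sum over the $\binom{L-1}{m}$ subsets into $\binom{L-1}{m}$ times a contraction with $\vvarphi^m_\alpha$. Your flag on $m=0$ is well placed, since the paper's proof also obtains a sum starting at $m=0$ and then silently writes the result from $m=1$; keep the signature-free term $Lr_\alpha\sum_{k=1}^K kC_k\,\vb_0\odot(\vw^E_\alpha)^{\odot(k-1)}$ as a separate term rather than folding it into $\vR_\alpha$, because it is neither a softmax remainder nor small (its $k=1$ part $Lr_\alpha C_1\vb_0$ is of the same order as the leading label-driven term).
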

\begin{remark}
In Theorem~\ref{thm:grad_ffn_p}, we only keep the first-order expansion of the softmax probability $p(X)$ around the origin. 
This leads to the $\varphi_\alpha^m$ up to order $\min\{L-1,K-1\}$. 
If $p(X)$ is expanded to higher orders, then higher-order $\varphi_\alpha^m$ will be introduced. For example, if the second-order term in the softmax expansion is retained, the highest order of $\vvarphi_{\alpha}^m$ is $m=\min\left\{L-1,2K-1\right\}$. 
Consequently, for any fixed activation order $K$, higher-order terms in the softmax expansion can in principle introduce $\vvarphi_\alpha^m$ of any order up to $L-1$. 
In Appendix~\ref{app:second_order_softmax}, we provide the explicit form obtained by retaining the second-order term in the softmax expansion.
\end{remark}

\begin{remark}
\label{rem:activation_approx}
Although the above analysis is formulated using a polynomial activation,
this assumption mainly serves to make the contributions associated with
different signature orders explicit. Common activation functions used in
neural networks, including ReLU and other standard nonlinearities, can be
approximated on a bounded neighborhood of the origin by polynomials of
sufficiently high degree. The polynomial degree $K$ can therefore be
chosen much larger than the sequence length $L$, so that all signature
orders accessible to a sequence of length $L$ are included in the
expansion.

Our analysis is concerned with how signatures of different orders enter
and affect the embedding dynamics, rather than with the quantitative
approximation error of the activation function itself. We therefore omit
the approximation residual and use the polynomial representation as an
analytical device throughout the following discussion. Unless otherwise
specified, we henceforth consider the regime $K\gg L$.
\end{remark}

However, in many exchangeable sequence-prediction tasks, the co-occurrence signatures $\vvarphi_\alpha^m$ carry little discriminative information about the token $\alpha$. Intuitively, after conditioning on a selected occurrence of $\alpha$, the remaining context tokens are often sampled from a distribution that does not depend on the identity of $\alpha$. More formally, suppose that the tokens in $X_{-\alpha}$ are sampled independently from position-dependent distributions $\pi_i$, and that each $\pi_i$ is independent of the conditioned token $\alpha$. Then, for any index subset ${j_1,\ldots,j_m}\subseteq [L-1]$, we have
\begin{equation*}
\mathbb P\left(x_{j_1}=\beta_1,\cdots,x_{j_m}=\beta_m\mid \alpha\right)
=
\mathbb P\left(x_{j_1}=\beta_1,\cdots,x_{j_m}=\beta_m\right),
\qquad \beta_1,\ldots,\beta_m\in\fV .
\end{equation*}
Therefore, the $\vvarphi_\alpha^m$ are identical across different $\alpha$, that is, $\vvarphi_{\alpha_1}^m=\vvarphi_{\alpha_2}^m$ for any $\alpha_1,\alpha_2\in\fV$. Such signatures do not separate different tokens. For this reason, we do not further analyze their effect in the main text, and instead focus on the label signatures $\vvarphi_\alpha^{y;m}$. 

\subsection{Order Hierarchy under Small Initialization}
Theorem~\ref{thm:grad_ffn_y} identifies the signature components that can enter the embedding gradient, but it does not by itself determine their relative scales. We next show that small initialization creates a pronounced asymmetry at the beginning of training: higher-order terms contain more small embedding factors and are therefore suppressed relative to the zero-order term.

\begin{prop}\label{prop:initial_zero}
Assume that $L$ and $K$ are fixed and that the model is initialized with scale $d^{-\gamma}$ for some $\gamma>1/2$. Then, for every token $\alpha$ with $r_\alpha=\mathbb{P}\left(x_I=\alpha\right)>0$ and every fixed $m\geq1$,
\begin{equation*}
\frac{
\left(\mathbb E_{\rm init}\left\|\fA_{\alpha,m}\cdot\boldsymbol{\varphi}^{y;m}_\alpha\right\|_2^2\right)^{1/2}
}{
\left(\mathbb E_{\rm init}\left\|\fA_{\alpha,0}\cdot\boldsymbol{\varphi}^{y;0}_\alpha\right\|_2^2\right)^{1/2}
}
=O\!\left(d^{-m\gamma}\right)=o(1).
\end{equation*}

\end{prop}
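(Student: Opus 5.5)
We bound numerator and denominator separately, using the explicit form of $\fA_{\alpha,m}$ from Theorem~\ref{thm:grad_ffn_y}. The idea is to count embedding factors. Each entry $\fW^{y,m,k}_\alpha(\nu,\beta_1,\ldots,\beta_m)$ is a finite sum of coordinate-wise monomials $(\vw^E_\alpha)^{\odot l}\odot\vw^U_\nu\odot\bigodot_r(\vw^E_{\beta_r})^{\odot n_r}$ with $l+\sum_r n_r=k-1$ and every $n_r\ge1$. Hence every monomial in $\fA_{\alpha,m}$ contains at least $m$ embedding factors besides the single factor $\vw^U_\nu$. In contrast, $\fA_{\alpha,0}$ contains the $k=1$ term $C_1\vw^U_\nu$, which has no embedding factor at all.

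\textbf{Step 1 (upper bound on the numerator).} At initialization all entries are independent Gaussians with standard deviation $d^{-\gamma}$; for the unembedding, $d_{\rm in}=d$ as well. Consider a single coordinate of a monomial with $k-1$ embedding factors. Its second moment is a product of Gaussian moments and is bounded by $c_k d^{-2\gamma k}$. This holds even when the tokens $\alpha,\beta_r$ coincide, because moments of powers of one Gaussian are still $O(\sigma^{2\cdot\text{power}})$. Next, expand $\|\fA_{\alpha,m}\cdot\vvarphi^{y;m}_\alpha\|_2^2$ as a double sum over index tuples weighted by $\varphi^{y;m}_\alpha\ge0$, which sum to one, and over $k,k'$. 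Apply Cauchy--Schwarz, $\mathbb E|XY|\le(\mathbb E X^2\,\mathbb E Y^2)^{1/2}$, coordinatewise and sum the $d$ coordinates. This gives
\begin{equation*}
\mathbb E_{\rm init}\|\fA_{\alpha,m}\cdot\vvarphi^{y;m}_\alpha\|_2^2\le C\,d\sum_{k,k'\ge m+1}d^{-\gamma(k+k')}\le C' d^{1-2\gamma(m+1)}.
\end{equation*}
The constants depend only on $L,K,\max|C_k|$ and the multinomial coefficients, all of which are fixed.

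\textbf{Step 2 (lower bound on the denominator).} Write $\fA_{\alpha,0}\cdot\vvarphi^{y;0}_\alpha=C_1\vW^{U,T}\vvarphi^{y;0}_\alpha+\vE$. Here $\vE$ collects the $k\ge2$ terms. By the same estimate as in Step 1 with $m=0$ and $k\ge2$, $\mathbb E\|\vE\|^2=O(d^{1-4\gamma})$. The leading term is a Gaussian vector, and its squared norm has expectation $C_1^2 d\cdot d^{-2\gamma}\|\vvarphi^{y;0}_\alpha\|_2^2$. This is positive: $r_\alpha>0$ makes the conditional distribution well defined, so $\vvarphi^{y;0}_\alpha$ is a probability vector and $\|\vvarphi^{y;0}_\alpha\|_2^2\ge 1/V$. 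By the triangle inequality in $L^2$, the denominator is $\ge |C_1|\sqrt{d}\,d^{-\gamma}V^{-1/2}-O(d^{1/2-2\gamma})$, which is $\Omega(d^{1/2-\gamma})$ for large $d$.

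\textbf{Step 3 (conclusion).} Dividing gives a ratio of $O(d^{1/2-\gamma(m+1)})/\Omega(d^{1/2-\gamma})=O(d^{-m\gamma})$. This tends to $0$ because $\gamma>1/2>0$ and $m\ge1$.

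The main obstacle is the denominator in Step 2. It needs $C_1\neq0$, i.e.\ the activation must have a nonvanishing linear coefficient, which I would state as an implicit assumption since it holds for the activations of interest. It also needs $V$ to be treated as fixed, since otherwise the $\ell_2$ norm of the signature could shrink with $V$. If $C_1=0$, the argument would instead compare against the lowest nonzero $C_k$. The two sides then have matching extra powers, and the same $d^{-m\gamma}$ gap survives, although the lower bound becomes more delicate because the leading term is no longer Gaussian.
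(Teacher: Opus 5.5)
Your proof is correct and is essentially the paper's argument: you count one unembedding factor and $k-1\geq m$ embedding factors per degree-$k$ monomial to get second moments $O(d^{-2k\gamma})$, and you use the Gaussian degree-one term $C_1\vW^{U,T}\boldsymbol{\varphi}^{y;0}_\alpha$ for the denominator (the paper tacitly assumes $C_1\neq0$ as well). The differences are minor: your $\ell_1$-normalization and Cauchy--Schwarz step on the cross terms replaces the paper's Frobenius-norm comparison via Lemma~\ref{lem:signature_consistency}, and your Minkowski lower bound makes rigorous what the paper only asserts.

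However, your closing claim about $C_1=0$ is false. If the lowest nonzero coefficient is $C_{k_0}$ with $k_0\geq2$, then every order $m<k_0$ also starts at degree $k_0$, so the ratio is of order one. For example, with $\sigma(z)=C_2z^2$ and $m=1$ it equals exactly $(L-1)\|\boldsymbol{\varphi}^{y;1}_\alpha\|_F/\|\boldsymbol{\varphi}^{y;0}_\alpha\|_2$, independent of $d$. So $C_1\neq0$ is a genuinely necessary hypothesis, not a convenience.
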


Proposition~\ref{prop:initial_zero} indicates that in root-mean-square scale, the zero-order label signature gives the leading contribution to the label-driven embedding gradient at initialization. This hierarchy provides a mechanistic explanation for the robust early-stage zero-order alignment observed in Figures~\ref{fig:early_stage} and~\ref{fig:dynamics_emb}.

Proposition~\ref{prop:initial_zero} explains why the zeroth-order signature is most visible at the beginning of training. Under small initialization, higher-order signature terms are strongly suppressed because they involve additional powers of the small parameter scale. As training proceeds and the parameter scales grow, this suppression becomes weaker. Consequently, higher-order signatures can increasingly contribute to the embedding dynamics together with lower-order signatures. Proposition~\ref{prop:late_stage} formalizes this late-stage behavior.

\begin{prop}
\label{prop:late_stage}
Assume that $K$ is sufficiently large relative to $L$, and that $C_K\neq 0$. Let $s_E(t)$ and $s_U(t)$ denote the typical scales of the embedding and unembedding vectors at time $t$. Suppose that $0<s_E(t),s_U(t)<\infty$, that $s_E(t)$ becomes sufficiently large during training, and that the normalized contractions and label signatures are uniformly non-degenerate. Then, in this late-stage regime, all accessible signature-order contributions have the same leading scale:
\begin{equation*}
\left\|\fA_{\alpha,m}(t)\cdot\boldsymbol{\varphi}^{y;m}_\alpha\right\|_2
\asymp
s_U(t)s_E(t)^{K-1},
\qquad 0\leq m\leq L-1,
\end{equation*}
where the comparison constants are independent of $t$. Thus, no signature order is suppressed by an additional power of the parameter scale. Moreover, among these same-scale contributions, the highest-order term is the largest:
\begin{equation*}
\left\|\fA_{\alpha,L-1}(t)\cdot\boldsymbol{\varphi}^{y;L-1}_\alpha\right\|_2
\geq
\left\|\fA_{\alpha,m}(t)\cdot\boldsymbol{\varphi}^{y;m}_\alpha\right\|_2,
\qquad 0\leq m<L-1.
\end{equation*}
\end{prop}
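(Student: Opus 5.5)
Under the scaling hypotheses, the plan is to reduce $\fA_{\alpha,m}(t)\cdot\vvarphi^{y;m}_\alpha$ to a polynomial in the embedding scale $s_E$ and show that, for large $s_E$, the degree-$K$ term dominates uniformly in $m$. Write each embedding as $\vw^E_\beta(t)=s_E(t)\,\hat\vw^E_\beta(t)$ and each unembedding as $\vw^U_\nu(t)=s_U(t)\,\hat\vw^U_\nu(t)$, with normalized vectors of order one. By \eqref{eq:W}, each summand of $\fW^{y,m,k}_\alpha$ contains one unembedding factor and exactly $l+\sum_r n_r=k-1$ embedding factors. Hence
\begin{equation*}
kC_k\,\fW^{y,m,k}_\alpha\cdot\vvarphi^{y;m}_\alpha=s_U\,s_E^{\,k-1}\,\vc_{m,k}(t),
\end{equation*}
where $\vc_{m,k}(t)$ is the normalized contraction of the hatted vectors against the label signature, multiplied by $kC_k\binom{L-1}{m}$ and the multinomial weights. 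The uniform non-degeneracy hypothesis is what lets me take $c_1\le\|\vc_{m,K}(t)\|_2\le c_2$ for all $t$ and all $0\le m\le L-1$, with constants independent of $t$. Boundedness of the normalized quantities, together with $\max_k|C_k|<\infty$, gives $\|\vc_{m,k}(t)\|_2\le c_3$ for every $k<K$.

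For the first claim I split
\begin{equation*}
\fA_{\alpha,m}\cdot\vvarphi^{y;m}_\alpha=s_Us_E^{K-1}\Big(\vc_{m,K}+\sum_{k=m+1}^{K-1}s_E^{\,k-K}\vc_{m,k}\Big).
\end{equation*}
The remainder inside the parentheses has norm at most $Kc_3/s_E$. Once $s_E(t)$ exceeds a threshold, say $2Kc_3/c_1$, the triangle inequality sandwiches the norm between $\tfrac{c_1}{2}s_Us_E^{K-1}$ and $2c_2\,s_Us_E^{K-1}$. This gives the claimed $\asymp$ with $t$-independent constants. I need $K\ge L$, i.e. $K$ large relative to $L$, so that the $k=K$ term exists for every $m\le L-1$; this is where the hypotheses $C_K\neq0$ and $K\gg L$ are used.

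For the ordering claim I compare the leading coefficients $\vc_{m,K}$ across $m$, using two features of \eqref{eq:W}. The combinatorial weight $\binom{L-1}{m}\sum\frac{(K-1)!}{l!\prod n_r!}$ counts surjection-type assignments of the $K-1$ embedding factors onto $\alpha$ and $m$ distinct context positions, with every context slot used at least once. Summing over $m$ and positions recovers the full multinomial expansion of $(\vw^E_\alpha+\sum_{j}\vw^E_{x_j})^{\odot(K-1)}$. For fixed $m$, the number of such assignments is $\binom{L-1}{m}\sum_{l}\binom{K-1}{l}\,m!\,S(K-1-l,m)$, where $S$ denotes Stirling numbers of the second kind. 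When $K\gg L$ this count is dominated by the largest possible $m$: it behaves like $(m+1)^{K-1}$ for $m=L-1$ versus $\binom{L-1}{m}(m+1)^{K-1}$ otherwise, so the ratio grows like $(L/(m+1))^{K-1}$. I then use marginal consistency of the signature hierarchy to express the lower-order contractions through the order-$(L-1)$ signature, so that all contractions are taken against comparable normalized objects. Combined with non-degeneracy, the exponential gap in the combinatorial weights should yield $\|\vc_{L-1,K}\|\ge\|\vc_{m,K}\|$ after absorbing the $O(1/s_E)$ corrections.

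The hard step is this last comparison. Non-degeneracy only provides two-sided bounds with constants, and a weight ratio of order $(L/(m+1))^{K-1}$ beats the ratio of those constants only if $K$ is large enough relative to them. So I will state explicitly that "$K$ sufficiently large" is chosen after the non-degeneracy constants are fixed. The first thing I would try is to make this quantitative: bound $\|\vc_{m,K}\|\le c_2' W_m$ and $\|\vc_{L-1,K}\|\ge c_1' W_{L-1}$, where $W_m$ is the combinatorial weight, and then choose $K$ with $(L/(L-1))^{K-1}>c_2'/c_1'$.
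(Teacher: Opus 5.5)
Your proposal is correct and follows essentially the same route as the paper: rescale each degree-$k$ term by $s_U s_E^{k-1}$, let the degree-$K$ term dominate every order $m$ once $s_E$ exceeds a common threshold, and then compare the degree-$K$ combinatorial prefactors, which grow like $\binom{L-1}{m}(m+1)^{K-1}$ via Stirling numbers, taking $K$ large relative to the non-degeneracy constants. Your explicit triangle-inequality threshold and quantifier order (fix the constants, then $K$, then the $s_E$ threshold) make precise what the paper leaves implicit, and the marginal-consistency rewriting is harmless but not needed.
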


The mechanism can be seen directly from the definition $\fA_{\alpha,m}=\sum_{k=m+1}^{K}kC_k\fW_{\alpha}^{y;m,k}$. For a fixed signature order $m$, the summation starts from degree $m+1$ and ends at the common maximum degree $K$. The degree-$k$ term contains one unembedding factor and $k-1$ embedding factors, and therefore has a typical scale proportional to $s_U(t)s_E(t)^{k-1}$. This simple structure explains the different behaviors at the beginning and later stages of training.

At small initialization, $s_E\ll 1$, so the lowest-degree term in each $\fA_{\alpha,m}$ is the largest. Since the lowest possible degree is $k=m+1$, the order-$m$ contribution starts at the scale $s_U s_E^m$. Thus, increasing the signature order by one introduces one additional factor of the small embedding scale. This produces the hierarchy $m=0>m=1>m=2>\cdots$ in magnitude at initialization, which is precisely the suppression described in Proposition~\ref{prop:initial_zero}.

As training proceeds and $s_E(t)$ becomes large, the opposite end of the sum becomes dominant. For every accessible signature order $m$, the largest degree is now the same, namely $k=K$. Consequently, the dominant term in every $\fA_{\alpha,m}$ contains the same number $K-1$ of embedding factors, and all signature orders therefore share the same leading parameter scale $s_U(t)s_E(t)^{K-1}$. The scale separation between different signature orders thus disappears: higher-order signatures are no longer intrinsically suppressed and can act together with lower-order signatures on the embedding dynamics.

After this common parameter scale is factored out, the remaining difference between signature orders comes from their order-dependent combinatorial prefactors (see~\eqref{eq:W}). For the degree-$K$ contribution, this prefactor contains $B_m(K)=(m+1)!\left\{K-1\atop m+1\right\}$, which grows as $B_m(K)\sim(m+1)^{K-1}$ for fixed $m$ and large $K$. Hence, when $K\gg L$, the highest accessible order $m=L-1$ has the largest prefactor. This explains why the late-stage dynamics are jointly influenced by signatures of different orders, while the highest-order contribution can be the strongest among them.

\subsection{Relations across Signature Orders from a Fourier Frequency Perspective}
\label{sec:frequency}

Proposition~\ref{prop:late_stage} shows that, once the initialization-induced suppression is lifted, all accessible signature-order contributions have the same leading parameter scale and can jointly affect the embedding dynamics. Although the highest-order contribution is the largest among them, this result does not specify how the statistical structures associated with different orders are related. In particular, it leaves open whether different signature orders introduce mutually independent structures or whether their effects follow an intrinsic organization. We next examine this question from a frequency perspective.

Signatures of different orders are not independent statistical objects. By Lemma~\ref{lem:signature_consistency}, an order-$(m-1)$ signature is obtained by marginalizing any context mode of an order-$m$ signature. For a fixed token $\alpha$, the first two label signatures satisfy
\begin{equation*}
\varphi_{\alpha}^{y;0}(y)=\mathbb{P}\left(Y=y\mid \alpha\right),\qquad
\varphi_{\alpha}^{y;1}(y,x)=\mathbb{P}\left(Y=y,x_{J_1^{(1)}}=x\mid \alpha\right),
\end{equation*}
and hence
\begin{equation*}
\varphi_{\alpha}^{y;0}(y)=\sum_{x\in\fV}\varphi_{\alpha}^{y;1}(y,x).
\end{equation*}
Denote the real Fourier basis by $\vc_k^{\cos}=\left(\cos\frac{2\pi k j}{p}\right)_{j=0}^{p-1}$, where $k=0,1,\ldots,p/2$ is the frequency and $p=|\fV|$ in our settings. Since $\vc_0^{\cos}=\vone$, the marginalization above can be written as
\begin{equation*}
\vvarphi_{\alpha}^{y;0}=\vvarphi_{\alpha}^{y;1}\vc_0^{\cos}.
\end{equation*}
More generally, contracting any context mode of an order-$m$ signature with $\vc_0^{\cos}$ gives the corresponding order-$(m-1)$ signature. In particular, contracting the last context mode yields
\begin{equation*}
\vvarphi_{\alpha}^{y;m-1}=\vvarphi_{\alpha}^{y;m}\vc_0^{\cos},
\qquad m=1,2,\ldots,L-1.
\end{equation*}
Thus, lower-order signatures are already contained in the zero-frequency projections of higher-order signatures, while the nonzero-frequency components encode the additional structures available at higher orders. The effects of different signature orders therefore need not be interpreted as the appearance of unrelated structures.

This relation suggests a possible temporal organization of the empirical dynamics: the embedding may first reflect the zero-frequency structure shared with lower-order signatures and later incorporate additional task-dependent nonzero-frequency components. This temporal statement is not implied by Proposition~\ref{prop:late_stage}; we test it empirically by analyzing the frequency components of the first-order signature in $F_{\rm add}$ and $F_{\rm mod}$.
For each number token $\alpha$ and each frequency $k$, we define
\begin{equation*}
\vh^{(1)}_{\alpha,k}=\vvarphi^{y;1}_\alpha \vc_k^{\rm cos},\qquad k=0,1,\ldots,p/2.
\end{equation*}
We then compute the pairwise cosine-similarity matrix induced by this frequency component:
\begin{equation*}
S^{\mathrm{sig},(1)}_{k;a,b}=\frac{\langle \vh^{(1)}_{a,k},\vh^{(1)}_{b,k}\rangle}{\|\vh^{(1)}_{a,k}\|_2\|\vh^{(1)}_{b,k}\|_2},\qquad a,b\in\mathcal{X}.
\end{equation*}
Finally, for each frequency $k$, we measure the agreement
\begin{equation*}
\rho_{(1),k}^{(t)}=\operatorname{Corr}\left(\{S^{\mathrm{sig},(1)}_{k;a,b}:a<b\},\{S^{{\rm emb},(t)}_{a,b}:a<b\}\right).
\end{equation*}

We track $\rho_{(1),k}^{(t)}$ over training in $F_{\rm add}$ and $F_{\rm mod}$. Figure~\ref{fig:fourier_frequency_epoch_heatmap} A, B reveal a clear frequency-wise transition. At the earliest stage, the embedding similarity structure is most strongly correlated with the $k=0$ component and several low-frequency components. As training proceeds, the correlation with the $k=0$ component decreases, while correlations with nonzero components increase, including the task-specific components at $k=20$ in $F_{\rm add}$ and $k=19$ in $F_{\rm mod}$.

\begin{figure}[htb]
    \centering
    \includegraphics[width=1\linewidth]{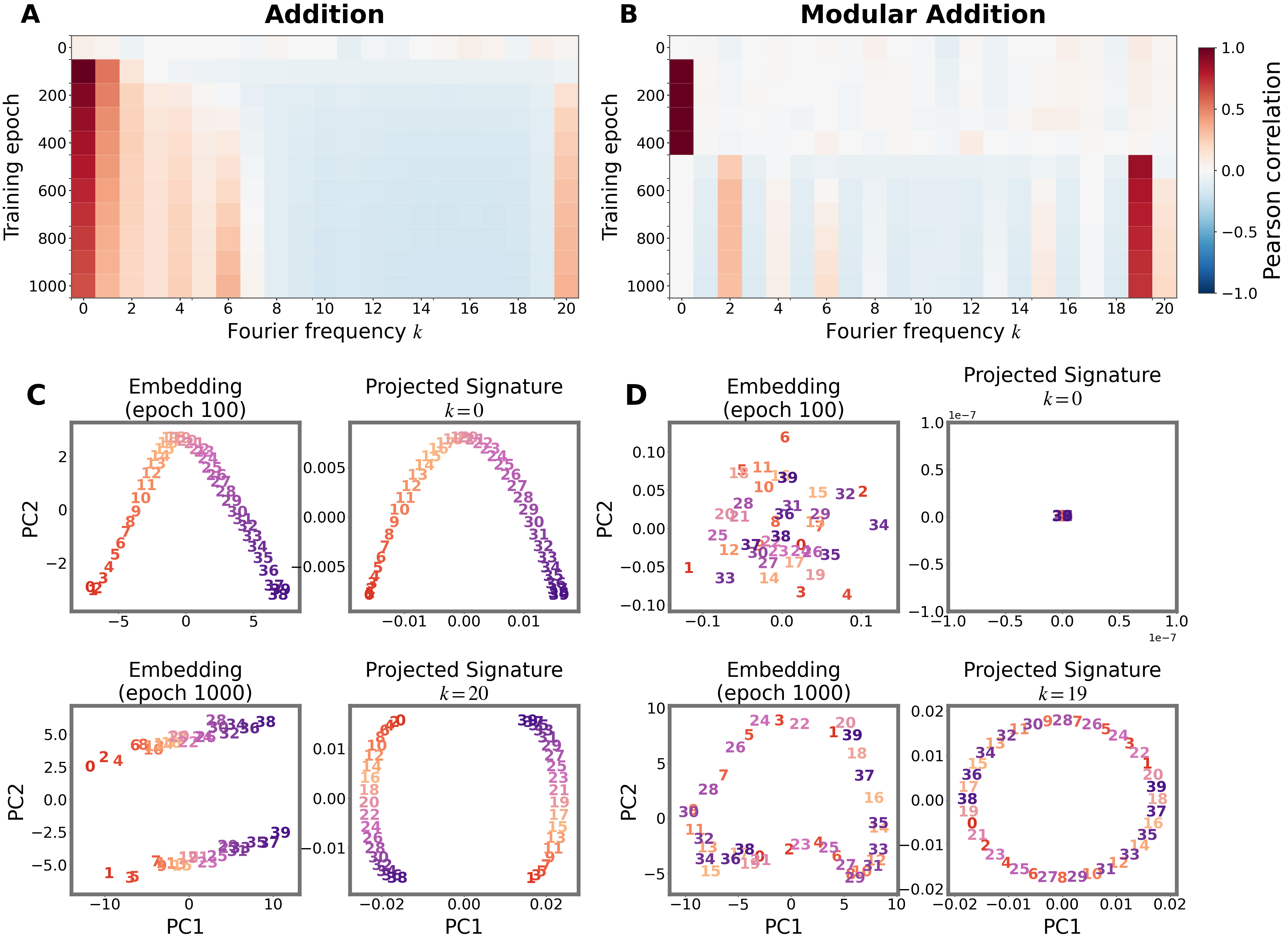}
    \caption{A: Heatmap of $\rho_{(1),k}^{(t)}$ across training epochs $t$ and Fourier frequencies $k$ for the addition task. B: The same correlation heatmap for the modular-addition task. C: PCA projections for the addition task, showing $\vw^E_\alpha$ at epochs 100 and 1000 together with $\vh^{(1)}_{\alpha,k}$ at frequencies $k=0$ and $k=20$, respectively. D: PCA projections for the modular-addition task, showing $\vw^E_\alpha$ at epochs 100 and 1000 together with $\vh^{(1)}_{\alpha,k}$ at frequencies $k=0$ and $k=19$, respectively.}
    \label{fig:fourier_frequency_epoch_heatmap}
\end{figure}

Figure~\ref{fig:fourier_frequency_epoch_heatmap} C, D further compare the number embeddings with the corresponding $\vh^{(1)}_{\alpha,k}$ at the early checkpoint (epoch 100) and the final checkpoint (epoch 1000). Early in training, the embedding space closely resembles the zero-frequency projection of the first-order signature. Later, the nonzero-frequency components also become strongly expressed and closely match the more complex embedding geometry. These results indicate that the effects of different signature orders do not emerge as unrelated structures. Instead, the empirical transition from lower-order to higher-order signature effects can be interpreted as a progressive spectral refinement from the shared zero-frequency structure to additional task-specific nonzero-frequency components.

A particularly interesting phenomenon is that changing only the random seed can lead to substantially different embedding geometries, even when the dataset and training configuration remain unchanged. We find that this variation can be explained by the embeddings aligning with different frequency components of the signatures. The corresponding results are presented in Appendix~\ref{app:seed_embedding}.

\subsection{A Possible Benefit of Context Staircase.}
Context Staircase suggests a particular learning order induced by small initialization. At the beginning of training, low-order signatures have a stronger effect on the embedding dynamics, while higher-order signatures become increasingly important as training proceeds. The zeroth-order signature only describes the relation between a token and the label. The first-order signature additionally includes one context token, and each higher order incorporates one more context token. Therefore, Context Staircase gradually introduces statistical relations involving increasingly more contextual information, from simple token--label relations to more complex relations involving the full context.

Importantly, this ordering does not imply faster learning in every task. Its effect depends on whether the low-order signatures already contain information useful for the task. For example, in the modular-addition task, all number tokens have the same zeroth-order signature. The statistical structure emphasized at the beginning of training therefore cannot distinguish different tokens or support the solution of the task. Learning only begins to make substantial progress when higher-order signatures become influential. In such a case, small initialization can actually delay task learning at the early stage.

The possible benefit of small initialization should therefore be understood as imposing an ordered rather than an immediately task-optimal learning process. Instead of allowing statistical relations of different orders to affect the embeddings at comparable scales from the beginning, it induces Context Staircase, in which simpler relations are learned first and more context-dependent relations are incorporated later. This simple-to-complex ordering may provide a useful inductive bias for representation learning, although whether it accelerates or delays optimization depends on which signature orders contain the information required by the task.

This perspective also suggests an analogy between Context Staircase and the frequency principle of neural networks. The frequency principle states that neural networks tend to learn low-frequency components of a target function before higher-frequency ones during training~\citep{xu2020frequency,xu2019training,xu2024overview,rahaman2019spectral}. Although the notion of complexity is different in the two settings, the learning order is conceptually similar. The frequency principle orders structures by their frequency, whereas Context Staircase orders statistical relations by the amount of contextual information they involve. Both phenomena therefore reflect an implicit preference of neural-network training dynamics for learning relatively simple structures before more complex ones. From this perspective, Context Staircase may be viewed as an analogous implicit bias in the space of contextual statistics, providing a possible statistical counterpart to the low-to-high frequency learning behavior observed in neural networks.

\subsection{Activation Order Controls the Accessible Signature Order} 

Theorem~\ref{thm:grad_ffn_y} shows that an activation of order $K$ exposes label signatures only up to order $\min\{L-1,K-1\}$. This motivates a direct architectural test: in a construction where lower-order signatures cannot distinguish the number tokens, successful learning should emerge only when the activation can expose the required full-context signature. We consider length-$L$ $F_{\rm mod}$ tasks, where lower-order signatures are identical across different number tokens and only the highest-order signature $\vvarphi^{y;L-1}_\alpha$ can distinguish them. In this construction, the gradient-flow decomposition therefore predicts a sharp change when the activation order reaches the sequence length. To test this prediction, we employ the activation $\sigma(\vz)=\vz^{\odot K}$ and vary both the sequence length $L$ and the activation order $K$. For each configuration, we record the maximum training accuracy and visualize the cosine-similarity matrix of the learned token embeddings.
As shown in Figure~\ref{fig:length_order}, when the activation order is smaller than the sequence length, the model is unable to learn the task at all; correspondingly, the embeddings of different numerical tokens collapse into the same direction.
In contrast, once the activation order reaches or exceeds the sequence length, the model can successfully learn the task, and the embedding similarity matrix develops a clear nontrivial structure, consistent with the signature orders accessible in the gradient-flow decomposition. 

\begin{figure}[htb]
    \centering
    \includegraphics[width=1\linewidth]{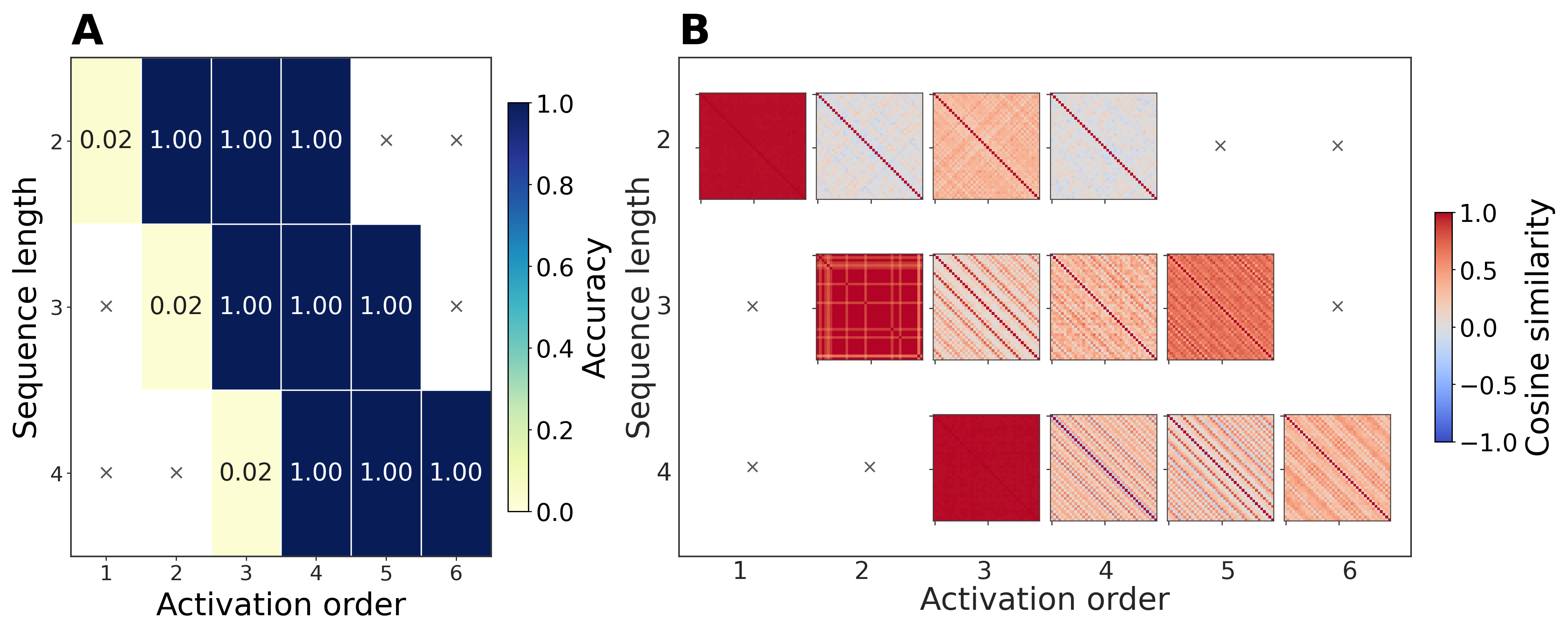}
    \caption{A: Heatmap of the maximum training accuracy achieved for modular-addition tasks with different sequence lengths and activation orders. B: Cosine-similarity matrices of the learned token embeddings for the corresponding task settings with different sequence lengths and activation orders.}
    \label{fig:length_order}
\end{figure}

\section{Attention-Modulated Signature Dynamics}
The feed-forward experiments establish that embedding geometry can progressively align with a hierarchy of signatures, while the gradient-flow decomposition explains why those signatures enter the updates. We next ask how self-attention changes this picture. The central difference is that attention routes statistical information non-uniformly: the contribution of a token depends not only on its contextual distribution, but also on the attention mass assigned to it during prediction. 

\subsection{Model Definition}
Recall that the input embedding matrix for a sequence $X=(x_1,\ldots,x_L)$ is
\begin{equation*}
\vW^E_X=[\vw^E_{x_1},\vw^E_{x_2},\ldots,\vw^E_{x_L}]\in\mathbb R^{d\times L}.
\end{equation*}
Denote that $\vW^Q,\vW^K,\vW^V\in\sR^{d_k}\times d$ and $\vW^O\in\sR^{d\times d_k}$. For each position $s$, define the query, key, and value vectors as
\begin{equation}\label{eq:qkv}
\vq_s:=\vW^Q\vw^E_{x_s},\qquad \vk_s:=\vW^K\vw^E_{x_s},\qquad \vv_s:=\vW^V\vw^E_{x_s}.
\end{equation}
We consider that the model utilizes the last position to output the prediction. Denote that $\vW^{QK}:=\vW^{K,T}\vW^Q$ and $\vW^{OV}:=\vW^O\vW^V$, then the attention logit from the last position to position $s$ is $z_s=\frac{(\vw^E_{x_s})^T\vW^{QK}\vw^E_{x_L}}{\sqrt d}$.
The last-row attention weights are
\begin{equation*}
a_s:=\frac{\exp(z_s)}{\sum_{r=1}^{L}\exp(z_r)},\qquad s=1,\ldots,L.
\end{equation*}
Then the attention model $\vf_{\rm attn}$ is
\begin{equation}\label{eq:attn}
\begin{aligned}
&\vg_{\mathrm{attn}}(X):=\vW^O\sum_{s=1}^{L}a_s\vv_s:=\vW^{OV}\sum_{s=1}^{L}a_s\vw^E_{x_s},\\
&\vf_{\rm attn}(X)=\vW^U\vg_{\mathrm{attn}}(X).
\end{aligned}
\end{equation}

\subsection{Gradient-flow Analysis}
We now analyze how the attention mechanism affects the embedding dynamics. By~\eqref{eq:task_pre}, the embedding gradient can be decomposed into a label-driven term and prediction-induced term:
\begin{equation*}
    \frac{d\vw^E_{\alpha}}{dt} =\left.\frac{d\vw^E_{\alpha}}{dt}\right|^{y} - \left.\frac{d\vw^E_{\alpha}}{dt}\right|^{p}.
\end{equation*}
The previous section shows that the label-driven term is the main source through which label signatures enter the embedding dynamics. We therefore focus on $\left.\frac{d\vw^E_{\alpha}}{dt}\right|^{y}$ for the attention model. \eqref{eq:qkv} indicates that in $\vf_{\rm attn}$, the gradient with respect to the embedding $\vw^E_\alpha$ arises through three components: the value vector, query vector and the key vector. Therefore, we decompose that
\begin{equation*}
    \left.\frac{d\vw^E_\alpha}{dt}\right|^y = \left.\frac{d\vw^E_\alpha}{dt}\right|^y_{\rm value} + \left.\frac{d\vw^E_\alpha}{dt}\right|^y_{\rm query} + \left.\frac{d\vw^E_\alpha}{dt}\right|^y_{\rm key}.
\end{equation*}

 As will be seen from the gradient-flow decomposition below, the attention model still expresses the embedding dynamics as a combination of label signatures. However, there are two significant differences between the $\vf_{\rm attn}$ and the $\vf_{\rm ffn}$ model. First, self-attention is position-sensitive. In $\vf_{\rm attn}$, distinct context positions play different roles in the computation. The signatures defined above condition on a sampled occurrence of $\alpha$, but aggregate the remaining context positions without distinguishing their locations. Here, however, we need to consider the distributions of these tokens at different positions. For any positions $s_1,s_2\in [L]$, we define $\vvarphi_{\alpha}^{y;x_{s_1}}\in\sR^{V\times V}$ and $\vvarphi_{\alpha}^{y;x_{s_1},x_{s_2}}\in\sR^{V\times V\times V}$ by
\begin{equation}
\begin{aligned}
&\vvarphi^{y;x_{s_1}}_\alpha(\nu,\beta_{s_1})
:=\mathbb P(y=\nu,x_{s_1}=\beta_{s_1}\mid \alpha),\\
&\vvarphi^{y;x_{s_1},x_{s_2}}_\alpha(\nu,\beta_{s_1},\beta_{s_2})
:=\mathbb P(y=\nu,x_{s_1}=\beta_{s_1},x_{s_2}=\beta_{s_2}\mid \alpha),\\
\end{aligned}
\end{equation}
where $\nu,\beta_{s_1},\beta_{s_2}\in\fV$. Moreover, the position of $\alpha$ itself should also be taken into account, especially whether it appears at the last position of the sequence. For any positions $s_0,s_1,s_2\in [L]$, we define $\vvarphi_{\alpha,x_{s_0}}^{y;x_{s_1}}\in\sR^{V\times V}$ and $\vvarphi_{\alpha,x_{s_0}}^{y;x_{s_1},x_{s_2}}\in\sR^{V\times V\times V}$ by 
\begin{equation}
\begin{aligned}
&\vvarphi_{\alpha,x_{s_0}}^{y;x_{s_1}}(\nu,\beta_{s_1})
:=\mathbb P\left(y=\nu,x_{s_1}=\beta_{s_1}\mid x_{s_0}=\alpha\right),\\
&\vvarphi_{\alpha,x_{s_0}}^{y;x_{s_1},x_{s_2}}(\nu,\beta_{s_1},\beta_{s_2})
:=\mathbb P\left(y=\nu,x_{s_1}=\beta_{s_1},x_{s_2}=\beta_{s_2}\mid x_{s_0}=\alpha\right),
\end{aligned}
\end{equation}
where $\nu,\beta_{s_1},\beta_{s_2}\in\fV$. These quantities record the label and contextual distributions associated with $\alpha$ while keeping track of the positions that enter the attention computation.

Second, attention does not transmit these signatures uniformly. Instead, the contribution of each token is multiplied by its attention score. Under the occurrence-conditioned construction above, $a_I$ is the attention score assigned to the selected occurrence of $\alpha$. We also define the total attention mass received by token $\alpha$ in $X$ as
\begin{equation*}
A_{\alpha}(X):=\sum_{s=1}^{L}a_s\mathbf 1\{x_s=\alpha\}.
\end{equation*}
This quantity measures how much attention is routed through all occurrences of $\alpha$ in the sequence. Correspondingly, for any positions $s_1,s_2\in [L-1]$, we introduce the following attention-weight factors:
\begin{equation}
\begin{aligned}
&\vpsi_{\alpha}^{y}\left(\nu\right)
:= \mathbb E[a_I\mid y=\nu,\alpha],\\
&\vpsi^{y,x_L}_{\alpha}(\nu,\beta_L)
:= \mathbb E\left[a_I\mid y=\nu,x_L=\beta_L,\alpha\right],\\
&\vpsi^{y,x_L,x_{s_1}}_{\alpha}(\nu,\beta_L,\beta_{s_1})
:= \mathbb E\left[a_Ia_{s_1}\mid y=\nu,x_L=\beta_L,x_{s_1}=\beta_{s_1},\alpha\right],\\
&\vpsi_{\alpha,x_L}^{y,\rightarrow x_{s_1}}\left(\nu,\beta_{s_1}\right)
:= \mathbb E \left[a_{s_1}\mid y=\nu, x_{s_1}=\beta_{s_1}, x_L=\alpha\right],\\
&\vpsi_{\alpha,x_L}^{y,\rightarrow x_{s_1},x_{s_2}}\left(\nu,\beta_{s_1},\beta_{s_2}\right)
:= \mathbb E\left[a_{s_1}a_{s_2}\mid y=\nu,x_{s_1}=\beta_{s_1},x_{s_2}=\beta_{s_2},x_L=\alpha\right].
\end{aligned}
\end{equation}
With these definitions, the attention-induced embedding dynamics can be interpreted as encoding position-specific signatures modulated by attention-dependent weights. Formally, we have the following result
\begin{theorem}\label{thm:attention}
Consider the attention model $\vf_{\rm attn}$ under the last-token prediction setting. For any token $\alpha\in\fV$, the label-driven component of the embedding gradient can be decomposed into value, key, and query contributions:
\begin{equation*}
\left.\frac{d\vw^E_\alpha}{dt}\right|^y_{\rm attn}
=
\left.\frac{d\vw^E_\alpha}{dt}\right|^y_{\rm value}
+
\left.\frac{d\vw^E_\alpha}{dt}\right|^y_{\rm key}
+
\left.\frac{d\vw^E_\alpha}{dt}\right|^y_{\rm query}.
\end{equation*}
More explicitly, these three terms are given by
\begin{equation}
\begin{aligned}
\left.\frac{d\vw^E_\alpha}{dt}\right|^y_{\rm value}
&=
Lr_\alpha\left(\vW^U\vW^{OV}\right)^{T}
\left(
\vvarphi^{y;0}_\alpha\odot\vpsi^y_{\alpha}
\right),
\\
\left.\frac{d\vw^E_\alpha}{dt}\right|^y_{\rm key}
&=
\frac{Lr_{\alpha}}{\sqrt{d}}\vW^{QK}
\left[
\widetilde{\vW}^{\rm key}_\alpha\cdot
\left(
\vpsi^{y,x_L}_{\alpha}\odot\vvarphi^{y,x_L}_\alpha
\right)
-\widetilde{\vW}^{{\rm key}}\cdot\sum_{s_1=1}^{L}
\vpsi^{y,x_L,x_{s_1}}_{\alpha}\odot\vvarphi^{y,x_L,x_{s_1}}_\alpha
\right],
\\
\left.\frac{d\vw^E_\alpha}{dt}\right|^y_{\rm query}
&=
\frac{r_{\alpha}^{x_L}}{\sqrt{d}}\vW^{QK}
\left[
\widetilde{\vW}^{{\rm query},1}\cdot\sum_{s_1=1}^L
\vpsi_{\alpha,x_L}^{y,\rightarrow x_{s_1}}\odot\vvarphi_{\alpha,x_L}^{y;x_{s_1}}
-\widetilde{\vW}^{{\rm query},2}\cdot\sum_{s_1,s_2=1}^L
\vpsi_{\alpha,x_L}^{y,\rightarrow x_{s_1},x_{s_2}}\odot\vvarphi_{\alpha,x_L}^{y;x_{s_1},x_{s_2}}
\right].
\end{aligned}
\end{equation}
Here $\odot$ denotes element-wise multiplication. The coefficient tensors are defined as follows. For the key contribution, $\widetilde{\vW}^{\rm key}_\alpha\in\left(\sR^{d}\right)^{V\times V}$ and $\widetilde{\vW}^{{\rm key}}\in\left(\sR^{d}\right)^{V\times V\times V}$ with element
\begin{equation*}
\begin{aligned}
\widetilde{\vW}^{\rm key}_{\alpha}\left(\nu,\beta_1\right)
&=
\left((\vw^U_{\nu})^{T}\vW^{OV}\vw^E_\alpha\right)\vw^E_{\beta_1},
\\
\widetilde{\vW}^{{\rm key}}\left(\nu,\beta_1,\beta_2\right)
&=
\left((\vw^U_{\nu})^{T}\vW^{OV}\vw^E_{\beta_1}\right)\vw^E_{\beta_2},
\end{aligned}
\end{equation*}
where $\nu,\beta_1,\beta_2\in\fV$. Similarly, for the query contribution, $\widetilde{\vW}^{{\rm query},1}\in\left(\sR^{d}\right)^{V\times V}$ and $\widetilde{\vW}^{{\rm query},2}\in\left(\sR^{d}\right)^{V\times V\times V}$ with element
\begin{equation*}
\begin{aligned}
\widetilde{\vW}^{{\rm query},1}\left(\nu,\beta_1\right)
&=
\left((\vw^U_\nu)^T\vW^{OV}\vw^E_{\beta_1}\right)\vw^E_{\beta_1},
\\
\widetilde{\vW}^{{\rm query},2}\left(\nu,\beta_1,\beta_2\right)
&=
\left((\vw^U_\nu)^T\vW^{OV}\vw^E_{\beta_1}\right)\vw^E_{\beta_2}.
\end{aligned}
\end{equation*}
\end{theorem}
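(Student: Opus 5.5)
The plan is a direct chain-rule computation. We start from the label-driven term $\mathbb E[(\partial \vf_{\rm attn}/\partial\vw^E_\alpha)^T\ve_y]$ and split the Jacobian according to the three places where $\vw^E_\alpha$ enters $\vf_{\rm attn}$.

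\textbf{Value path.} The embedding enters through the value vectors $\vw^E_{x_s}$ in $\sum_s a_s\vw^E_{x_s}$ with the attention weights held fixed.

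\textbf{Key path.} It enters through the logits $z_s$ via the left factor $\vw^E_{x_s}$, for every position $s$ with $x_s=\alpha$.

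\textbf{Query path.} It enters through every $z_s$ via the right factor $\vw^E_{x_L}$, but only when $x_L=\alpha$.

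For each path we write the contribution as a sum over positions with indicator $\mathbf 1\{x_s=\alpha\}$. We then convert occurrence sums into conditional expectations using the identity already used for Theorem~\ref{thm:grad_ffn_y}, namely $\mathbb E[\sum_s\mathbf 1\{x_s=\alpha\}H]=Lr_\alpha\mathbb E[H_I\mid\alpha]$. For the query path, which only involves position $L$, we use the analogous identity $\mathbb E[\mathbf 1\{x_L=\alpha\}H]=r^{x_L}_\alpha\mathbb E[H\mid x_L=\alpha]$, with $r^{x_L}_\alpha=\mathbb P(x_L=\alpha)$.

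\textbf{Value term.} Here $\partial\vf/\partial\vw^E_{x_s}=a_s\vW^U\vW^{OV}$, so the contribution is
\[
Lr_\alpha(\vW^U\vW^{OV})^T\,\mathbb E[a_I\ve_y\mid\alpha].
\]
We then condition on $y$ to get $\sum_\nu \mathbb P(y=\nu\mid\alpha)\,\mathbb E[a_I\mid y=\nu,\alpha]\,\ve_\nu$. This is exactly $\vvarphi^{y;0}_\alpha\odot\vpsi^y_\alpha$.

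\textbf{Key and query terms.} For these we use the softmax Jacobian $\partial a_r/\partial z_s=a_r(\delta_{rs}-a_s)$, which gives
\[
\partial\vf/\partial z_s=a_s\vW^U\vW^{OV}\Big(\vw^E_{x_s}-\sum_r a_r\vw^E_{x_r}\Big).
\]
We contract this with $\ve_y$, producing the scalar $(\vw^U_y)^T\vW^{OV}(\cdot)$, and multiply by $\partial z_s/\partial\vw^E_{x_s}=\vW^{QK}\vw^E_{x_L}/\sqrt d$ for the key path, or by $\vW^{QK,T}\vw^E_{x_s}/\sqrt d$ for the query path. The key path thus yields two pieces:
\[
a_I\,(\vw^U_y)^T\vW^{OV}\vw^E_\alpha\cdot\vw^E_{x_L}
\quad\text{and}\quad
a_I a_{s_1}\,(\vw^U_y)^T\vW^{OV}\vw^E_{x_{s_1}}\cdot\vw^E_{x_L}.
\]
Their conditional expectations given $\alpha$ are computed by first conditioning on $(y,x_L)$, and respectively on $(y,x_L,x_{s_1})$. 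This factorizes each expectation into a probability, which is a signature entry, times a conditional attention mean, which is a $\vpsi$ entry, times a deterministic coefficient, which is a $\widetilde{\vW}$ entry. The query path is handled the same way, conditioning on $(y,x_{s_1})$ or $(y,x_{s_1},x_{s_2})$ given $x_L=\alpha$, which yields the $\widetilde{\vW}^{{\rm query},1}$ and $\widetilde{\vW}^{{\rm query},2}$ pieces.

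\textbf{Main obstacle.} The main difficulty is bookkeeping rather than analysis, in three respects.
\begin{itemize}
\item The transposes and side of $\vW^{QK}$ must match the statement's form. We would either assume symmetry or interpret $\vW^{QK}$ up to transpose, checking that the key path gives $\vW^{QK}\vw^E_{x_L}$ and the query path gives $\vW^{QK,T}\vw^E_{x_s}$.
\item Occurrences of $\alpha$ at several positions, including $I=L$ where the key and query paths overlap, must be accounted for. We handle this by keeping everything as sums over positions before conditioning, so that no cross-term is lost.
\item The signatures $\vvarphi^{y,x_L}_\alpha$ must be read correctly. We interpret them as the position-specific signatures $\vvarphi^{y;x_L}_\alpha$ defined above, with $\vpsi$ absorbing the remaining dependence, and index the contractions accordingly.
\end{itemize}
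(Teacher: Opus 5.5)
Your proposal is correct and follows essentially the same route as the paper's proof. Both split the chain rule into value and score paths, and both use the softmax Jacobian; the paper collapses the key sum to $A_\alpha(X)\bigl(\vw^E_\alpha-\bar{\vw}^E_X\bigr)$, which is your $a_s(\vw^E_{x_s}-\sum_r a_r\vw^E_{x_r})$ summed over positions with $x_s=\alpha$. Both then apply the occurrence identity to get the $Lr_\alpha$ and $r^{x_L}_\alpha$ prefactors, and condition on $(y,x_L)$, $(y,x_L,x_{s_1})$, $(y,x_{s_1})$, $(y,x_{s_1},x_{s_2})$ to factor each expectation into a signature, a $\vpsi$ entry and a deterministic coefficient.

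Your bookkeeping caveats are well founded. The query path genuinely produces $\vW^{QK,T}$; the paper's proof also derives this transpose before writing $\vW^{QK}$ in its final display. The correct second key coefficient is $\bigl((\vw^U_\nu)^T\vW^{OV}\vw^E_{\beta_{s_1}}\bigr)\vw^E_{\beta_L}$, which matches the paper's proof rather than the index order given in the statement.
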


Theorem~\ref{thm:attention} indicates that the main effect of self-attention is to modulate the signature terms by attention scores. The embedding dynamics are still expressed as a combination of signatures, but each signature is now weighted according to how strongly the corresponding tokens are attended to.

It is worth noting that Theorem~\ref{thm:attention} keeps the attention-score term in its full form, so that the role of attention in the embedding dynamics remains explicit and interpretable. As a consequence, the resulting expression involves label signatures of at most second order. Higher-order and more complex signatures could be obtained by further expanding the attention-score term, in a manner similar to the derivation of Theorem~\ref{thm:grad_ffn_y}. We do not pursue this direction here, as our main goal is to characterize the direct effect of attention scores on the embedding dynamics.


\subsection{Empirical Consequences of Attention Modulation}
We examine two empirical consequences of the attention decomposition. First, we test whether the shift from early zero-order dominance to later higher-order influence observed in the feed-forward model also appears in an attention-only model. Second, we isolate the role of attention scores in controlling the rate and magnitude of embedding evolution.

To examine the first perspective, we again use the length-2 tasks $F_{\rm add}$ and $F_{\rm mod}$. In these tasks, the attention scores assigned to the two number tokens are uniform, so the embedding dynamics are primarily governed by the label signatures. Figure~\ref{fig:atten_task}A--H shows the embedding structures of the two tasks at the early and late stages of training, revealing a phenomenon similar to that observed in the FFN analysis. Specifically, the embedding space is first reshaped by the zero-order label signatures in the early stage, while the first-order signature effect becomes clearly visible later in training. These results show that the progressive signature-alignment phenomenon is not specific to the feed-forward model: an attention-only architecture also exhibits early zero-order dominance followed by the emergence of higher-order structure.

To investigate the effect of the attention score,
 we construct the following tasks. 
 \begin{task}
     Given anchor set $\fA$ and noise set $\fR$ with $\fA\cap\fR=\emptyset$, the sequence $X$ is formulated as 
     \begin{equation*}
         X=\left[r_1,\ldots r_{m-1},a,r_{m+1}\ldots,r_L\right],
     \end{equation*}
     where $r_i\in\fR$ and $a\in\fA$. the label $y=0$ if $a \mod{2} = 0$ else $1$.
 \end{task}
 
In this task, we set $\fA=\{11,\dots,20\}$ and $\fR=\{21,\dots,60\}$. At initialization, the attention scores assigned to different tokens are approximately uniform, so the embedding dynamics are primarily driven toward the corresponding signature directions. During training, however, the attention scores assigned to the $r$ tokens rapidly decay to nearly zero. Consequently, the associated gradient magnitudes become negligible, and the embeddings of the $r$ tokens exhibit little further evolution. Figure~\ref{fig:atten_task} I,J exhibits the structure of the anchor token $a\in\fA$ and the noise token $r\in\fR$. Anchor token $a$ are divided into two group by the parity, and the noise token are aligned in the one direction, which is consistent with their label signature. The key difference is that the rate of structure formation is controlled by attention. Figure~\ref{fig:atten_task} K displays the average received attention score to the anchor tokens and the noise tokens during training, indicating that the anchor tokens $a$ receive substantially larger average attention scores than the noise tokens $r$. Correspondingly, their embedding geometry changes much more rapidly, as reflected by the faster increase of embedding norm over training in Figure~\ref{fig:atten_task} L. In contrast, the noise tokens receive much smaller attention scores, and their embeddings evolve more slowly. 

\begin{figure}[htb]
    \centering
    \includegraphics[width=1\linewidth]{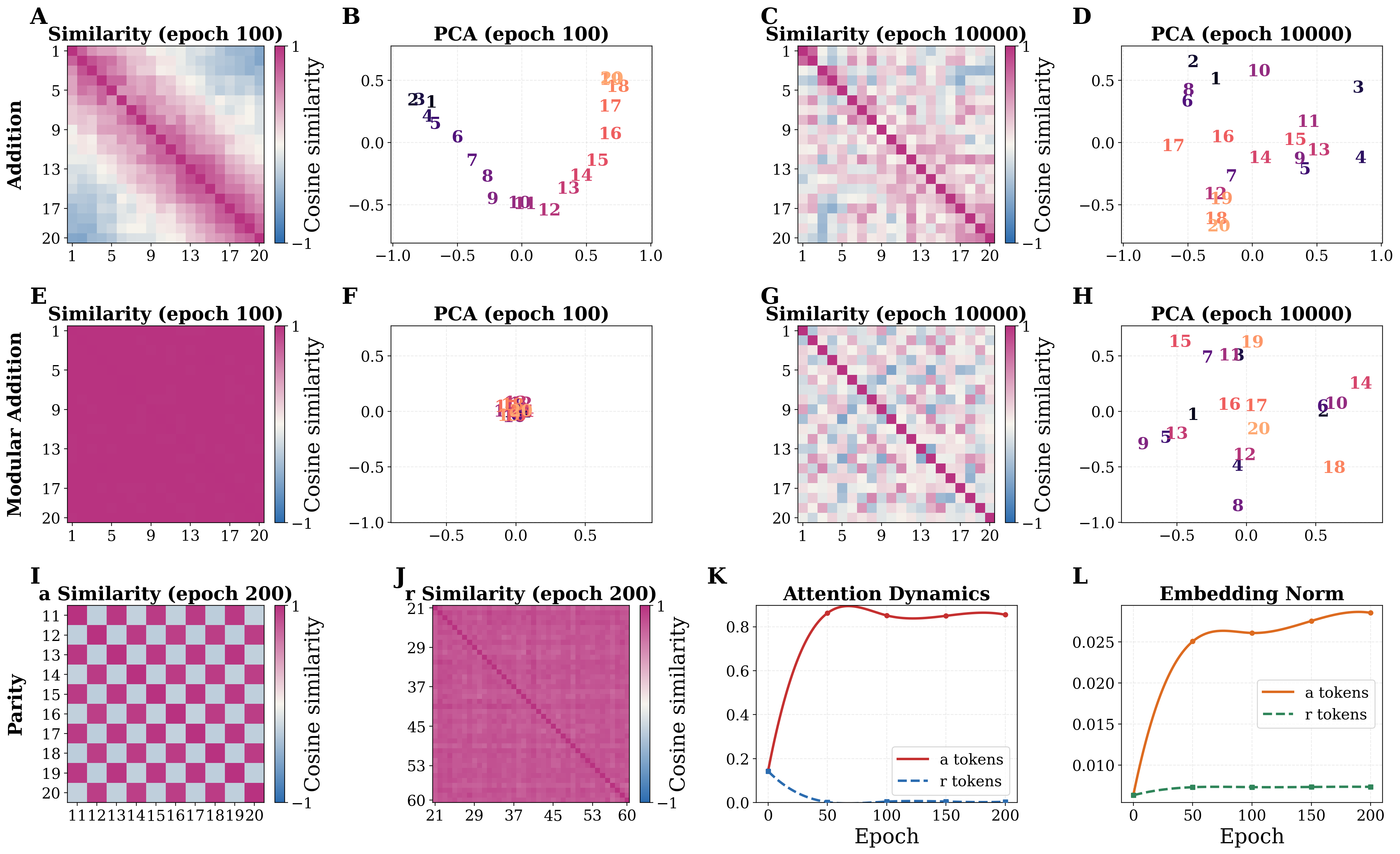}
    
    \caption{A--H: Embedding cosine-similarity matrices and PCA projections for the addition and modular-addition tasks at early and late training stages. A,B show early-stage addition; C,D show late-stage addition; E,F show early-stage modular addition; G,H show late-stage modular addition. I,J: Embedding cosine-similarity matrices of anchor tokens and noise tokens in the parity task. K: Mean attention dynamics for anchor tokens and noise tokens. L: Embedding norm dynamics for anchor tokens and noise tokens.}
    \label{fig:atten_task}
\end{figure}

\section{Path-Specific Signatures in Transformer Training}
The controlled analyses above identify two complementary mechanisms: nonlinear feed-forward paths expose signatures of different orders, while attention paths modulate position-specific signatures through learned routing weights. We now use these results to formulate path-specific hypotheses for a full decoder-only Transformer and evaluate them on a real language-model training run.

Consider a decoder-only Transformer with $N_{\mathrm{layer}}$ layers and sequence length $L$. Let $s\in\{1,\ldots,L-1\}$ denote a prediction position, so that the target token is $x_{s+1}$. We write the final residual stream at position $s$ as
\begin{equation*}
\vh^{N_{\mathrm{layer}}}_s=\vh^0_s+\sum_{l=0}^{N_{\mathrm{layer}}-1}\va^l_s+\sum_{l=0}^{N_{\mathrm{layer}}-1}\vm^l_s,
\end{equation*}
where $\va^l_s$ and $\vm^l_s$ denote the residual updates produced by the attention module and the FFN module in layer $l$, respectively. This notation keeps the full layer dependence inside $\va^l_s$ and $\vm^l_s$, and only uses the additive structure of the residual stream.

We focus on the label-dependent part of the gradient flow. Taking the expectation over training sequences and prediction positions, we have
\begin{equation*}
\left.\frac{d\vw^E_\alpha}{dt}\right|^y=\mathbb{E}\left[\left(\frac{\partial \vh^{N_{\mathrm{layer}}}_s}{\partial \vw^E_\alpha}\right)^T\vW^{U,T}\ve_{x_{s+1}}\right].
\end{equation*}
Using the residual decomposition above, this gradient can be decomposed into three types of path contributions:
\begin{equation*}
\left.\frac{d\vw^E_\alpha}{dt}\right|^y=\vG^{\mathrm{direct}}_\alpha+\sum_{l=0}^{N_{\mathrm{layer}}-1}\vG^{\mathrm{attn},l}_\alpha+\sum_{l=0}^{N_{\mathrm{layer}}-1}\vG^{\mathrm{ffn},l}_\alpha,
\end{equation*}
where
\begin{equation*}
\vG^{\mathrm{direct}}_\alpha:=\mathbb{E}\left[\left(\frac{\partial \vh^0_s}{\partial \vw^E_\alpha}\right)^T\vW^{U,T}\ve_{x_{s+1}}\right],\quad
\vG^{\mathrm{attn},l}_\alpha:=\mathbb{E}\left[\left(\frac{\partial \va^l_s}{\partial \vw^E_\alpha}\right)^T\vW^{U,T}\ve_{x_{s+1}}\right],
\end{equation*}
and
\begin{equation*}
\vG^{\mathrm{ffn},l}_\alpha:=\mathbb{E}\left[\left(\frac{\partial \vm^l_s}{\partial \vw^E_\alpha}\right)^T\vW^{U,T}\ve_{x_{s+1}}\right].
\end{equation*}
Thus, the embedding gradient in a full Transformer is an additive sum of the direct path, attention paths, and FFN paths. This identity provides the bridge between the modular analyses above and the embedding dynamics of a full language model.

\paragraph{Direct path.}
We first analyze the direct contribution $\vG_{\alpha}^{\mathrm{direct}}$. Since the input residual stream at position $s$ contains the embedding of the current token $x_s$, we have
\begin{equation*}
\frac{\partial \vh_s^0}{\partial \vw_{\alpha}^{E}}=\mathbf{1}_{x_s=\alpha}\vI.
\end{equation*}
Substituting this identity into the direct gradient contribution gives
\begin{equation*}
\vG_{\alpha}^{\mathrm{direct}}=\mathbb{E}\left[\mathbf{1}_{x_s=\alpha}\vW^{U,T}\ve_{x_{s+1}}\right].
\end{equation*}
Let $r_{\alpha}^{\mathrm{nt}}=\mathbb{P}(x_s=\alpha)$ be the probability that token $\alpha$ appears at the prediction position, and define the next-token signature of $\alpha$ as $\vvarphi_{\alpha}^{y;\mathrm{nt}}\in\sR^{V}$ whose $\nu$-th element is
\begin{equation*}
\vvarphi_{\alpha}^{y;\mathrm{nt}}(\nu)=\mathbb{P}(x_{s+1}=\nu\mid x_s=\alpha).
\end{equation*}
Then the direct contribution can be rewritten as 
\begin{equation*}
\vG_{\alpha}^{\mathrm{direct}}=r_{\alpha}^{\mathrm{nt}}\vW^{U,T}\vvarphi_{\alpha}^{y;\mathrm{nt}}.
\end{equation*}
Therefore, the direct embedding-readout path updates $\vw_{\alpha}^{E}$ according to the distribution of tokens that follow $\alpha$. In other words, before considering the indirect effects of attention and FFN modules, the most immediate statistical signal received by the embedding of $\alpha$ is its next-token signature.

\paragraph{FFN path.}
 The FFN module has already been analyzed in the previous sections. There, we showed that a position-wise nonlinear module can introduce signature information into the embedding dynamics, and that the order of the induced signature depends on the effective input structure and the activation order. In the full Transformer, however, the FFN module receives the residual stream at a single position. Therefore, at the beginning of training, the input to the FFN module at position $s$ is dominated by the residual direct term:
\begin{equation*}
\vh^\ell_s \approx \vh^0_s=\vw^E_{x_s}.
\end{equation*}
Thus, from the perspective of the FFN module, the effective input sequence has length one. Consequently, in the early stage, the FFN path only induces zero-order label signatures which is the next-token signature under the next-token prediction.

As training proceeds, the attention outputs and the residual stream become comparable in scale. For example, after the first attention module, the input to the following FFN at position $s$ can be written schematically as
\begin{equation*}
\vh^0_s+\va^0_s\approx \vw^E_{x_s}+\vW^{OV,0}\sum_{j\leq s}A^0_{s,j}\vw^E_{x_j}.
\end{equation*}
Once $\left\|\va^0_s\right\|$ becomes comparable to $\left\|\vh^0_s\right\|$, the FFN input becomes a weighted combination of the embeddings of the previous tokens $x_1,\ldots,x_s$. Therefore, by the FFN analysis in the previous section, the FFN path can then introduce higher-order signature information into the embedding gradient. These analysis indicate that $\vG_\alpha^{{\rm ffn},l}$ could be formulated as a combination of signatures with different orders.


\paragraph{Attention path.}
The effect of the attention path follows directly from the analysis in the previous section. In particular, Theorem~\ref{thm:attention} shows that, at the early stage of training, the leading structure introduced by the attention path is an attention-weighted zero-order label signature. For a token $\alpha$, let $I_s\sim {\rm Unif}([s])$ be sampled independently of the training sequence, and condition on $x_{I_s}=\alpha$. In this prefix setting, we use $\mathbb P(\,\cdot\mid\alpha)$ as shorthand for $\mathbb P(\,\cdot\mid x_{I_s}=\alpha)$. The total attention mass at prediction position $s$ is
\[
A_\alpha(X_{:s})
=
\sum_{j\leq s} a_{s,j}\mathbf{1}\{x_j=\alpha\}.
\]
Then the corresponding signature takes the form
$\vvarphi^{y;0}_\alpha\odot\vpsi^y_\alpha$, where
$\vvarphi^{y;0}_\alpha(\nu)=\mathbb P(x_{s+1}=\nu\mid \alpha)$
and
$\vpsi^y_\alpha(\nu)=\mathbb E[a_{s,I_s}\mid x_{s+1}=\nu,\alpha]$.
Therefore, the attention path contains the same zero-order token--label statistics as before, but reweights them according to how much attention is assigned to occurrences of $\alpha$. Tokens receiving larger attention mass consequently exert a stronger influence on the corresponding embedding update.

As training proceeds, the key- and query-dependent terms derived in Theorem~\ref{thm:attention} introduce additional position-specific and higher-order signatures. We do not repeat their derivation here and focus below on the early-stage attention-weighted zero-order signature.

\subsection{Experimental Validation}

We next investigate how the statistical structures encoded by token embeddings evolve during real language-model training. Our theoretical analysis predicts that the embedding geometry is initially dominated by low-order signatures, while richer higher-order signatures gradually become encoded as training proceeds. We first verify this progression from the next-token signature to the first-order signature. We then provide an additional verification of the attention-path prediction derived from the path analysis.

We train a 0.7B-parameter decoder-only Transformer on 36B tokens. Let $\mathcal T$ denote the set of the 200 most frequent eligible tokens in the training corpus. For each token $\alpha\in\mathcal T$, we first estimate its empirical next-token signature $\vvarphi^{y;\mathrm{nt}}_\alpha$, and construct the corresponding similarity matrix
\begin{equation*}
S^{\mathrm{next}}_{\alpha,\beta}=\frac{\left\langle\vvarphi^{y;\mathrm{nt}}_\alpha,\vvarphi^{y;\mathrm{nt}}_\beta\right\rangle}{\left|\vvarphi^{y;\mathrm{nt}}_\alpha\right|_2\left|\vvarphi^{y;\mathrm{nt}}_\beta\right|_2},\qquad \alpha,\beta\in\mathcal T.
\end{equation*}

For each epoch $t$, we compute the embedding similarity matrix
\begin{equation*}
S^{\mathrm{emb},(t)}_{\alpha,\beta}=\frac{\left\langle\vw^{E,(t)}_\alpha,\vw^{E,(t)}_\beta\right\rangle}{\left|\vw^{E,(t)}_\alpha\right|_2\left|\vw^{E,(t)}_\beta\right|_2},\qquad \alpha,\beta\in\mathcal T,
\end{equation*}
and measure the agreement between the embedding geometry and the next-token signature geometry by
\begin{equation*}
\rho_{\mathrm{next}}^{(t)}=\operatorname{Corr}\left(\left\{S^{\mathrm{emb},(t)}_{\alpha,\beta}:\alpha<\beta\right\},\left\{S^{\mathrm{next}}_{\alpha,\beta}:\alpha<\beta\right\}\right).
\end{equation*}

To examine whether the embedding geometry progressively captures richer statistical structures, we further estimate the first-order signature. For an occurrence of $\alpha$ at position $s$, we record whether another token $\beta$ has appeared anywhere in the prefix $X_{<s}$, regardless of its position or multiplicity. The resulting first-order signature is
\begin{equation}
\label{eq:first_order_real}
\vvarphi^{y;1}_\alpha(\nu,\beta)=\mathbb P\left(x_{s+1}=\nu,\beta\in X_{<s}\mid x_s=\alpha\right),\qquad \nu,\beta\in\mathcal V.
\end{equation}
Since $\vvarphi^{y;1}_\alpha$ contains an additional context-token dimension, it cannot be compared directly with the embedding geometry. We learn a shared projection along the context-token dimension together with a linear readout. Specifically, at epoch $t$, the probe produces an embedding-like representation
\begin{equation}
\label{eq:first_order_probe}
\widehat{\vw}^{E,(t)}_\alpha=\vvarphi^{y;1}_\alpha\mathbf v^{(t)},
\end{equation}
where $\mathbf v^{(t)}$ is a learned probe over the context-token dimension. We randomly split the anchor tokens into a fixed training set and a held-out test set. At each epoch, the probe parameters are learned using only the training tokens, while evaluation is performed exclusively on similarities between held-out tokens. Let
\begin{equation*}
\widehat S^{(1),(t)}_{\alpha,\beta}=\frac{\left\langle\widehat{\vw}^{E,(t)}_\alpha,\widehat{\vw}^{E,(t)}_\beta\right\rangle}{\left|\widehat{\vw}^{E,(t)}_\alpha\right|_2\left|\widehat{\vw}^{E,(t)}_\beta\right|_2}
\end{equation*}
denote the similarity predicted by the first-order signature probe. The probe is trained directly in the similarity space by minimizing the MSE between $\widehat S^{(1),(t)}_{\alpha,\beta}$ and $S^{\mathrm{emb},(t)}_{\alpha,\beta}$. 

Figures~\ref{fig:next-token}A and B verify the transition from low-order to higher-order signatures. Figure~\ref{fig:next-token}A shows that $\rho_{\mathrm{next}}^{(t)}$ rises rapidly from nearly zero, reaches a pronounced peak at the beginning of training, and then gradually decreases to a stable nonzero level. This indicates that the embedding geometry is initially dominated by the next-token signature, while additional statistical structures emerge during subsequent training. Figure~\ref{fig:next-token}B reports the held-out similarity MSE of the first-order signature probe. Although the embedding geometry changes rapidly during the earliest stage of optimization, the probe error quickly decreases and remains consistently low throughout the remainder of training. Since the probe is trained only on one subset of anchor tokens and evaluated exclusively on unseen tokens, this result shows that a shared rank-one projection of the first-order signature generalizes across tokens and captures a substantial fraction of the learned embedding geometry.

As an additional verification of the path analysis, we also examine the attention path. For layer $\ell$ and epoch $t$, let $\bar A^{\ell,(t)}_{s,j}$ denote the attention weight from position $s$ to position $j$, averaged over attention heads. Let $I_s\sim{\rm Unif}([s])$ be independent of the training sequence, and use the same occurrence-conditioned shorthand $\mathbb E[\,\cdot\mid\alpha]=\mathbb E[\,\cdot\mid x_{I_s}=\alpha]$. The attention-path signature associated with token $\alpha$ is estimated as
\begin{equation}
\label{eq:attn_phi}
\vvarphi^{y;\mathrm{attn},\ell,(t)}_\alpha(\nu)=\mathbb E\left[\bar A^{\ell,(t)}_{s,I_s}\mathbf 1_{\{x_{s+1}=\nu\}}\mid\alpha\right],\qquad \nu\in\mathcal V.
\end{equation}
We then construct
\begin{equation*}
S^{\mathrm{attn\text{-}sig},\ell,(t)}_{\alpha,\beta}=\frac{\left\langle\vvarphi^{y;\mathrm{attn},\ell,(t)}_\alpha,\vvarphi^{y;\mathrm{attn},\ell,(t)}_\beta\right\rangle}{\left|\vvarphi^{y;\mathrm{attn},\ell,(t)}_\alpha\right|_2\left|\vvarphi^{y;\mathrm{attn},\ell,(t)}_\beta\right|_2},\qquad \alpha,\beta\in\mathcal T.
\end{equation*}
For each layer $\ell$ and epoch $t$, we define
\begin{equation*}
\vz^{\ell,(t)}_\alpha=\vW^{O,\ell,(t)}\vW^{V,\ell,(t)}\vw^{E,(t)}_\alpha,
\end{equation*}
construct
\begin{equation*}
S^{\mathrm{attn\text{-}logits},\ell,(t)}_{\alpha,\beta}=\frac{\left\langle\vz^{\ell,(t)}_\alpha,\vz^{\ell,(t)}_\beta\right\rangle}{\left|\vz^{\ell,(t)}_\alpha\right|_2\left|\vz^{\ell,(t)}_\beta\right|_2},
\end{equation*}
and compute
\begin{equation*}
\rho_{\mathrm{attn}}^{\ell,(t)}=\operatorname{Corr}\left(\left\{S^{\mathrm{attn\text{-}logits},\ell,(t)}_{\alpha,\beta}:\alpha<\beta\right\},\left\{S^{\mathrm{attn\text{-}sig},\ell,(t)}_{\alpha,\beta}:\alpha<\beta\right\}\right).
\end{equation*}
Figure~\ref{fig:next-token}C shows that the similarity structure induced by the attention-path representations exhibits a clear early-stage correlation with the corresponding attention-weighted signature, providing empirical support for the path-specific prediction derived in the theoretical analysis.

\begin{figure}[htbp]
    \centering
    \includegraphics[width=1\linewidth]{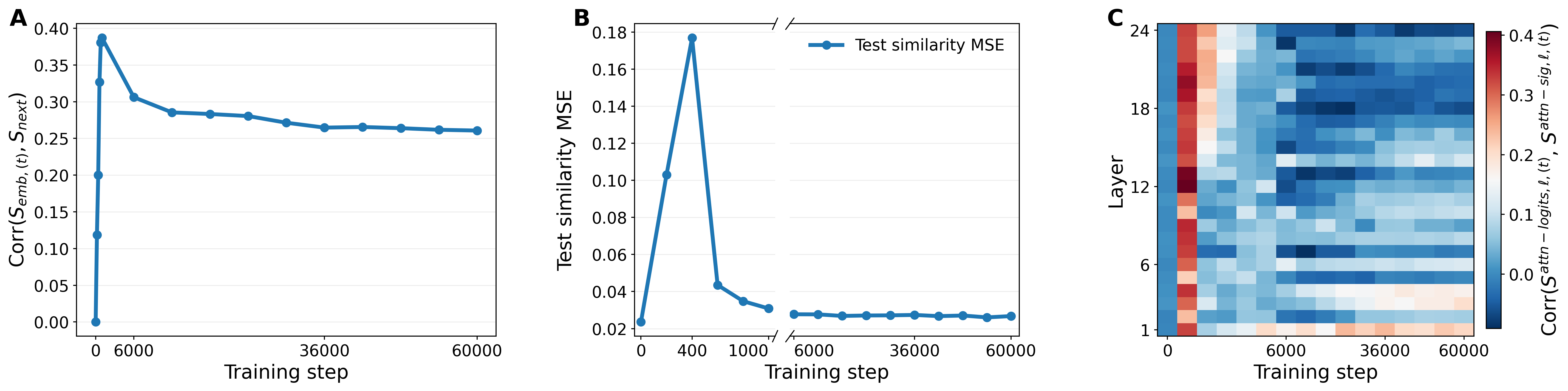}
    \caption{A: Correlation between the embedding similarity matrix $S^{\mathrm{emb},(t)}$ and the next-token signature similarity matrix $S^{\mathrm{next}}$ over training steps, computed on the 200 most frequent tokens. B: held-out similarity MSE between the pairwise cosine similarities predicted by the rank-one first-order-signature probe and those of the token embeddings across training epochs; the broken horizontal axis separates the early-training epochs from the later epochs. C: Layer-wise heatmap of the correlation between the attention-path similarity matrix $S^{\mathrm{attn-logits},\ell,(t)}$ and the attention-weighted signature similarity matrix $S^{\mathrm{attn-sig},\ell,(t)}$ over training steps.}
    \label{fig:next-token}
\end{figure}

\section{The Role of Signatures in Model Learning}
The preceding sections establish that learned embeddings can align with identifiable statistical signatures and that these signatures appear explicitly in the embedding updates. We now ask a separate question: does such a structure actually help the model learn the task?

\subsection{Relation between Signature and Learning}
The empirical results above show that the embedding space can reflect a hierarchy of signatures, while the gradient-flow analysis identifies these signatures as components of the embedding update. A natural question is whether learning such signature structures actually facilitates the learning of the task itself. In particular, does the emergence of a mathematically meaningful embedding geometry necessarily imply that the model has learned the corresponding task?

We investigate this question from three complementary perspectives. We first show that encoding a signature structure is not by itself sufficient for solving a task. We then test whether directly providing task-relevant signature information can accelerate learning. Finally, we show that the order at which discriminative information first appears in the signature hierarchy can strongly affect when the task is learned.

\paragraph{Signature structure alone is not sufficient for task learning.}

We first illustrate this point using the binary modular-addition task $F_{\rm mod}$ (see Task~\ref{task:2token_math}). Previous studies on modular addition often associate the emergence of structured token representations with successful task learning~\citep{liu2022towards,nanda2023progress,gromov2023grokking,mallinar2025emergence}. Consistent with this picture, in the FFN model, once the embeddings become aligned with the first-order signatures and form the corresponding modular structure, the model rapidly learns the task (Figure~\ref{fig:sig_emb} A, B).

The attention-only model provides a contrasting example (Figure~\ref{fig:sig_emb} A, C). Its learnable embeddings also become dominated by first-order signature information and exhibit a clear mathematical structure, yet the model fails to solve the task. Thus, a meaningful signature-aligned embedding geometry is not sufficient by itself: the downstream architecture must also be able to transform this representation into the target operation. In this setting, the attention-only model is not expressive enough to effectively exploit the learned signature structure.

\begin{figure}[htb]
    \centering
    \includegraphics[width=1\linewidth]{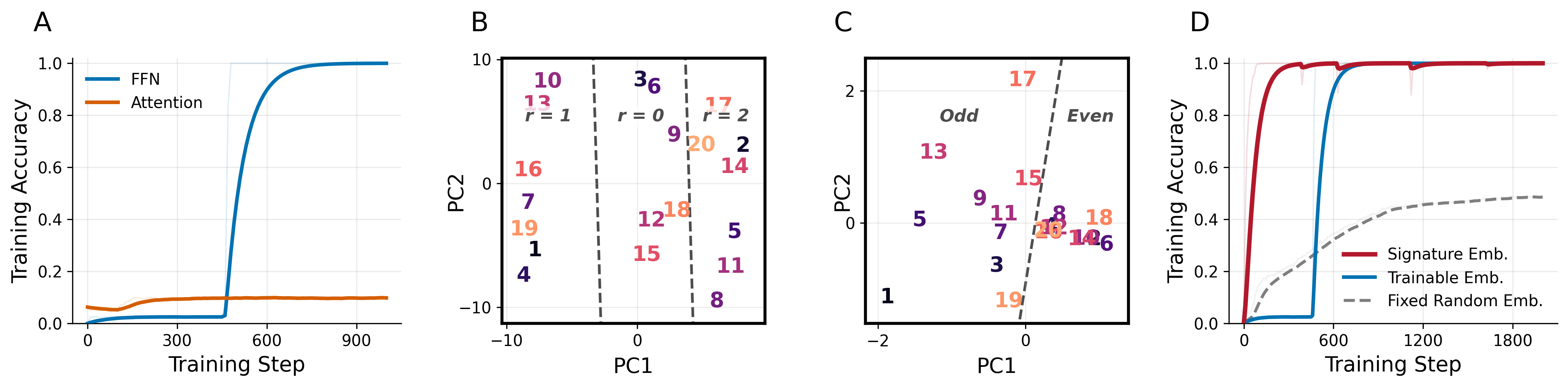}
    \caption{A: Training accuracy of the FFN model and the attention-only model. B: PCA projection of the final FFN token embeddings. C: PCA projection of the final attention-only token embeddings. D: Training accuracy of the FFN model with fixed first-order signature embeddings, trainable embeddings, and fixed random embeddings.}
    \label{fig:sig_emb}
\end{figure}

\paragraph{Task-relevant signatures can accelerate learning.}

We next test whether the signature itself contains information that can directly facilitate optimization. We train the FFN model with a fixed first-order signature embedding, where each token is represented by a random linear transformation of $\boldsymbol{\varphi}_{a}^{y;1}$ (see Appendix for details). The resulting model converges substantially faster than the standard trainable-embedding baseline (Figure~\ref{fig:sig_emb} D). This confirms that first-order signatures contain task-relevant information that can directly facilitate learning.

Interestingly, fixed random embeddings also outperform trainable embeddings at the early stage of this modular-addition task. In this setting, all number tokens have the same zeroth-order label signature. The early embedding updates therefore contain little information for distinguishing different tokens, whereas fixed random embeddings are approximately pairwise orthogonal and preserve token distinctions from the beginning. This suggests that not only the presence of signature information, but also the order at which discriminative information appears in the signature hierarchy, can affect the learning dynamics.

\paragraph{Signature order controls the onset of task learning.}

We test this prediction more systematically by considering binary modular addition $F_{\rm mod}$
\[
y=(x_1+x_2)\bmod M,
\qquad
x_1,x_2\in\{1,\ldots,N\},
\]
and varying the modulus $M$ while keeping $N$ fixed.

The $\vvarphi_{\alpha}^{y;0}$ in this task have a simple dependence on $M$ and $N$. When $M$ divides $N$, all tokens have identical zeroth-order label signatures. Their pairwise signature similarities are therefore all equal to one. In this case, the $\vvarphi_{\alpha}^{y;0}$ provides no information for distinguishing each token required by the task, and the model must rely on higher-order signature information to learn the modular rule.

When $M$ does not divide $N$, the residue classes contain unequal numbers of tokens. As a result, the zeroth-order signatures are no longer identical across all tokens and already provide discriminative information at the beginning of training. Their similarity matrices consequently exhibit a regular residue-dependent structure rather than being uniformly one.

We verify this prediction with $N=20$ and vary $M$ across a range of values. Figure~\ref{fig:modulus_learning} reports the first epoch at which the training accuracy reaches $95\%$, together with the pairwise similarity matrices of the $\vvarphi_{\alpha}^{y;0}$. A striking slowdown occurs precisely when $M$ divides $N$: for $M=4,5,10,$ and $20$, the similarity matrices are uniformly one and the model requires thousands of epochs to reach high accuracy. For the other tested values of $M$, the $\vvarphi_{\alpha}^{y;0}$ already distinguish different residue classes, and the model typically learns the task within only a few hundred epochs.

\begin{figure}[htb]
    \centering
    \includegraphics[width=1\linewidth]{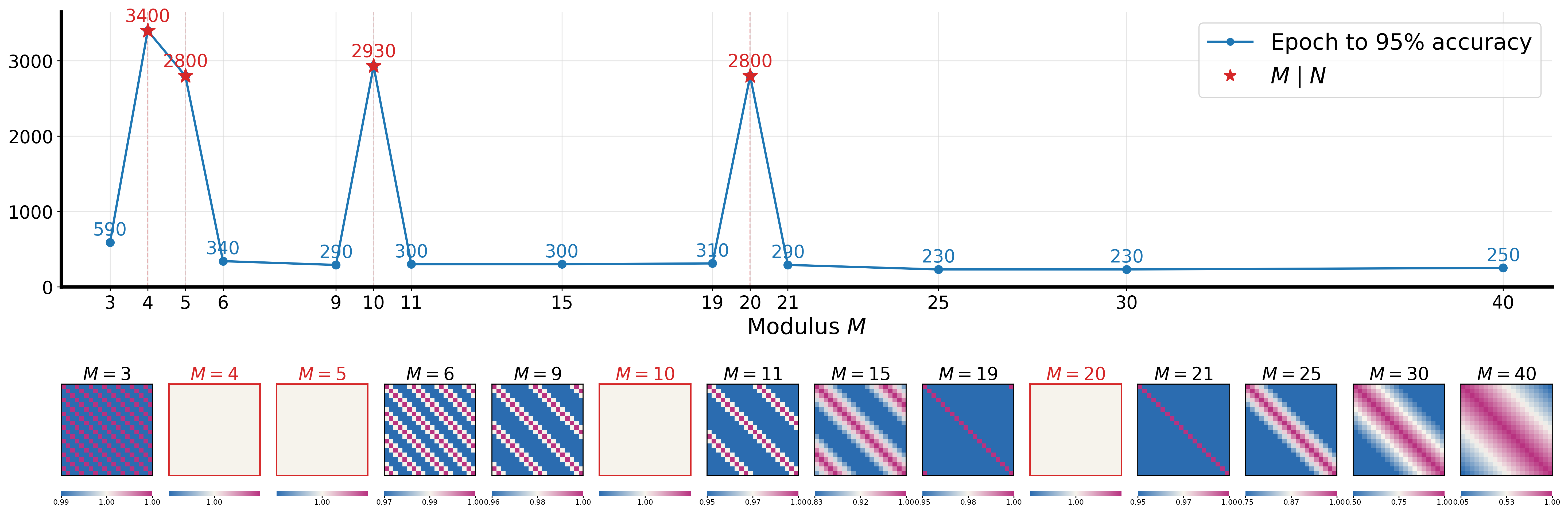}
    \caption{Learning speed of $F_{\rm mod}$ under different moduli $M$, with $N=20$. The vertical axis shows the first epoch at which training accuracy reaches $95\%$; gray points denote runs where $M | N$. The accompanying heatmaps show the pairwise similarities between zeroth-order signatures. }
    \label{fig:modulus_learning}
\end{figure}

These results establish a direct connection between the signature hierarchy and task learning. Signatures can provide useful representations for solving a task, but whether and when this information becomes useful depends both on the architecture that processes the embeddings and on the order at which task-relevant distinctions emerge in the signature hierarchy.

\subsection{An Example of Task Solving via Signatures}

We then use a simple order-selection task to illustrate how signatures can support task solving when the model is able to learn the task successfully.
Let the label space be $\mathcal{Y}=\{1,\dots,N\}$. 
Each input sequence has the form
\begin{equation*}
S=(c,x_1,x_2), \qquad c\in\{a,b\},\quad x_1,x_2\in\mathcal{Y},
\end{equation*}
where $x_1$ and $x_2$ are numerical tokens and $c$ is an anchor token that determines the selection rule. We take $c$ uniformly from $\{a,b\}$ and $x_1,x_2$ independently and uniformly from $\mathcal{Y}$. The target is defined as
\begin{equation*}
Y=f(S)=
\begin{cases}
\max(x_1,x_2), & c=a,\\[4pt]
\min(x_1,x_2), & c=b.
\end{cases}
\end{equation*}
Thus, the task is not simply to compare two numbers. 
Instead, the model must select one of the two order statistics according to the contextual anchor token $c$. This task is motivated by the familiar ambiguity in comparing numbers such as $3.9$ and $3.11$. When they are interpreted as decimal numbers, $3.9 > 3.11$. However, when they are interpreted as version numbers, $3.11 > 3.9$~\citep{xie2024order,yang2024number,chen2025steering}.

We now show that the superposition of zero-order signatures is sufficient to solve the task. For a sequence $(c,x_1,x_2)$, we define the signature-induced score for label $\nu$ as
\begin{equation*}
\Phi_{c,x_1,x_2}(y)=\vvarphi_c^{y;0}(y)+\vvarphi_{x_1}^{y;0}(y)+\vvarphi_{x_2}^{y;0}(y).
\end{equation*}
The prediction induced by zero-order signature superposition is
\begin{equation*}
\hat{y}=\arg\max_{y\in\mathcal{Y}}\Phi_{c,x_1,x_2}(y).
\end{equation*}

We first consider the contribution of the two numerical tokens. Fix a numerical token $x$ and denote the other numerical token by $z$, which is uniformly sampled from $\mathcal{Y}$. There are three cases. If $z<x$, the output is $x$ when the anchor is $a$, which occurs with probability $1/2$. If $z>x$, the output is $x$ when the anchor is $b$, again with probability $1/2$. Thus, whenever $z\neq x$, one of the two equally likely anchor choices outputs $x$, while the other outputs $z$. If $z=x$, both the maximum and minimum are equal to $x$, so the output is always $x$.

Equivalently, we can decompose the label distribution into two parts. One half of the probability mass is always assigned to $x$, corresponding to the anchor choice that selects $x$ when $z\neq x$. The other half follows the value of $z$. Since $z$ is uniformly distributed over $\mathcal{Y}$, this second part contributes $1/2N$ to every label, including $y=x$. Therefore,
\begin{equation*}
\vvarphi_x^{y;0}
=
\frac{1}{2}\mathbf{1}_{y=x}
+
\frac{1}{2N}.
\end{equation*}

Therefore, the sum of the two numerical signatures is
\begin{equation*}
\vvarphi_{x_1}^{y;0}+\vvarphi_{x_2}^{y;0}=\frac{1}{N}+\frac{1}{2}\mathbf{1}_{y=x_1}+\frac{1}{2}\mathbf{1}_{y=x_2}.
\end{equation*}
Thus, the two numerical tokens create two equal candidate peaks at $y=x_1$ and $y=x_2$, while all non-candidate labels only receive the constant background value $1/N$. Since the numerical signatures contribute an additional peak of height $1/2$ at $x_1$ and $x_2$, whereas the anchor signatures vary only at scale $O(1/N)$, the maximizer of $\Phi_{c,x_1,x_2}(y)$ lies among the two candidate labels.

Assume without loss of generality that $x_1<x_2$. When $c=a$, the anchor signature is   
\begin{equation*}
\vvarphi_a^{y;0}(y)=\frac{2y-1}{N^2},
\end{equation*}
which is monotonically increasing in $y$. The total signature score becomes
\begin{equation*}
\Phi_{a,x_1,x_2}(y)=\frac{2y-1}{N^2}+\frac{1}{N}+\frac{1}{2}\mathbf{1}_{y=x_1}+\frac{1}{2}\mathbf{1}_{y=x_2}.
\end{equation*}
The two candidate labels have scores
\begin{equation*}
\Phi_{a,x_1,x_2}(x_1)=\frac{2x_1-1}{N^2}+\frac{1}{N}+\frac{1}{2},\qquad \Phi_{a,x_1,x_2}(x_2)=\frac{2x_2-1}{N^2}+\frac{1}{N}+\frac{1}{2}.
\end{equation*}
Since $x_2>x_1$, we have
\begin{equation*}
\Phi_{a,x_1,x_2}(x_2)>\Phi_{a,x_1,x_2}(x_1).
\end{equation*}
Hence the maximum is attained at the larger candidate:
\begin{equation*}
\arg\max_{y\in\mathcal{Y}}\Phi_{a,x_1,x_2}(y)=x_2=\max(x_1,x_2).
\end{equation*}

When $c=b$, the anchor signature is
\begin{equation*}
\vvarphi_b^{y;0}(y)=\frac{2(N-y)+1}{N^2},
\end{equation*}
which is monotonically decreasing in $y$. The total signature score becomes
\begin{equation*}
\Phi_{b,x_1,x_2}(y)=\frac{2(N-y)+1}{N^2}+\frac{1}{N}+\frac{1}{2}\mathbf{1}_{y=x_1}+\frac{1}{2}\mathbf{1}_{y=x_2}.
\end{equation*}
The two candidate labels have scores
\begin{equation*}
\Phi_{b,x_1,x_2}(x_1)=\frac{2(N-x_1)+1}{N^2}+\frac{1}{N}+\frac{1}{2},\qquad \Phi_{b,x_1,x_2}(x_2)=\frac{2(N-x_2)+1}{N^2}+\frac{1}{N}+\frac{1}{2}.
\end{equation*}
Since $x_1<x_2$, we have
\begin{equation*}
\Phi_{b,x_1,x_2}(x_1)>\Phi_{b,x_1,x_2}(x_2).
\end{equation*}
Hence the maximum is attained at the smaller candidate:
\begin{equation*}
\arg\max_{y\in\mathcal{Y}}\Phi_{b,x_1,x_2}(y)=x_1=\min(x_1,x_2).
\end{equation*}

Therefore, the min--max task can be solved by the additive superposition of zero-order signatures. This construction does not claim that every trained model implements exactly this score; rather, it demonstrates that the statistical information contained in the signatures is sufficient to support the required prediction rule when the architecture can access it.

\begin{figure}[htbp]
    \centering
    \includegraphics[width=1\linewidth]{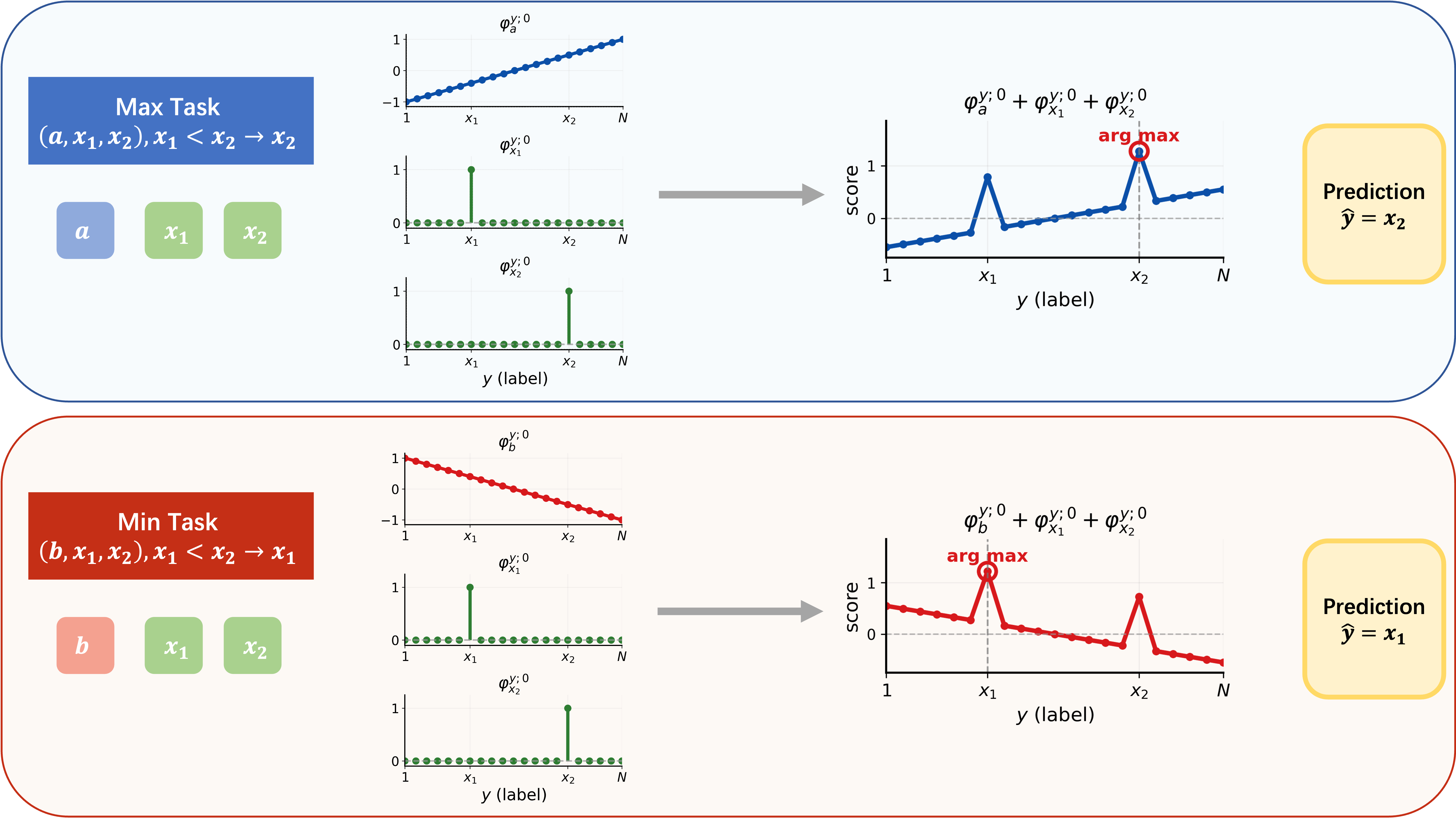}
    \caption{Mechanism of zero-order signature superposition in the min--max task.}
    \label{fig:min-max}
\end{figure}

\subsection{Signatures Encode Semantic Structure}
\label{sec:signature_semantics}
We further find that signatures themselves already contain interpretable semantic structures, suggesting that they provide an important statistical basis from which embeddings acquire semantic organization. Previous studies have extensively documented that learned word and language-model representations exhibit rich semantic geometry, including linear relational structures, semantic clustering, and low-dimensional organization of real-world concepts~\citep{mikolov2013linguistic,pennington2014glove,yaghoobzadeh2019probing,gurnee2024language}. Most of these studies, however, focus on characterizing or probing the resulting representation geometry. Here, we instead ask whether such structures can already be traced back to identifiable statistics of the training data.

To this end, we compute the next-token signatures $\vvarphi^{y;{\rm nt}}_{\alpha}$ directly from the same 36B-token corpus used to train our language model and visualize the signatures of several groups of tokens using PCA. As shown in Fig.~\ref{fig:signature_semantics} A, the signatures of numerical tokens form a continuous trajectory that approximately follows their numerical order. Figure~\ref{fig:signature_semantics} B presents the signatures of the seven weekdays, which exhibit a cyclic organization consistent with their periodic semantic relationship. Beyond ordered structures, signatures also encode categorical information. In Fig.~\ref{fig:signature_semantics} C, we jointly project the signatures of tokens from several semantic categories, including colors, family roles, countries, and programming languages. Tokens belonging to the same semantic category are naturally grouped together, while different categories occupy distinct regions of the projected space.

These results show that semantic geometries commonly observed in learned representations are already present in the token-conditioned statistics of the training corpus. Combined with our dynamical analysis showing how these signatures progressively enter the embedding updates, this provides a possible statistical origin for the emergence of semantic organization in the learned embedding space.

\begin{figure}[htb]
    \centering
    \includegraphics[width=\linewidth]{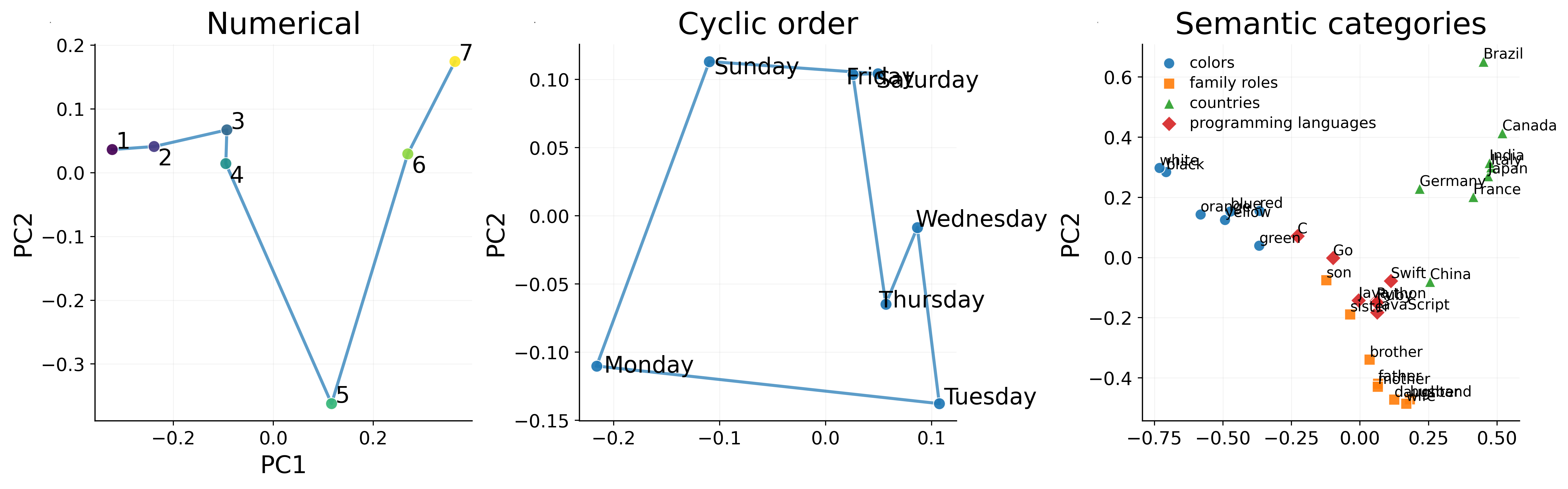}
    \caption{
    A: PCA projection of the signatures of numerical tokens, connected according to their numerical order.
    B: PCA projection of the signatures of weekdays, connected according to their cyclic order.
    C: Joint PCA projection of the signatures of tokens from different semantic categories, including colors, family roles, countries, and programming languages.
    }
    \label{fig:signature_semantics}
\end{figure}

\section{Conclusion}

In this work, we identified a progressive signature-alignment phenomenon in token-embedding dynamics. Across controlled sequence-prediction tasks, early embedding geometries are well explained by simple token-conditioned label distributions. Later in training, the effects of multiple higher-order contextual signatures become jointly visible rather than following a strict one-at-a-time replacement, with the highest accessible order providing the strongest contribution under the late-stage conditions of our analysis. The same evolution admits a spectral interpretation: low-order structure corresponds to zero-frequency projections of higher-order signatures, while task-specific nonzero-frequency components emerge later in training. This perspective also accounts for the distinct geometries obtained from different random seeds as different spectral selections within a shared signature hierarchy.

We used gradient-flow analysis to provide a mechanistic explanation for these observations. In feed-forward models, signatures of different orders appear explicitly in the embedding gradient, and small initialization suppresses higher-order contributions at the beginning of training. In self-attention models, position-specific signatures are modulated by attention scores, so learned routing determines which token statistics have the strongest effect on the embeddings. These results explain how the data distribution and computational architecture jointly determine the statistical components available to shape embedding updates, without requiring the learned geometry to be treated as a purely post-hoc object.

The controlled analyses further motivate path-specific hypotheses in full Transformers. In a real language-model training run, we find that early embedding geometry tracks next-token signatures associated with the direct and FFN paths, while attention-path representations track attention-weighted signatures across layers. The correlations subsequently decay as additional structures enter the model, consistent with the progressive nature of the signature-alignment picture.

Finally, we showed that learning a meaningful signature-aligned geometry is not by itself sufficient for solving a task. Whether the encoded structure is useful depends on the computations available after the embedding lookup. When the architecture can exploit the relevant signature information, signature-based embeddings can directly support prediction and accelerate optimization; when it cannot, a structured embedding space may emerge without successful task learning.

Overall, our results connect an empirical phenomenon in embedding geometry with an interpretable gradient-flow mechanism. The signature perspective provides a unified language for describing what statistical structures appear in token embeddings, how those structures change during training, how architectural paths modulate them, and when they become useful for task solving.

\acks{This work is sponsored by the National Key R$\&$D Program
of China Grant No. 2022YFA100\\8200  (Z. X.), the National Natural Science Foundation of China Grant No. 92570001 (Z. X.), 12371511 (Z. X.), 12422119 (Z. X.), 2025 Key Technology R\&D Program ``New Generation Information Technology'' Project 25511103100 of Shanghai Municipal Science and Technology Commission (Z. X.).}


\newpage

\appendix

\section{Experimental Configurations}
\label{app:experimental_configurations}

This section provides the complete experimental configurations for the controlled FFN experiments, the attention-only experiments, the signature-based embedding experiments, and the real language-model training experiment. Unless otherwise specified, all reported results are obtained using the common settings described below.

\subsection{FFN Experiments}
\label{app:ffn_experiments}

\subsubsection{FFN Architecture}
\label{app:ffn_architecture}

For the controlled FFN experiments, we set $d=128$ unless otherwise specified. The model contains no hidden bias and no output bias. No normalization layer, residual connection, dropout, or positional embedding is used.

For Figures~1--3 and Figure~7, the activation function is $\sigma(z)=\operatorname{ReLU}(z)$,
applied coordinate-wise. For the activation-order experiment in Figure~4, we instead use the monomial activation $\sigma(z)=z^{\odot K}$.

\subsubsection{Prescribed Label-Distribution Experiment}
\label{app:prescribed_distribution}

This experiment corresponds to Task~\ref{task:random_label} and Figure~1A. We use one anchor token $a=100$ and the context-token set $\mathcal{X}=\{0,1,\ldots,50\}$.
Each input sequence has length three and takes the form
\begin{equation}
    X=(a,x_1,x_2),
    \qquad x_1,x_2\in\mathcal{X}.
\end{equation}
The complete input set is
\begin{equation}
    \mathcal{D}_a
    =
    \left\{
    (a,x_1,x_2):
    x_1,x_2\in\mathcal{X}
    \right\},
\end{equation}
which contains $51^2=2601$ distinct input sequences.

For every input $X\in\mathcal{D}_a$, its target label is sampled from a prescribed distribution $p_a$:
\begin{equation}
    y(X)\sim p_a.
\end{equation}
The sampled labels are sampled once before training and kept fixed. The label vocabulary is $\mathcal{Y}=\fX$.

We consider the following four prescribed distributions.

\paragraph{Uniform distribution.}
The uniform distribution is defined by
\begin{equation}
    p_a(y)
    =
    \frac{1}{|\mathcal{Y}|},
    \qquad y\in\mathcal{Y}.
\end{equation}

\paragraph{Normal distribution.}
The unnormalized probability mass is defined as
\begin{equation}
    \widetilde{p}_a(y)
    =
    \exp\left(
    -\frac{(y-\mu)^2}{2\sigma^2}
    \right),
\end{equation}
where $\mu=25,\sigma=1$.
The discrete distribution is obtained by normalization:
\begin{equation}
    p_a(y)
    =
    \frac{\widetilde{p}_a(y)}
    {\sum_{\nu\in\mathcal{Y}}\widetilde{p}_a(\nu)}.
\end{equation}

\paragraph{Poisson distribution.}
The unnormalized probability mass is
\begin{equation}
    \widetilde{p}_a(y)
    =
    \frac{\lambda^y e^{-\lambda}}{y!},
\end{equation}
where $\lambda=10$. The distribution is truncated to $\mathcal{Y}$ and renormalized:
\begin{equation}
    p_a(y)
    =
    \frac{\widetilde{p}_a(y)}
    {\sum_{\nu\in\mathcal{Y}}\widetilde{p}_a(\nu)}.
\end{equation}

\paragraph{Exponential distribution.}
The unnormalized probability mass is
\begin{equation}
    \widetilde{p}_a(y)
    =
    \exp(-\lambda y),
\end{equation}
where $\lambda=5$. The distribution is normalized over $\mathcal{Y}$:
\begin{equation}
    p_a(y)
    =
    \frac{\widetilde{p}_a(y)}
    {\sum_{\nu\in\mathcal{Y}}\widetilde{p}_a(\nu)}.
\end{equation}

Figure~1A uses the checkpoint at epoch 200. The embedding-induced label distribution is computed as
\begin{equation}
    \widehat{p}_a
    =
    \operatorname{softmax}
    \left(
    \vw^{U}\vw^{E}_a
    \right).
\end{equation}
The Pearson correlation coefficient is computed between $p_a$ and $\widehat{p}_a$. The Kullback--Leibler divergence is computed as
\begin{equation}
    D_{\mathrm{KL}}(p_a\Vert\widehat{p}_a)
    =
    \sum_{y\in\mathcal{Y}}
    p_a(y)
    \log
    \frac{p_a(y)}
    {\widehat{p}_a(y)},
\end{equation}
and the Jensen--Shannon divergence is
\begin{equation}
    D_{\mathrm{JS}}(p_a,\widehat{p}_a)
    =
    \frac{1}{2}
    D_{\mathrm{KL}}(p_a\Vert m)
    +
    \frac{1}{2}
    D_{\mathrm{KL}}(\widehat{p}_a\Vert m),
    \qquad
    m=\frac{p_a+\widehat{p}_a}{2}.
\end{equation}

\subsubsection{Arithmetic Tasks}
\label{app:arithmetic_tasks}

For the arithmetic experiments, the input number-token set is
\begin{equation}
    \mathcal{X}_N
    =
    \{1,\ldots,N\}.
\end{equation}
Each input is an ordered pair
\begin{equation}
    X=(x_1,x_2),
    \qquad
    x_1,x_2\in\mathcal{X}_N.
\end{equation}
We use the complete input set
\begin{equation}
    \mathcal{D}
    =
    \mathcal{X}_N\times\mathcal{X}_N,
\end{equation}
containing $N^2$ samples. For Figure~1, we set $N=20$. and  uses the checkpoint at epoch 100.

\subsubsection{Addition and Modular-Addition Dynamics}
\label{app:addition_modular_dynamics}

For Figures~2 and~3, we set $N=40$.
The input set is again
\begin{equation}
    \mathcal{X}_{40}
    =
    \{1,\ldots,40\}.
\end{equation}

The addition model and the modular-addition model are trained for 1200 epochs. The early and late epochs shown in Figure~2 are 100 and 1100. The epochs used in Figure~3 are 100 and 1000.

\subsubsection{Optimized First-Order Signature Projection}
\label{app:signature_probe}

For each token $\alpha$, the first-order label signature is represented as a matrix
\begin{equation}
    \boldsymbol{\varphi}^{y;1}_\alpha
    \in
    \mathbb{R}^{|\mathcal{Y}|\times|\mathcal{X}|}.
\end{equation}
Given a probe vector
\begin{equation}
    \mathbf{v}
    \in
    \mathbb{R}^{|\mathcal{X}|},
\end{equation}
we define the projected signature
\begin{equation}
    \mathbf{h}^{(1)}_\alpha(\mathbf{v})
    =
    \boldsymbol{\varphi}^{y;1}_\alpha\mathbf{v}
    \in
    \mathbb{R}^{|\mathcal{Y}|}.
\end{equation}
The induced pairwise similarity is
\begin{equation}
    S^{\mathrm{sig},(1)}_{a,b}(\mathbf{v})
    =
    \frac{
    \left\langle
    \mathbf{h}^{(1)}_a(\mathbf{v}),
    \mathbf{h}^{(1)}_b(\mathbf{v})
    \right\rangle
    }{
    \left\|
    \mathbf{h}^{(1)}_a(\mathbf{v})
    \right\|_2
    \left\|
    \mathbf{h}^{(1)}_b(\mathbf{v})
    \right\|_2
    }.
\end{equation}

For every epoch $t$, the probe vector is optimized independently by solving
\begin{equation}
    \mathbf{v}^{(t)}
    =
    \arg\max_{\mathbf{v}}
    \operatorname{Corr}
    \left(
    \operatorname{vec}_{\triangle}
    \left(
    \mathbf{S}^{\mathrm{sig},(1)}(\mathbf{v})
    \right),
    \operatorname{vec}_{\triangle}
    \left(
    \mathbf{S}^{\mathrm{emb},(t)}
    \right)
    \right),
\end{equation}
where $\operatorname{vec}_{\triangle}$ extracts the strictly upper-triangular entries of a symmetric matrix.

The correlation objective is Pearson. The probe vector is initialized as
\begin{equation}
    v_i
    \sim
    \mathcal N (0,|\fX|^{-0.5}).
\end{equation}
It is optimized using Adam with learning rate 0.01 for 300 steps. The probe vector is trained using five-fold cross-validation over token pairs. For each epoch, it is optimized on four folds and evaluated on the remaining fold, and the reported results are averaged over the five held-out folds.

\subsubsection{Fourier Analysis of the First-Order Signature}
\label{app:fourier_analysis}

For Figure~3, we use the real cosine basis
\begin{equation}
    \mathbf{c}^{\cos}_k
    =
    \left(
    \cos\left(\frac{2\pi k j}{p}\right)
    \right)_{j=0}^{p-1},
    \qquad
    k=0,1,\ldots,\frac{p}{2},
\end{equation}
where
\begin{equation}
    p=N=40.
\end{equation}

For each token $\alpha$ and frequency $k$, the projected signature is
\begin{equation}
    \mathbf{h}^{(1)}_{\alpha,k}
    =
    \boldsymbol{\varphi}^{y;1}_\alpha
    \mathbf{c}^{\cos}_k.
\end{equation}
Its pairwise cosine-similarity matrix is
\begin{equation}
    S^{\mathrm{sig},(1)}_{k;a,b}
    =
    \frac{
    \left\langle
    \mathbf{h}^{(1)}_{a,k},
    \mathbf{h}^{(1)}_{b,k}
    \right\rangle
    }{
    \left\|
    \mathbf{h}^{(1)}_{a,k}
    \right\|_2
    \left\|
    \mathbf{h}^{(1)}_{b,k}
    \right\|_2
    }.
\end{equation}
The agreement with the embedding geometry is measured by
\begin{equation}
    \rho^{(t)}_{(1),k}
    =
    \operatorname{Corr}
    \left(
    \operatorname{vec}_{\triangle}
    \left(
    \mathbf{S}^{\mathrm{sig},(1)}_k
    \right),
    \operatorname{vec}_{\triangle}
    \left(
    \mathbf{S}^{\mathrm{emb},(t)}
    \right)
    \right).
\end{equation}
We use Pearson correlation. The heatmaps include epochs sampled every 100 epochs.

For the PCA visualizations, the selected late-stage frequencies are $k=20$
for the addition task and $k=19$
for the modular-addition task. These frequencies are selected because they maximize the correlation at the displayed epoch.

We analyze only the cosine components because the zero-frequency sine basis is identically zero and therefore cannot represent the required zero-frequency projection.

\subsubsection{Activation-Order Sweep}
\label{app:activation_order}

This experiment corresponds to Figure~4. We consider modular-addition sequences of length
$L\in\{2,3,4\}$,
and use the monomial activation
$\sigma(z)=z^{\odot K},
    K\in\{1,2,3,4,5,6\}$. For a sequence
\begin{equation}
    X=(x_1,\ldots,x_L),
    \qquad
    x_i\in\mathcal{X}_N,
\end{equation}
the target is
\begin{equation}
    y
    =
    1+\left(
    \sum_{i=1}^{L}x_i
    \right)
    \bmod N,
\end{equation}
where $N=50$. For each pair $(L,K)$, the training set contains all $N^L$ sequences. Each model is trained for 1000 epochs.

\subsection{Attention-Only Experiments}
\label{app:attention_experiments}

\subsubsection{Attention Architecture}
\label{app:attention_architecture}

We use $d=128, d_k=64$. The attention model uses causal masking. It contains learned positional embeddings. It contains no residual connection, no normalization, and no FFN module. The attention and output layers use no bias.

\subsubsection{Attention Arithmetic Experiments}
\label{app:attention_arithmetic}

The addition and modular-addition tasks in Figure~5A--H use $N=20$.
The input dataset contains all $N^2$ ordered pairs. The models are trained for 10000 epochs. The early and late epochs visualized in Figure~5 are 100 and 10000.

\subsubsection{Anchor--Noise Parity Task}
\label{app:parity_task}

This experiment corresponds to Figure~5I--L. We use the anchor-token set
\begin{equation}
    \mathcal{A}
    =
    \{11,12,\ldots,20\}
\end{equation}
and the noise-token set
\begin{equation}
    \mathcal{R}
    =
    \{21,22,\ldots,60\}.
\end{equation}

Each sequence has length 8.
A sequence contains one anchor token $a\in\mathcal{A}$ and $L-1$ noise tokens sampled from $\mathcal{R}$:
\begin{equation}
    X
    =
    (r_1,\ldots,r_{m-1},a,r_{m+1},\ldots,r_L).
\end{equation}
The anchor position $m$ is sampled uniformly from $\{1,\ldots,L\}$]. The anchor token is sampled uniformly from $\mathcal{A}$. Noise tokens are sampled independently with replacement from $\mathcal{R}$.

The training set is generated once with 10000 samples. The model is trained for 200 epochs.

For a sequence $X$, the attention mass assigned to the anchor token is
\begin{equation}
    A_{\mathrm{anchor}}(X)
    =
    \sum_{s=1}^{L}
    a_s\mathbf{1}_{\{x_s\in\mathcal{A}\}},
\end{equation}
and the attention mass assigned to the noise tokens is
\begin{equation}
    A_{\mathrm{noise}}(X)
    =
    \sum_{s=1}^{L}
    a_s\mathbf{1}_{\{x_s\in\mathcal{R}\}}.
\end{equation}
The curves in Figure~5K report the average over the full dataset of these quantities. For a single-anchor sequence, $A_{\mathrm{anchor}}(X)$ is the attention weight assigned to the anchor position.

The average embedding norms are computed as
\begin{equation}
    E_{\mathrm{anchor}}(t)
    =
    \frac{1}{|\mathcal{A}|}
    \sum_{\alpha\in\mathcal{A}}
    \left\|
    \vw^{E,(t)}_\alpha
    \right\|_2,
\end{equation}
and
\begin{equation}
    E_{\mathrm{noise}}(t)
    =
    \frac{1}{|\mathcal{R}|}
    \sum_{\alpha\in\mathcal{R}}
    \left\|
    \vw^{E,(t)}_\alpha
    \right\|_2.
\end{equation}

\subsection{Signature-Based Embedding Experiments}
\label{app:signature_embedding}

This section provides the details of the fixed signature-embedding experiment shown in Figure~7D.

\subsubsection{Fixed First-Order Signature Embedding}
\label{app:fixed_signature_embedding}

For every token $\alpha\in\mathcal{X}$, we first compute its empirical first-order label signature
\begin{equation}
    \boldsymbol{\varphi}^{y;1}_\alpha
    \in
    \mathbb{R}^{|\mathcal{Y}|\times|\mathcal{X}|}.
\end{equation}
The fixed signature embedding is constructed using a random linear projection:
\begin{equation}
    \vw^{E,\mathrm{sig}}_\alpha
    =
    \lambda \mathbf{R}\vvarphi^{y;1}_\alpha\vv,
\end{equation}
where
\begin{equation}
    \lambda\in\sR,\mathbf{R}
    \in
    \mathbb{R}^{d\times|\mathcal{Y}|}, \vv\in\sR^{|\fX|}.
\end{equation}
The entries of $\mathbf{R}$ and $\vv$ are sampled independently and fixed during training. The scale parameter $\lambda$ are trainable.

\subsubsection{Baselines}
\label{app:signature_embedding_baselines}

We compare the following two embedding settings.

\paragraph{Trainable embedding.}
The input embedding matrix is initialized according to the common initialization rule and optimized jointly with the remaining model parameters.

\paragraph{Fixed random embedding.}
Each token is assigned a random embedding
\begin{equation}
    \vw^{E,\mathrm{rand}}_\alpha=\lambda\vv_\alpha,
\end{equation}
where $\vv_\alpha\in\sR^{d}$ remains fixed throughout training and the scale parameter $\lambda$ is trainable.

\subsubsection{FFN and Attention Comparison in Figure~7}
\label{app:ffn_attention_comparison}

The experiment in Figure~7A--C uses the task $F_{\rm mod}$ with $L=2, N=40$.
The models are trained for 1000 epochs using identical optimization settings. The PCA projections in Figure~7B and Figure~7C use the final epochs at epoch 1000.

\subsection{LLMs}

\paragraph{Model Architecture}
We use a 0.7B dense decoder-only Transformer throughout the language-model experiments. The model follows a standard decoder-only Transformer architecture with a pre-norm design. Compared with the original GPT-3 architecture~\cite{brown2020language}, we replace LayerNorm with RMSNorm and the standard MLP with SwiGLU. The model consists of 24 Transformer layers with hidden size 1024, 16 attention heads, attention head dimension 96, and an MLP intermediate dimension of 4096. Rotary Position Embeddings (RoPE) are adopted for positional encoding, and the vocabulary size is 60,416. The model contains approximately 680M non-embedding parameters.

\paragraph{Training Methods}
We train the model using the standard next-token prediction objective with Megatron-LM~\cite{megatron-lm} and the AdamW optimizer. We use a learning-rate ratio $r_{\mathrm{lr}}=0.1$, a warm-up ratio $r_{\mathrm{warmup}}=0.03$, a peak learning rate of $\eta_{\mathrm{peak}}=2.5\times10^{-4}$, and a global batch size of $B=256$.

Training is performed using bfloat16 mixed precision, while gradient accumulation is carried out in fp32 precision. Both attention dropout and hidden dropout are disabled throughout training.

\paragraph{Data}
The experiments were conducted on a high-quality bilingual (Chinese–English) corpus containing 36B tokens and spanning multiple domains, including 13.5B web tokens, 9B Wikipedia tokens, 4.5B code tokens, and 9B mathematics tokens.

\section{Theoretical Details}
\subsection{Proof of Theorem~\ref{thm:grad_ffn_y},\ref{thm:grad_ffn_p}}
Recalling that
\begin{equation*}
\vf_{\mathrm{ffn}}(X):=\vW^U\vg_{\mathrm{ffn}}(X)=\vW^U\sigma\left(\sum_{i=1}^{L}\vw^E_{x_i}\right).
\end{equation*}
With the Assumption~\ref{assump:polynomial}, we have that
\begin{equation*}
\sigma'\left(\sum_{i=1}^L\vw^E_{x_i}\right)=\sum_{k=1}^{K}kC_k \left(\sum_{i=1}^L\vw^E_{x_i}\right)^{\odot(k-1)}.
\end{equation*}
Therefore,
\begin{equation*}
\left.\frac{d\vw^E_\alpha}{dt}\right|^y=Lr_\alpha\sum_{k=1}^{K}kC_k\vJ^y_{\alpha,k},\qquad \left.\frac{d\vw^E_\alpha}{dt}\right|^p=Lr_\alpha\sum_{k=1}^{K}kC_k\vJ^p_{\alpha,k},
\end{equation*}
where
\begin{equation*}
\vJ^y_{\alpha,k}:=\mathbb E\left[\left(\sum_{i=1}^L\vw^E_{x_i}\right)^{\odot(k-1)}\odot \vW^{U,T}\ve_y\mid \alpha\right],
\end{equation*}
and
\begin{equation*}
\vJ^p_{\alpha,k}:=\mathbb E\left[\left(\sum_{i=1}^L\vw^E_{x_i}\right)^{\odot(k-1)}\odot \vW^{U,T}\vp(X)\mid \alpha\right].
\end{equation*}

\paragraph{Label-driven term.}

Conditioned on $\alpha$, we write the as
\begin{equation*}
\sum_{i=1}^L\vw^E_{x_i}=\vw^E_\alpha+\sum_{j=1}^{L-1}\vw^E_{z_j}.
\end{equation*}
For the $k$-th polynomial term, the coordinate-wise multinomial expansion gives
\begin{align}
\left(\sum_{i=1}^L\vw^E_{x_i}\right)^{\odot(k-1)}=\sum_{m=0}^{\min\{L-1,k-1\}}\sum_{\substack{l\geq0,n_1,\cdots,n_m\geq1\\l+\sum_{r=1}^mn_r=k-1}}\frac{(k-1)!}{l!\prod_{j\in J}n_j!}(\vw^E_\alpha)^{\odot l}\odot\sum_{J=\left\{j_1,\cdots,j_m\right\}\subseteq[L-1]}\bigodot_{r=1}^{m}(\vw^E_{z_{j_r}})^{\odot n_r}.
\end{align}
Let $\vw^U_\nu:=\vW^{U,T}\ve_\nu\in\mathbb R^d$ denote the unembedding vector of token $\nu$. Substituting the above expansion into $\vJ^y_{\alpha,k}$ yields
\begin{align}
\vJ^y_{\alpha,k}=&\sum_{m=0}^{\min\{L-1,k-1\}}\sum_{\substack{l\geq0,n_1,\cdots,n_m\geq1\\l+\sum_{r=1}^mn_r=k-1}}\frac{(k-1)!}{l!\prod_{r=1}^mn_r!}(\vw^E_\alpha)^{\odot l}\odot \nonumber\\
&\quad \sum_{J=\left\{j_1,\cdots,j_m\right\}\subseteq[L-1]}\mathbb E\left[\vw^U_{\nu}\odot\bigodot_{r=1}^{m}(\vw^E_{z_{j_r}})^{\odot n_r}\mid \alpha\right].
\end{align}
For $J=\{j_1,\ldots,j_m\}$, it's noted that
\begin{equation*}
    \mathbb E\left[\vw^U_{\nu}\odot\bigodot_{r=1}^{m}(\vw^E_{z_{j_r}})^{\odot n_r}\mid \alpha\right] = \sum_{\nu,\beta_r\in\fV}\mathbb P\left(y=\nu,z_{j_r}=\beta_r\mid\alpha\right)\vw^U_{\nu}\odot\bigodot_{r=1}^{m}(\vw^E_{\beta_r})^{\odot n_r}.
\end{equation*}
After summing over all exponent assignments, the resulting kernel is symmetric in its $m$ context indices. Hence averaging over ordered distinct tuples is equivalent to averaging over unordered subsets, and the sum over the $\binom{L-1}{m}$ subsets can be written as
\begin{align*}
    \vJ^y_{\alpha,k}
    =\sum_{m=0}^{\min\{L-1,k-1\}}
    \binom{L-1}{m}
    \sum_{\nu,\beta_1,\ldots,\beta_m\in\fV}
    \varphi_\alpha^{y;m}(\nu,\beta_1,\ldots,\beta_m)\\
    \qquad\times
    \left[
    \sum_{\substack{l\geq0,n_1,\ldots,n_m\geq1\\l+\sum_{r=1}^mn_r=k-1}}
    \frac{(k-1)!}{l!\prod_{r=1}^{m}n_r!}
    (\vw^E_\alpha)^{\odot l}\odot\vw^U_\nu\odot
    \bigodot_{r=1}^{m}(\vw^E_{\beta_r})^{\odot n_r}
    \right].
\end{align*}
Define that $\fW^{y,m,k}_{\alpha}\in\left(\sR^{d}\right)^{\underbrace{V\times\cdots\times V}_{m+1}}$ is a tensor with order $m+1$, whose elements is a $d$-dimensional vector with
\begin{equation*}
    \fW^{y,m,k}_{\alpha}\left(\nu,\beta_1,\cdots,\beta_m\right)=\binom{L-1}{m}\sum_{\substack{l\geq0,n_1,\cdots,n_m\geq1\\l+\sum_{r=1}^mn_r=k-1}}\frac{(k-1)!}{l!\prod_{r=1}^{m}n_r!}(\vw^E_\alpha)^{\odot l}\odot\vw^U_{\nu}\odot\bigodot_{r=1}^{m}(\vw^E_{\beta_r})^{\odot n_r},
\end{equation*}
then we have that
\begin{align*}
    \vJ^y_{\alpha,k} = \sum_{m=0}^{\min\left\{L-1,k-1\right\}} \fW^{y,m,k}_{\alpha}\cdot \vvarphi_{\alpha}^{y;m}, \quad \left.\frac{d\vw^E_\alpha}{dt}\right|^y = Lr_\alpha\sum_{k=1}^{K}kC_k\vJ_{\alpha,k}^y=Lr_\alpha\sum_{m=0}^{\min\left\{L-1,K-1\right\}} \fA_{\alpha,m}\cdot\vvarphi_{\alpha}^{y;m},
\end{align*}
which adimits the Theorem~\ref{thm:grad_ffn_y}.

\paragraph{Prediction-induced term.}

We next derive the operator associated with the prediction-induced term:
\begin{equation*}
\vJ^p_{\alpha,k}=\mathbb E\left[\left(\sum_{i=1}^L\vw^E_{x_i}\right)^{\odot(k-1)}\odot \vW^{U,T}\vp(X)\mid \alpha\right].
\end{equation*}
At the early stage of training, the logits are small. Therefore, we use the first-order softmax expansion around the origin: $\vp(X)=\frac{1}{V}\vone+O(\|\vf(X)\|)$
Since $\vf(X)=\vW^U\sigma(\vh(X))$, we obtain
\begin{equation*}
\vW^{U,T}\vp(X)=\vb_0+O(\|\vf(X)\|),
\end{equation*}
where $\vb_0:=\frac{1}{V}\vW^{U,T}\vone$. Using the same multinomial expansion of $\vh(X)^{\odot(k-1)}$, we get
\begin{align*}
\vJ^{p}_{\alpha,k}=&\sum_{m=0}^{\min\{L-1,k-1\}}\sum_{\substack{l\geq 0,\ n_1,\cdots,n_m\geq 1\\ l+\sum_{r=1}^mn_r=k-1}}\sum_{\substack{J=\left\{j_1,\cdots,j_m\right\}\subseteq[L-1]}}\frac{(k-1)!}{l!\prod_{r=1}^mn_r!} \nonumber\\
&\quad \mathbb E\left[\vb_0\odot(\vw^E_\alpha)^{\odot l}\odot\bigodot_{r=1}^m(\vw^E_{z_{j_r}})^{\odot n_r}\mid \alpha\right].
\end{align*}
For $J=\{j_1,\ldots,j_m\}$
\begin{align*}
&\mathbb E\left[\vb_0\odot(\vw^E_\alpha)^{\odot l}\odot\bigodot_{r=1}^m(\vw^E_{z_{j_r}})^{\odot n_r}\mid \alpha\right]\nonumber\\
=&\sum_{\beta_1,\ldots,\beta_m\in\fV}\mathbb P\left(z_{j_r}=\beta_r\mid\alpha\right)\vb_0\odot(\vw^E_\alpha)^{\odot l}\odot\bigodot_{r=1}^m(\vw^E_{\beta_r})^{\odot n_r} .
\end{align*}
After the same symmetrization over context positions, this becomes
\begin{align*}
    \vJ^{p}_{\alpha,k}=&\sum_{m=0}^{\min\{L-1,k-1\}}\binom{L-1}{m}
    \sum_{\beta_1,\ldots,\beta_m\in\fV}
    \varphi_\alpha^m(\beta_1,\ldots,\beta_m)\\
    &\left[\sum_{\substack{l\geq 0,\ n_1,\ldots,n_m\geq 1\\ l+\sum_{r=1}^mn_r=k-1}}\frac{(k-1)!}{l!\prod_{r=1}^mn_r!}\vb_0\odot(\vw^E_\alpha)^{\odot l}\odot\bigodot_{r=1}^m(\vw^E_{\beta_r})^{\odot n_r}\right].
\end{align*}
Therefore, define $\mathcal U^{m,k}_{\alpha}\in\left(\sR^d\right)^{\underbrace{V\times\cdots\times V}_m}$ is a tensor with order $m$, whose element is a $d$-dimensional vector with
\begin{equation*}
\mathcal U^{m,k}_{\alpha}\left(\beta_1,\ldots,\beta_m\right):=\binom{L-1}{m}\sum_{\substack{l\geq 0,\ n_1,\ldots,n_m\geq 1\\ l+\sum_{r=1}^mn_r=k-1}}\frac{(k-1)!}{l!\prod_{r=1}^mn_r!}\vb_0\odot(\vw^E_\alpha)^{\odot l}\odot\bigodot_{r=1}^m(\vw^E_{\beta_r})^{\odot n_r}.
\end{equation*}
Then we have
\begin{equation*}
    \vJ_{\alpha,k}^{p} = \sum_{m=0}^{\min\left\{L-1,k-1\right\}} \fU_{\alpha}^{m,k}\cdot \vvarphi_{\alpha}^m,\quad
    \left.\frac{d\vw^E_\alpha}{dt}\right|^p = Lr_\alpha\sum_{m=1}^{\min\left\{L-1,K-1\right\}}\fB_{\alpha}^{m}\cdot\vvarphi_{\alpha}^{m}+\vR_{\alpha}.
\end{equation*}
where $\fB_{\alpha,m}=\sum_{k=1}^KkC_k\fU_{\alpha}^{m,k}$.

\subsection{Supplementary of Remark 1}\label{app:second_order_softmax}
In this section, we will further discuss the formulation of $\left.\frac{d\vw^E_\alpha}{dt}\right|^p$ in $\vf_{\rm ffn}$, with an expansion to the linear term of the softmax function and we will show that more higher-order signatures are induced. Recall that:
\begin{equation*}
\vJ^p_{\alpha,k}=\mathbb E\left[\left(\sum_{i=1}^L\vw^E_{x_i}\right)^{\odot(k-1)}\odot \vW^{U,T}\vp(X)\mid \alpha\right].
\end{equation*}
At the early stage of training, we use the first-order softmax expansion around the origin:
\begin{equation*}
\vp(X)=\frac{1}{V}\vone+\frac{1}{V}\vH\vf(X)+O(\|\vf(X)\|^2),
\end{equation*}
where
\begin{equation*}
\vH:=\vI_V-\frac{1}{V}\vone\vone^T .
\end{equation*}
Since $\vf(X)=\vW^U\sigma(\vh(X))$, we obtain
\begin{equation*}
\vW^{U,T}\vp(X)=\vb_0+\frac{1}{V}\vM\sigma\left(\sum_{i=1}^L\vw^E_{x_i}\right)+O(\|\vf(X)\|^2),
\end{equation*}
where
\begin{equation*}
\vb_0:=\frac{1}{V}\vW^{U,T}\vone,\qquad \vM:=\vW^{U,T}\vH\vW^U .
\end{equation*} Thus $\vJ^p_{\alpha,k}$ decomposes into a uniform-prediction part and a linearized-logit part:
\begin{equation*}
\vJ^p_{\alpha,k}=\vJ^{p,0}_{\alpha,k}+\vJ^{p,1}_{\alpha,k}+\vR^p_{\alpha,k},
\end{equation*}
where
\begin{equation*}
\vJ^{p,0}_{\alpha,k}:=\mathbb E\left[\left(\sum_{i=1}^L\vw^E_{x_i}\right)^{\odot(k-1)}\odot \vb_0\mid \alpha\right],
\end{equation*}
\begin{equation*}
\vJ^{p,1}_{\alpha,k}:=\frac{1}{V}\mathbb E\left[\left(\sum_{i=1}^L\vw^E_{x_i}\right)^{\odot(k-1)}\odot \vM\sigma(\vh(X))\mid \alpha\right],
\end{equation*}
and $\vR^p_{\alpha,k}$ contains the higher-order softmax remainder. The analysis on $\vJ^{p,0}_{\alpha,k}$ has been completed in the previous section.

\paragraph{Linearized-logit part.}

Using $\sigma\left(\sum_{i=1}^L\vw^E_{x_i}\right)=\sum_{q=1}^{K}C_q\left(\sum_{i=1}^L\vw^E_{x_i}\right)^{\odot q}$, we have
\begin{equation*}
\vJ^{p,1}_{\alpha,k}=\frac{1}{V}\sum_{q=1}^{K}C_q\mathbb E\left[\left(\sum_{i=1}^L\vw^E_{x_i}\right)^{\odot(k-1)}\odot \vM\left(\sum_{i=1}^L\vw^E_{x_i}\right)^{\odot q}\mid \alpha\right].
\end{equation*}
It's noted that
\begin{align*}
    &\mathbb E\left[\left(\sum_{i=1}^L\vw^E_{x_i}\right)^{\odot(k-1)}\odot \vM\left(\sum_{i=1}^L\vw^E_{x_i}\right)^{\odot q}\mid \alpha\right]\\
   =& \sum_{m=0}^{\min\left\{L-1,k-1+q\right\}}\sum_{\substack{l_1,l_2\geq0,n_{1}^{(1)},\ldots,n_{m_1}^{(1)},n_{1}^{(2)},\ldots,n_{m_2}^{(2)}\geq1\\m_1+m_2=m\\l_1+l_2+\sum_{r=1}^{m_1}n^{(1)}_r+\sum_{r=1}^{m_2}n^{(2)}_r=k-1+q}}\sum_{\substack{J_1=\left\{j_{1}^{(1)},\ldots,j_{m_1}^{(1)}\subseteq[L-1]\right\}\\J_2=\left\{j_{1}^{(1)},\ldots,j_{m_2}^{(2)}\subseteq[L-1]\right\}\\J_1\cap J2=\emptyset}}\frac{(k-1)!}{l_1!\prod_{r=1}^{m_1}n^{(1)}_r!}\frac{q!}{l_2!\prod_{r=1}^{m_2}n^{(2)}_r!}\\\quad &\mathbb E\left[(\vw^E_{\alpha})^{l_1}\odot\bigodot_{r=1}^{m_1}(\vW^E_{z_{j^{(1)}_r}})^{\odot n^{(1)}_r}\odot\vM\left((\vw^E_{\alpha})^{l_2}\odot\bigodot_{r=1}^{m_2}(\vW^E_{z_{j^{(2)}_r}})^{\odot n^{(2)}_r}\right)\mid\alpha\right]
\end{align*}
since
\begin{align*}
    &\mathbb E\left[(\vw^E_{\alpha})^{l_1}\odot\bigodot_{r=1}^{m_1}(\vW^E_{z_{j^{(1)}_r}})^{\odot n^{(1)}_r}\odot\vM\left((\vw^E_{\alpha})^{l_2}\odot\bigodot_{r=1}^{m_2}(\vW^E_{z_{j^{(2)}_r}})^{\odot n^{(2)}_r}\right)\mid\alpha\right]\\
    =&\sum_{\beta_1,\beta_2,\ldots,\beta_m\in\fV}\mathbb P\left(z_{j^{(1)}_1}=\beta_1,\ldots,z_{j^{(1)}_{m_1}}=\beta_{m_1},z_{j^{(2)}_1}=\beta_{m_1+1},\ldots,z_{j^{(2)}_{m_2}}=\beta_m\mid \alpha\right)\\
    &\qquad (\vw^E_{\alpha})^{l_1}\odot\bigodot_{r=1}^{m_1}(\vW^E_{\beta_r})^{\odot n^{(1)}_r}\odot\vM\left((\vw^E_{\alpha})^{l_2}\odot\bigodot_{r=1}^{m_2}(\vW^E_{\beta_{m_1+r}})^{\odot n^{(2)}_r}\right),
\end{align*}
Define that $\fU_{\alpha,1}^{m,k}\in \left(\sR^d\right)^{\underbrace{V\times\ldots\times V}_m}$ with element
\begin{align*}
    \fU_{\alpha,1}^{m,k}\left(\beta_1,\ldots,\beta_m\right) = &\binom{L-1}{m}\frac{1}{V}\sum_{q=\max\left\{1,m-k+1\right\}}^K\sum_{\substack{l_1,l_2\geq0,n_{1}^{(1)},\ldots,n_{m_1}^{(1)},n_{1}^{(2)},\ldots,n_{m_2}^{(2)}\geq1\\m_1+m_2=m\\l_1+l_2+\sum_{r=1}^{m_1}n^{(1)}_r+\sum_{r=1}^{m_2}n^{(2)}_r=k-1+q}}\frac{(k-1)!}{l_1!\prod_{r=1}^{m_1}n^{(1)}_r!}\frac{q!}{l_2!\prod_{r=1}^{m_2}n^{(2)}_r!}\\&(\vw^E_{\alpha})^{l_1}\odot\bigodot_{r=1}^{m_1}(\vW^E_{\beta_r})^{\odot n^{(1)}_r}\odot\vM\left((\vw^E_{\alpha})^{l_2}\odot\bigodot_{r=1}^{m_2}(\vW^E_{\beta_{m_1+r}})^{\odot n^{(2)}_r}\right) \in\sR^d,
\end{align*}
then we have 
\begin{equation*}
    \vJ_{\alpha,k}^{p,1}=\sum_{m=0}^{\min\left\{L-1,k-1+K\right\}}\fU_{\alpha,1}^{m,k}\cdot \vvarphi_{\alpha}^{m}
\end{equation*}

Combining the uniform-prediction and linearized-logit parts, we write
\begin{equation*}
\vJ^p_{\alpha,k}=\sum_{m=0}^{\min\left\{L-1,k-1\right\}}\fU_{\alpha,0}^{m,k}\cdot\vvarphi^m_\alpha+\sum_{m=0}^{\min\left\{L-1,k-1+K\right\}}\fU_{\alpha,1}^{m,k}\vvarphi_{\alpha}^m+\vR_{\alpha}^{p,k}.
\end{equation*}
Summation of $k$, we obtain
\begin{align*}
\left.\frac{d\vw^E_\alpha}{dt}\right|^p&=Lr_\alpha\sum_{k=1}^{K}kC_k\left(\sum_{m=0}^{\min\left\{L-1,k-1\right\}}\fU_{\alpha,0}^{m,k}\cdot\vvarphi^m_\alpha+\sum_{m=0}^{\min\left\{L-1,k-1+K\right\}}\fU_{\alpha,1}^{m,k}\vvarphi_{\alpha}^m+\vR_{\alpha}^{p,k}\right)\\
&=Lr_\alpha\left(\sum_{m=0}^{\min\left\{L-1,K-1\right\}}
\fB_{\alpha}^m\cdot\vvarphi_\alpha^m + \sum_{m=0}^{\min\left\{L-1,2K-1\right\}}
\fC_{\alpha}^m\cdot\vvarphi_\alpha^m\right)+\fR_\alpha.
\end{align*}
where $\fC_{\alpha,m}=\sum_{k=1}^KkC_k\fU_{\alpha,1}^{m,k}$.

\subsection{Proof of Proposition~\ref{prop:initial_zero}}

\begin{lemma}
\label{lem:signature_consistency}
For any token $\alpha$ with $r_\alpha>0$ and any $1\leq m\leq L-1$, the label signatures satisfy
\begin{equation*}
\sum_{\nu,\beta_1,\ldots,\beta_m\in\fV}
\varphi_\alpha^{y;m}(\nu,\beta_1,\ldots,\beta_m)=1.
\end{equation*}
Moreover, for every context mode $q\in[m]$,
\begin{align*}
\sum_{\beta_q\in\fV}
\varphi_\alpha^{y;m}(\nu,\beta_1,\ldots,\beta_m)=
\varphi_\alpha^{y;m-1}
(\nu,\beta_1,\ldots,\beta_{q-1},\beta_{q+1},\ldots,\beta_m).
\end{align*}
Consequently,
\begin{equation*}
\|\boldsymbol{\varphi}^{y;m}_\alpha\|_F
\leq
\|\boldsymbol{\varphi}^{y;m-1}_\alpha\|_F
\leq\cdots\leq
\|\boldsymbol{\varphi}^{y;0}_\alpha\|_F.
\end{equation*}
The same normalization, marginal-consistency, and norm-monotonicity properties hold for the co-occurrence signatures after omitting the label index.
\end{lemma}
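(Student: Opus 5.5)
The plan is to work directly from the definitions, writing every label signature as an average over ordered tuples of distinct context positions, and then use two elementary facts: a sum of a probability over all values of a random variable gives the probability of the remaining event, and the uniform distribution on ordered distinct $m$-tuples marginalizes to the uniform distribution on ordered distinct $(m-1)$-tuples. Throughout, conditioning is on the event $x_I=\alpha$, which has positive probability because $r_\alpha>0$. So every conditional probability below is well defined, and for each fixed $(j_1,\ldots,j_m)\in\mathcal I_m$ the conditional law of $(y,x_{j_1},\ldots,x_{j_m})$ given $\alpha$ is a genuine probability distribution on $\fV^{m+1}$.

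\textbf{Normalization.} Summing $\varphi_\alpha^{y;m}$ over all $(\nu,\beta_1,\ldots,\beta_m)$ gives $\mathbb P(\Omega\mid\alpha)=1$ for each fixed tuple of positions. Averaging over the $(L-1)_m$ tuples in $\mathcal I_m$ still gives $1$.

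\textbf{Marginal consistency.} Fix $q\in[m]$ and sum over $\beta_q$. For each fixed tuple $(j_1,\ldots,j_m)$, this removes the constraint $x_{j_q}=\beta_q$ and leaves $\mathbb P(y=\nu,\,x_{j_r}=\beta_r,\ r\neq q\mid\alpha)$. That expression depends only on the reduced tuple $(j_r)_{r\neq q}$, which lies in $\mathcal I_{m-1}$. The map from $\mathcal I_m$ deleting the $q$-th coordinate onto $\mathcal I_{m-1}$ is surjective, and each fiber has exactly $L-1-(m-1)=L-m$ elements (the possible choices of $j_q$). The average over $\mathcal I_m$ therefore becomes
\[
\frac{L-m}{(L-1)_m}\sum_{\mathcal I_{m-1}}(\cdots)=\frac{1}{(L-1)_{m-1}}\sum_{\mathcal I_{m-1}}(\cdots).
\]
After relabeling the indices this is exactly $\varphi_\alpha^{y;m-1}$. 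For $m=1$, the same computation reduces to $\mathbb P(y=\nu\mid\alpha)=\varphi_\alpha^{y;0}$, consistent with the separate definition at order zero. The only point needing care is this fiber-counting step, which is what justifies the probabilistic description via $(J^{(m)}_r)$: deleting one coordinate of a uniform ordered distinct $m$-tuple yields a uniform ordered distinct $(m-1)$-tuple.

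\textbf{Norm monotonicity.} All entries are nonnegative, so I will use the inequality $\sum_i a_i^2\le(\sum_i a_i)^2$ for $a_i\ge0$. Apply it with $q=m$ to the fiber $\{\beta_m\in\fV\}$ lying over each fixed $(\nu,\beta_1,\ldots,\beta_{m-1})$, and then sum over those indices. This gives $\|\boldsymbol\varphi^{y;m}_\alpha\|_F^2\le\|\boldsymbol\varphi^{y;m-1}_\alpha\|_F^2$. Iterating down to order zero gives the chain.

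\textbf{Co-occurrence signatures.} The same argument applies verbatim after deleting the label index $\nu$ everywhere. The order-zero co-occurrence object is then the scalar $1$, so normalization at every order already gives the base case.
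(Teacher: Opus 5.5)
Your proposal is correct and follows the paper's proof essentially step for step. You derive normalization from reading each signature as a joint law, marginal consistency from the $(L-m)$-to-one coordinate-deletion map $\mathcal I_m\to\mathcal I_{m-1}$ with $(L-1)_m=(L-1)_{m-1}(L-m)$, and norm monotonicity from $\sum_i a_i^2\le(\sum_i a_i)^2$ on nonnegative fibers, iterated down to order zero; your explicit checks of the $m=1$ case against the separately defined $\varphi^{y;0}_\alpha$ and of the scalar base case for co-occurrence signatures are details the paper leaves implicit.
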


\begin{proof}
By definition, $\boldsymbol{\varphi}^{y;m}_\alpha$ is the joint distribution of the label and the tokens at an ordered tuple of $m$ distinct context positions sampled uniformly from $\mathcal I_m$. Its entries are therefore nonnegative and sum to one.

Fix $q\in[m]$. After deleting the $q$-th coordinate from a uniformly sampled tuple in $\mathcal I_m$, the remaining ordered $(m-1)$-tuple is uniform on $\mathcal I_{m-1}$. Indeed, every fixed element of $\mathcal I_{m-1}$ has exactly $L-m$ preimages in $\mathcal I_m$, and
\begin{equation*}
(L-1)_m=(L-1)_{m-1}(L-m).
\end{equation*}
Summing over the token value at the deleted coordinate therefore gives precisely the order-$(m-1)$ marginal, proving the stated consistency relation.

To prove the norm inequality, fix all indices except $\beta_q$ and use nonnegativity:
\begin{equation*}
\sum_{\beta_q\in\fV}
\varphi_\alpha^{y;m}(\nu,\beta_1,\ldots,\beta_m)^2
\leq
\left(
\sum_{\beta_q\in\fV}
\varphi_\alpha^{y;m}(\nu,\beta_1,\ldots,\beta_m)
\right)^2.
\end{equation*}
Summing this inequality over all remaining indices and applying the marginal-consistency identity yields
\begin{equation*}
\|\boldsymbol{\varphi}^{y;m}_\alpha\|_F^2
\leq
\|\boldsymbol{\varphi}^{y;m-1}_\alpha\|_F^2.
\end{equation*}
Iterating over $m$ proves the result. The co-occurrence case is identical.
\end{proof}

\begin{proof}[Proposition~\ref{prop:initial_zero}]
We first estimate the size of each contracted order-$m$ operator at initialization. Recall that the order-$m$ label-driven operator can be decomposed according to the polynomial degree:
\begin{equation*}
\fA_{\alpha,m}=\sum_{k=m+1}^{K}kC_k\fW_{\alpha}^{y;m,k}.
\end{equation*}
The lowest possible degree is $k=m+1$. For fixed $m$ and $k$, each entry of $\fW_{\alpha}^{y;m,k}$ is a finite linear combination of coordinate-wise products containing one unembedding factor and $k-1$ embedding factors. Since each factor has coordinate scale $d^{-\gamma}$, each entry has second moment at most of order $d^{-2k\gamma}$. The numbers of terms in these finite sums are bounded because $L$ and $K$ are fixed.

Let $\boldsymbol{\psi}_m$ be any deterministic tensor with the same shape as the order-$m$ signature. For each coordinate $a\in[d]$, the $a$-th coordinate of $\fW_{\alpha}^{y;m,k}\cdot\boldsymbol{\psi}_m$ is a weighted sum of entries of $\fW_{\alpha}^{y;m,k}$. By the above moment estimate and Cauchy's inequality,
\begin{equation*}
\mathbb E\left\|\left[\fW_{\alpha}^{y;m,k}\cdot\boldsymbol{\psi}_m\right]_a\right|^2\lesssim d^{-2k\gamma}\|\boldsymbol{\psi}_m\|_F^2 .
\end{equation*}
Summing over $a\in[d]$ gives
\begin{equation*}
\left(\mathbb E\left\|\fW_{\alpha}^{y;m,k}\cdot\boldsymbol{\psi}_m\right\|_2^2\right)^{1/2}\lesssim d^{1/2-k\gamma}\|\boldsymbol{\psi}_m\|_F .
\end{equation*}
Since $k\geq m+1$, the largest contribution at initialization comes from the lowest-degree term $k=m+1$, while all higher-degree terms contain additional factors of size $d^{-\gamma}$. Hence
\begin{equation*}
\left(\mathbb E\left\|\fA_{\alpha,m}\cdot\boldsymbol{\psi}_m\right\|_2^2\right)^{1/2}\lesssim d^{1/2-(m+1)\gamma}\|\boldsymbol{\psi}_m\|_F .
\end{equation*}

For $m=0$, the leading term is the degree-one term:
\begin{equation*}
\fA_{\alpha,0}^{\mathrm{lead}}=C_1\fW_{\alpha}^{y;0,1}.
\end{equation*}
For any deterministic vector $\boldsymbol{\psi}_0$, this term satisfies
\begin{equation*}
\left[\fW_{\alpha}^{y;0,1}\cdot\boldsymbol{\psi}_0\right]_a=\sum_{\ell\in\mathcal V}\boldsymbol{\psi}_0(\ell)\vw^U_\ell(a).
\end{equation*}
Since the unembedding coordinates are independent with variance of order $d^{-2\gamma}$, we have
\begin{equation*}
\mathbb E\left\|\left[\fW_{\alpha}^{y;0,1}\cdot\boldsymbol{\psi}_0\right]_a\right|^2\asymp d^{-2\gamma}\|\boldsymbol{\psi}_0\|_F^2 .
\end{equation*}
After summing over $a\in[d]$, this yields
\begin{equation*}
\left(\mathbb E\left\|\fA_{\alpha,0}\cdot\boldsymbol{\psi}_0\right\|_2^2\right)^{1/2}\asymp d^{1/2-\gamma}\|\boldsymbol{\psi}_0\|_F ,
\end{equation*}
where the higher-degree terms in $\fA_{\alpha,0}$ are smaller by additional powers of $d^{-\gamma}$.

Now take $\boldsymbol{\psi}_m=\boldsymbol{\varphi}^{y;m}_\alpha$. For every fixed $m\geq 1$, the ratio between the root-mean-square scales of the order-$m$ and order-$0$ contractions is bounded by
\begin{equation*}
\frac{\left(\mathbb E\left\|\fA_{\alpha,m}\cdot\boldsymbol{\varphi}^{y;m}_\alpha\right\|_2^2\right)^{1/2}}{\left(\mathbb E\left\|\fA_{\alpha,0}\cdot\boldsymbol{\varphi}^{y;0}_\alpha\right\|_2^2\right)^{1/2}}\lesssim d^{-m\gamma}\frac{\|\boldsymbol{\varphi}^{y;m}_\alpha\|_F}{\|\boldsymbol{\varphi}^{y;0}_\alpha\|_F}.
\end{equation*}
By Lemma~\ref{lem:signature_consistency},
\begin{equation*}
\frac{\|\boldsymbol{\varphi}^{y;m}_\alpha\|_F}{\|\boldsymbol{\varphi}^{y;0}_\alpha\|_F}\leq 1.
\end{equation*}
The factor $\binom{L-1}{m}$ contained in $\fA_{\alpha,m}$ is independent of $d$ because $L$ is fixed. Therefore,
\begin{equation*}
\frac{\left(\mathbb E\left\|\fA_{\alpha,m}\cdot\boldsymbol{\varphi}^{y;m}_\alpha\right\|_2^2\right)^{1/2}}{\left(\mathbb E\left\|\fA_{\alpha,0}\cdot\boldsymbol{\varphi}^{y;0}_\alpha\right\|_2^2\right)^{1/2}}\lesssim d^{-m\gamma}=o(1).
\end{equation*}
Thus every fixed higher-order label-signature contraction is asymptotically smaller than the zero-order contraction at initialization.

\end{proof}

\subsection{Proof of Proposition~\ref{prop:late_stage}}
For each signature order $m$, the label-driven contribution can be decomposed according to the polynomial degree:
\begin{equation*}
\fA_{\alpha,m}(t)\cdot\boldsymbol{\varphi}^{y;m}_\alpha=\sum_{k=m+1}^{K}T_{\alpha,m,k}(t),\qquad T_{\alpha,m,k}(t):=kC_k\fW_{\alpha}^{y;m,k}(t)\cdot\boldsymbol{\varphi}^{y;m}_\alpha .
\end{equation*}
We write
\begin{equation*}
\vw^E_\beta(t)=s_E(t)\bar{\vw}^E_\beta(t),\qquad \vw^U_\nu(t)=s_U(t)\bar{\vw}^U_\nu(t).
\end{equation*}
The degree-$k$ term contains one unembedding factor and $k-1$ embedding factors. Hence, under the assumed non-degeneracy of normalized contractions, its contracted norm has the scale
\begin{equation*}
\|T_{\alpha,m,k}(t)\|_2\asymp |kC_k|\widetilde B_m(k)s_U(t)s_E(t)^{k-1}\|\boldsymbol{\varphi}^{y;m}_\alpha\|_F ,
\end{equation*}
where the combinatorial coefficient is
\begin{equation*}
B_m(k):=\sum_{\substack{h>0,\ n_1,\ldots,n_m\geq 1\\ h+\sum_{r=1}^{m}n_r=k-1}}\frac{(k-1)!}{h!\prod_{r=1}^{m}n_r!},
\qquad
\widetilde B_m(k):=\binom{L-1}{m}B_m(k).
\end{equation*}
Here $\widetilde B_m(k)$ is independent of $t$ and incorporates the deterministic position-averaging factor induced by the normalized signature definition.

We first compare different degrees for a fixed $m$. For any $k<K$, we have
\begin{equation*}
\frac{\|T_{\alpha,m,k}(t)\|_2}{\|T_{\alpha,m,K}(t)\|_2}\lesssim \frac{|kC_k|\widetilde B_m(k)}{|KC_K|\widetilde B_m(K)}s_E(t)^{k-K}.
\end{equation*}
Since $k-K<0$, this ratio converges to zero when $s_E(t)$ becomes sufficiently large. Therefore, for every fixed $m$, the highest-degree term $T_{\alpha,m,K}(t)$ dominates all lower-degree terms. Because there are only finitely many $m=0,\ldots,L-1$, there exists a common threshold $S_*$ such that whenever $s_E(t)\geq S_*$,
\begin{equation*}
\left\|\fA_{\alpha,m}(t)\cdot\boldsymbol{\varphi}^{y;m}_\alpha\right\|_2\asymp \|T_{\alpha,m,K}(t)\|_2,\qquad 0\leq m\leq L-1 .
\end{equation*}
Since $K$ and $L$ are fixed, the coefficients $|KC_K|\widetilde B_m(K)$ are independent of $t$. Together with the uniform non-degeneracy of the normalized contractions and signatures, this gives
\begin{equation*}
\left\|\fA_{\alpha,m}(t)\cdot\boldsymbol{\varphi}^{y;m}_\alpha\right\|_2
\asymp s_U(t)s_E(t)^{K-1},
\qquad 0\leq m\leq L-1.
\end{equation*}
Hence all signature orders have the same late-stage scaling in the parameter magnitudes. Unlike at initialization, their hierarchy is no longer determined by different powers of $s_E(t)$.

It remains only to compare the order-dependent prefactors of these same-scale terms. At degree $K$, their relative magnitudes are governed by the combinatorial coefficient $\widetilde B_m(K)$ together with the normalized contractions and signature factors. Since the factor $\binom{L-1}{m}$ is independent of $K$, the exponential dependence on $K$ is determined by $B_m(K)$, which satisfies
\begin{equation*}
B_m(K)=(m+1)!\left\{{K-1\atop m+1}\right\},
\end{equation*}
where $\left\{{n\atop r}\right\}$ is the Stirling number of the second kind. For fixed $m$ and large $K$,
\begin{equation*}
B_m(K)\sim (m+1)^{K-1}.
\end{equation*}
Therefore, for any $m<L-1$,
\begin{equation*}
\frac{\widetilde B_{L-1}(K)}{\widetilde B_m(K)}
\sim
\frac{1}{\binom{L-1}{m}}
\left(\frac{L}{m+1}\right)^{K-1}
\to\infty,\qquad K\to\infty .
\end{equation*}
Since the label signatures and normalized contractions are uniformly non-degenerate, choosing $K$ sufficiently large gives
\begin{equation*}
\|T_{\alpha,L-1,K}(t)\|_2\geq C\|T_{\alpha,m,K}(t)\|_2,\qquad 0\leq m<L-1,
\end{equation*}
for a sufficiently large constant $C>1$.

Having established that all signature orders share the same late-stage scale, we now compare their prefactors. When $s_E(t)\geq S_*$ and $K$ is sufficiently large, the highest-order prefactor is the largest. Therefore, the full highest-order contribution is the largest among the same-scale signature-order contributions:
\begin{equation*}
\left\|\fA_{\alpha,L-1}(t)\cdot\boldsymbol{\varphi}^{y;L-1}_\alpha\right\|_2\geq \left\|\fA_{\alpha,m}(t)\cdot\boldsymbol{\varphi}^{y;m}_\alpha\right\|_2,\qquad 0\leq m<L-1 .
\end{equation*}
Therefore, once $s_E(t)\geq S_*$ and $K$ is sufficiently large, the order-$(L-1)$ term is the largest among all signature contributions. This completes the proof.

\subsection{Proof of Theorem~\ref{thm:attention}}
We first compute the value-path contribution.

\begin{align}
    \left.\frac{d\vw^E_\alpha}{dt}\right|_{\mathrm{value}}^y=\mathbb E\left[\sum_{s=1}^{L}a_s\mathbf 1\{x_s=\alpha\}\vW^{OV,T}\vW^{U,T}\ve_y\right].
\end{align}

Define the total attention mass received by token $\alpha$ as
\begin{equation*}
    A_{\alpha}(X):=\sum_{s=1}^{L}a_s\mathbf 1\{x_s=\alpha\}.
\end{equation*}
Then we have
\begin{equation*}
\left.\frac{d\vw^E_\alpha}{dt}\right|_{\mathrm{value}}^y=Lr_\alpha\vW'^{T}\mathbb E\left[a_I\ve_y\mid \alpha\right].
\end{equation*}
We recall the label signature
\begin{equation*}
\vvarphi^{y;0}_\alpha(\nu):=\mathbb P(y=\nu\mid \alpha),
\end{equation*}
and the attention-weighting vector
\begin{equation*}
\vpsi^y_{\alpha}(\nu):=\mathbb E[a_I\mid y=\nu,\alpha].
\end{equation*}
Then the attention-weighted label distribution satisfies
\begin{equation*}
\mathbb E[a_I\ve_y\mid \alpha]=\vvarphi^{y;0}_\alpha\odot\vpsi^y_{\alpha}.
\end{equation*}
Therefore,
\begin{equation*}
\left.\frac{d\vw^E_\alpha}{dt}\right|_{\mathrm{value}}^y=Lr_\alpha\vW'^{T}(\vvarphi^{y;0}_\alpha\odot\vpsi^y_{\alpha}).
\end{equation*}
This is the first signature term. It has the same form as a label-signature update, except that the label signature is reweighted by the attention mass assigned to token $\alpha$.

We next compute the contribution from the derivative of the attention scores. Define the attention-weighted value average by
\begin{equation*}
\bar{\vw}^E_X:=\sum_{s=1}^{L}a_s\vw^E_{x_s} .
\end{equation*}
Then
\begin{equation*}
\vg_{\mathrm{attn}}(X)=\vW^{OV}\bar{\vw}^E_X.
\end{equation*}
The softmax derivative satisfies
\begin{equation*}
\frac{\partial a_s}{\partial z_u}=a_s(\mathbf 1\{s=u\}-a_u).
\end{equation*}
Therefore,
\begin{align}
\frac{\partial \bar{\vw}^E_{X}}{\partial z_u}&=\sum_{s=1}^{L}\frac{\partial a_s}{\partial z_u}\vw^E_{x_s} \nonumber\\
&=\sum_{s=1}^{L}a_s(\mathbf 1\{s=u\}-a_u)\vw^E_{x_s}. \nonumber\\
\end{align}
The score-path contribution to the embedding gradient is therefore
\begin{equation*}
\left.\frac{d\vw^E_\alpha}{dt}\right|_{\mathrm{score}}^y=\mathbb E\left[\sum_{u=1}^{L}
\left\langle \vW^{OV,T}\vW^{U,T}\ve_y,\sum_{s=1}^{L}a_s(\mathbf 1\{s=u\}-a_u)\vw^E_{x_s}\right\rangle \frac{\partial z_u}{\partial \vw^E_\alpha}\right].
\end{equation*}

It remains to compute $\partial z_u/\partial \vw^E_\alpha$. 
the embedding $\vw^E_\alpha$ affects $z_u$ in two ways. If $x_L=\alpha$, it affects the last query vector. If $x_u=\alpha$, it affects the key vector at position $u$. Therefore,
\begin{equation*}
\frac{\partial z_u}{\partial \vw^E_\alpha}=\frac{1}{\sqrt d}\mathbf 1\{x_u=\alpha\}\vW^{QK}\vw^E_{x_L}+\frac{1}{\sqrt d}\mathbf 1\{x_L=\alpha\}\vW^{QK,T}\vw^E_{x_u}.
\end{equation*}
Substituting this into the score-path contribution gives
\begin{align}
\left.\frac{d\vw^E_\alpha}{dt}\right|_{\mathrm{score}}^y=&\frac{1}{\sqrt d}\mathbb E\left[\sum_{u=1}^{L}
\left\langle \vW^{OV,T}\vW^{U,T}\ve_y,\sum_{s=1}^{L}a_s(\mathbf 1\{s=u\}-a_u)\vw^E_{x_s}\right\rangle\mathbf 1\{x_u=\alpha\}\vW^{QK}\vw^E_{x_L} \right] \nonumber\\
&+\frac{1}{\sqrt d}\mathbb E\left[\sum_{u=1}^{L}
\left\langle \vW^{OV,T}\vW^{U,T}\ve_y,\sum_{s=1}^{L}a_s(\mathbf 1\{s=u\}-a_u)\vw^E_{x_s}\right\rangle\mathbf 1\{x_L=\alpha\}\vW^{QK,T}\vw^E_{x_u}\right].
\end{align}
We call the first term the key-score contribution and the second term the query-score contribution:
\begin{equation*}
\left.\frac{d\vw^E_\alpha}{dt}\right|_{\mathrm{score}}^y=\left.\frac{d\vw^E_\alpha}{dt}\right|^y_{\rm key}+\left.\frac{d\vw^E_\alpha}{dt}\right|^y_{\rm query}.
\end{equation*}

Consider the term $\left.\frac{d\vw^E_\alpha}{dt}\right|^y_{\rm key}$, we have that
\begin{align*}
    \left.\frac{d\vw^E_\alpha}{dt}\right|^y_{\rm key} &= \frac{1}{\sqrt{d}}\mathbb E\left[ \sum_{u=1}^{L} \vW^{QK}\vw^E_{x_L}\vw^{U,T}_{\nu}\vW^{OV}\left(\sum_{s=1}^{L}a_s(\mathbf 1\{s=u\}-a_u)\vw^E_{x_s}\right)\mathbf 1\{x_u=\alpha\}\right]\\
    &= \frac{1}{\sqrt{d}}\mathbb E\left[ \vW^{QK}\vw^E_{x_L}\vw^{U,T}_{\nu}\vW^{OV}\left(\sum_{u=1}^{L} \left(\sum_{s=1}^{L}a_s(\mathbf 1\{s=u\}-a_u)\vw^E_{x_s}\right)\mathbf 1\{x_u=\alpha\}\right)\right].\\
\end{align*}
It's noted that
\begin{align*}
    &\sum_{u=1}^{L} \left(\sum_{s=1}^{L}a_s(\mathbf 1\{s=u\}-a_u)\vw^E_{x_s}\right)\mathbf 1\{x_u=\alpha\} \\
    =& \sum_{u=1}^L\sum_{s=1}^L \mathbf 1\{x_u=\alpha\}\mathbf 1\{s=u\}a_s\vw^E_{x_s} - \sum_{u=1}^L\sum_{s=1}^L \mathbf 1\{x_u=\alpha\}a_ua_s\vw^E_{x_s}\\
    =& \sum_{u=1}^L \mathbf 1\{x_u=\alpha\}a_u\vw^E_{\alpha} - \sum_{u=1}^L\mathbf 1\{x_u=\alpha\}a_u\sum_{s=1}^La_s\vw^E_{x_s}\\
    =& \left(\sum_{u=1}^L\mathbf 1\{x_u=\alpha\}a_u\right)\left(\vw^E_{\alpha} - \bar{\vw}^E_X\right).
\end{align*}
Then we have
\begin{align*}
   \left.\frac{d\vw^E_\alpha}{dt}\right|^y_{\rm key} &= \frac{1}{\sqrt{d}}\mathbb E\left[\vW^{QK}\vw^E_{x_L}\left(\sum_{u=1}^L\mathbf 1\{x_u=\alpha\}a_u\right) \vw^{U,T}_{\nu}\vW^{OV}\left(\vw^E_{\alpha} - \bar{\vw}^E_X\right)\right]\\
    &= \frac{Lr_{\alpha}}{\sqrt{d}}\vW^{QK}\mathbb E\left[a_I\vw^E_{x_L}\vw^{U,T}_{\nu}\vW^{OV}\left(\vw^E_{\alpha} - \bar{\vw}^E_X\right)\mid \alpha\right].\\
\end{align*}

Conditioned on $\alpha$, this term depends on the joint distribution of the label $y$ and the last token $x_L$. Define the label-last-token signature
\begin{equation*}
\vvarphi^{y,x_L}_\alpha(\nu,\beta):=\mathbb P(y=\nu,x_L=\beta\mid \alpha),\quad \vvarphi^{y,x_L,x_s}_\alpha(\nu,\beta_L,\beta_s):=\mathbb P(y=\nu,x_L=\beta_L,x_s=\beta_s\mid \alpha).
\end{equation*}
For each pair $(\nu,\beta)$, define the key-side attention-derivative weight
\begin{align*}
\vpsi^{y,x_L,(1)}_{\alpha}(\nu,\beta)&:=\mathbb E\left[a_I\mid y=\nu,x_L=\beta,\alpha\right].\\
\vpsi^{y,x_L,x_s,(1)}_{\alpha}(\nu,\beta_L,\beta_s) &:= \mathbb E\left[a_Ia_s\mid y=\nu,x_L=\beta_L,x_s=\beta_s,\alpha\right],\qquad s=1,2,\cdots,L.
\end{align*}
and $\widetilde{\vW}^{\rm key}\in \left(\sR^d\right)^{V\times V},\widetilde{\vW}^{{\rm key},s} \in \left(\sR^d\right)^{V\times V\times V}$, where
\begin{equation*}
    \widetilde{\vW}^{\rm key}\left(\nu,\beta_L\right) = \vw^E_{\beta_L}\vw^{U,T}_{\nu}\vW^{OV}\vw^E_\alpha,\qquad \widetilde{\vW}^{{\rm key,s}}\left(\nu,\beta_L,\beta_s\right) = \vw^E_{\beta_L}\vw^{U,T}_{\nu}\vW^{OV}\vw^E_{\beta_s}.
\end{equation*}
Then
\begin{equation*}
 \left.\frac{d\vw^E_\alpha}{dt}\right|^y_{\rm key} = \frac{Lr_{\alpha}}{\sqrt{d}}\vW^{QK}\left(\widetilde{\vW}^{\rm key}\cdot \left(\vpsi^{y,x_L,(1)}_{\alpha}\odot\vvarphi^{y,x_L}_\alpha\right) - \sum_{s=1}^{L}\widetilde{\vW}^{{\rm key},s}\cdot \left(\vpsi^{y,x_L,x_s,(1)}_{\alpha}\odot\vvarphi^{y,x_L,x_s}_\alpha\right)\right).
\end{equation*}

We now rewrite the query-score term in signature form. The label-driven query-score term is
\begin{equation*}
\left.\frac{d\vw^E_\alpha}{dt}\right|^y_{\rm query}:=\frac{1}{\sqrt d}\mathbb E\left[\mathbf 1\{x_L=\alpha\}\vW^{QK,T}\sum_{u=1}^{L}\vw^E_{x_u}
\vw^{U,T}_y\vW^{OV}\sum_{s=1}^{L}a_s(\mathbf 1\{s=u\}-a_u)\vw^E_{x_s}\right].
\end{equation*}
It's noted that
\begin{align*}
&\sum_{u=1}^{L}\vw^E_{x_u}
\vw^{U,T}_y\vW^{OV}\sum_{s=1}^{L}a_s(\mathbf 1\{s=u\}-a_u)\vw^E_{x_s}\\
=& \sum_{u=1}^L \vw^E_{x_u}\vw^{U,T}_y\vW^{OV}a_u\vw^E_{x_u} - \sum_{u=1}^L \vw^E_{x_u}\vw^{U,T}_y\vW^{OV}a_u\sum_{s=1}^La_s\vw^E_{x_s}\\
=& \sum_{u=1}^La_u\vw^E_{x_u}\vw^{U,T}_y\vW^{OV}\vw^E_{x_u} - \left(\sum_{u=1}^La_u\vw^E_{x_u}\right)\vw^{U,T}_\nu\vW^{OV}\left(\sum_{u=1}^La_u\vw^E_{x_u}\right).
\end{align*}
Let $\vvarphi_{\alpha,x_L}^{y;x_s}(\nu,\beta_s) = \mathbb P\left(y=\nu,x_s=\beta_s\mid x_L=\alpha\right),\vvarphi_{\alpha,x_L}^{y;x_s,x_u}(\nu,\beta_s,\beta_u) = \mathbb P\left(y=\nu,x_s=\beta_s,x_u=\beta_u\mid x_L=\alpha\right)$ and define that
\begin{align*}
    &\vpsi_{\alpha,x_L}^{y,\rightarrow x_s}\left(\nu,\beta_s\right) = \mathbb E \left[a_s\mid y=\nu, x_s=\beta_s, x_L=\alpha\right],\\ &\vpsi_{\alpha,x_L}^{y,\rightarrow x_s,x_u}\left(\nu,\beta_s,\beta_u\right) = \mathbb E\left[a_sa_u\mid y=\nu,x_s=\beta_s,x_u=\beta_u,x_L=\alpha\right],
\end{align*}
and 
\begin{align*}
    \widetilde{\vW}^{{\rm query},u}_{\alpha}\left(\nu,\beta_u\right) = \vw^E_{\beta_u}\vw^{U,T}_\nu\vW^{OV}\vw^E_{\beta_u},\qquad \widetilde{\vW}^{{\rm Query},u,s}_\alpha \left(\nu,\beta_u,\beta_s\right)=\vw^E_{\beta_u}\vw^{U,T}_\nu\vW^{OV}\vw^E_{\beta_s}.
\end{align*}
Then we have
\begin{align*}
    \left.\frac{d\vw^E_\alpha}{dt}\right|^y_{\rm query} = &\frac{r_{\alpha}^{x_L}}{\sqrt{d}}\vW^{QK}\\
    &\left(\sum_{u=1}^L\widetilde{\vW}_{\alpha}^{{\rm query},u}\cdot\left(\vvarphi_{\alpha,x_L}^{y;x_s}\odot\vpsi_{\alpha,x_L}^{y,\rightarrow x_s}\right)-\sum_{u,s=1}^L\widetilde{\vW}_{\alpha}^{{\rm query},u,s}\cdot\left(\vvarphi_{\alpha,x_L}^{y;x_s,x_u}\odot\vpsi_{\alpha,x_L}^{y,\rightarrow x_s,x_u}\right)\right).
\end{align*}

\section{Multi-stage embedding evolution in length-three sequences.}\label{app:length3}

The experiments above consider sequence-length-two tasks, where only the zeroth- and first-order signatures are accessible. To examine the late-stage regime with multiple higher-order contributions, we now consider a sequence-length-three task in which both first- and second-order signatures can influence the embedding dynamics. Proposition~\ref{prop:late_stage} suggests that, once the initialization-induced suppression is lifted, these accessible orders have the same leading parameter scale and can therefore contribute jointly, while the second-order contribution is expected to be the largest among them.

\begin{task}
Let $\mathcal{A}$ and $\mathcal{X}$ be finite sets of integers, where $\mathcal{A}$ is the set of anchor tokens. We consider input sequences
\begin{equation*}
    X=(a_1,a_2,x),
    \qquad
    a_1,a_2\in\mathcal{A},
    \quad
    x\in\mathcal{X},
\end{equation*}
with input space $\mathcal{D}=\mathcal{A}\times\mathcal{A}\times\mathcal{X}$. The target label is defined as
\begin{equation*}
    y=F(a_1,a_2,x)
    =(a_1+a_2+x)\bmod |\mathcal{X}|.
\end{equation*}
\end{task}

The anchor tokens and the contextual token are drawn from disjoint token ranges. For any fixed anchor token, marginalizing over the remaining inputs produces the same label distribution. Consequently, all anchor tokens have identical zeroth-order label signatures. In contrast, the joint token--label statistics retained by the first- and second-order signatures depend on the anchor identity and can therefore distinguish different anchor tokens.

Figure~\ref{fig:length3_dynamics} shows the evolution of the anchor-token embedding geometry. At the early stage, the cosine similarities are nearly uniform and the PCA projection collapses into a compact cluster, consistent with the identical zeroth-order signatures. At the intermediate stage, a banded similarity pattern emerges and the PCA projection forms a smooth one-dimensional curve, indicating that contextual signature effects have begun to appear. At the late stage, the similarity matrix becomes less purely banded and the PCA projection develops a more complex geometry, consistent with the joint influence of the first- and second-order signatures rather than the replacement of one order by the other.

To quantify this evolution, let $S^{(m)}$ denote the pairwise similarity structure induced by the $m$th-order signatures. We compare the off-diagonal entries of $S^{(m)}$ with those of the embedding cosine-similarity matrix. The correlation with $S^{(0)}$ briefly approaches one at the beginning of training, but then becomes strongly negative and gradually moves back toward zero. In contrast, the correlations with both $S^{(1)}$ and $S^{(2)}$ rapidly approach one and remain high throughout training. Thus, the late-stage geometry simultaneously reflects both accessible higher-order signatures. Moreover, $S^{(2)}$ provides a consistently stronger fit than $S^{(1)}$ in the later stage, matching the theoretical picture that the same-scale higher-order contributions coexist while the highest-order contribution is the largest. The zeroth-order geometry is therefore transient, whereas the first- and second-order statistics jointly shape the later embedding space.

\begin{figure}[htb]
    \centering
    \includegraphics[width=1\linewidth]{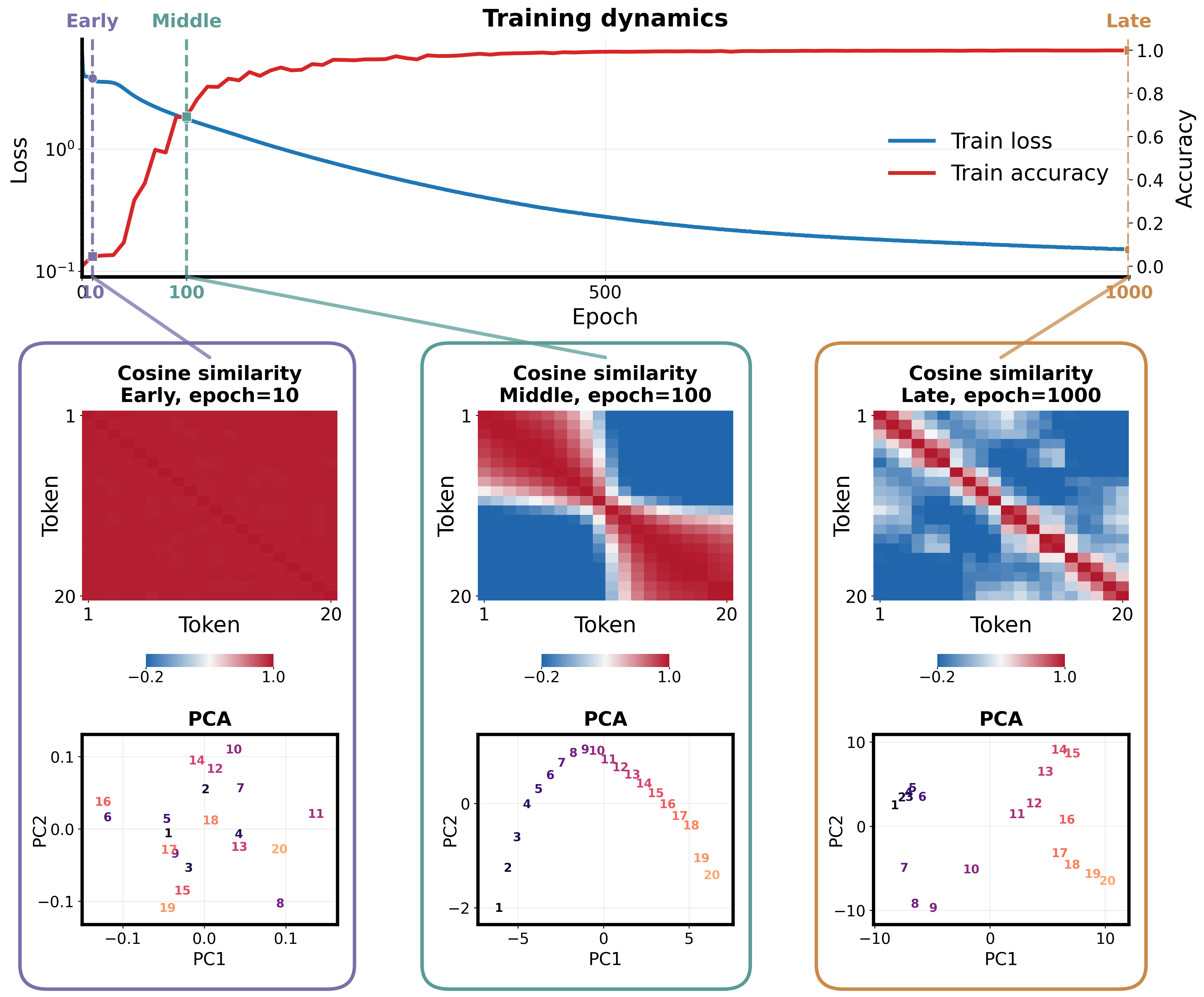}
    \caption{The top panel shows the training loss and accuracy over 1,000 epochs, with dashed vertical lines marking the early, middle, and late checkpoints at epochs 10, 100, and 1,000. The three lower columns show the anchor-token cosine-similarity matrices and their two-dimensional PCA projections at the corresponding checkpoints. Token indices are displayed directly in the PCA projections.
}
    \label{fig:length3_dynamics}
\end{figure}

\begin{figure}[htb]
    \centering
    \includegraphics[width=0.7\linewidth]{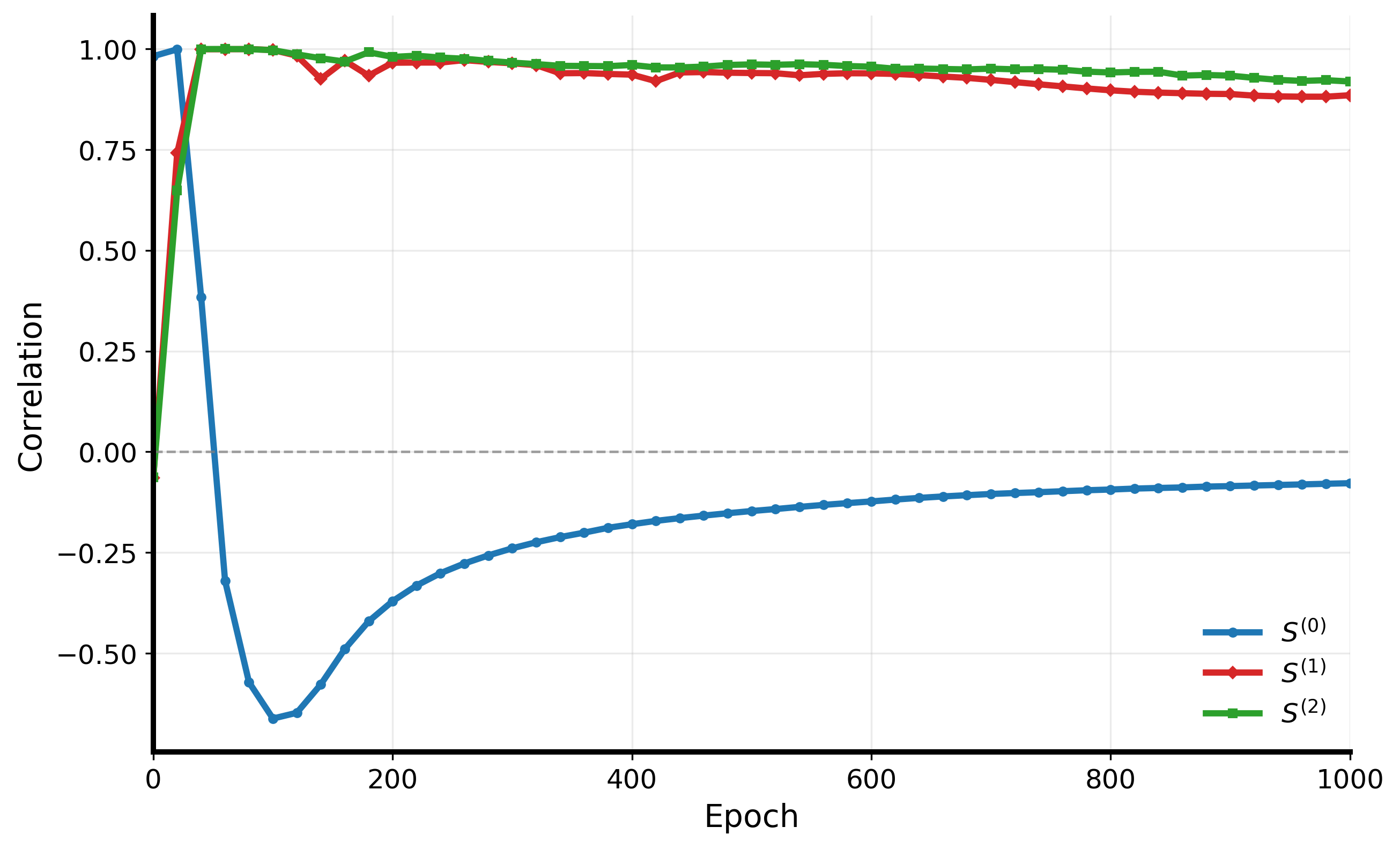}
    \caption{Correlation between the off-diagonal entries of the anchor-token embedding cosine-similarity matrix and the similarity matrices induced by the zeroth-, first-, and second-order signatures, denoted by $S^{(0)}$, $S^{(1)}$, and $S^{(2)}$, respectively, over training epochs. The horizontal dashed line marks zero correlation.}
    \label{fig:leng3_corr}
\end{figure}

\section{Different Seeds, Different Embedding}\label{app:seed_embedding}

Even with the dataset, architecture, and training configuration fixed, changing only the random seed can lead to visibly different final embedding geometries. We observe this phenomenon in both the addition and modular-addition tasks. Rather than being arbitrary, these differences can be explained by the frequency components of the higher-order signatures.

For each seed, we compare the pairwise similarity structure of the final embeddings with that induced by each Fourier component of the corresponding signature. Specifically, we identify
\begin{equation*}
    k^{*}
    =
    \arg\max_{k>10}
    \operatorname{Corr}
    \bigl(
        S_{\mathrm{emb}},
        S_{\mathrm{sig}}^{(k)}
    \bigr),
\end{equation*}
where $S_{\mathrm{emb}}$ is the embedding similarity matrix and
$S_{\mathrm{sig}}^{(k)}$ is the similarity matrix induced by frequency
$k$. As shown in Figures~\ref{fig:seed_addition} and
\ref{fig:seed_modular_addition}, the selected frequency $k^{*}$ varies
across seeds. Correspondingly, the PCA geometry of the embeddings closely
matches the projection of the signature component at the selected
frequency. Thus, different initializations lead the embeddings to align
with different spectral components of the same higher-order signature,
producing distinct but signature-governed geometries.
\begin{figure}[htpb]
    \centering
    \includegraphics[width=1\linewidth]{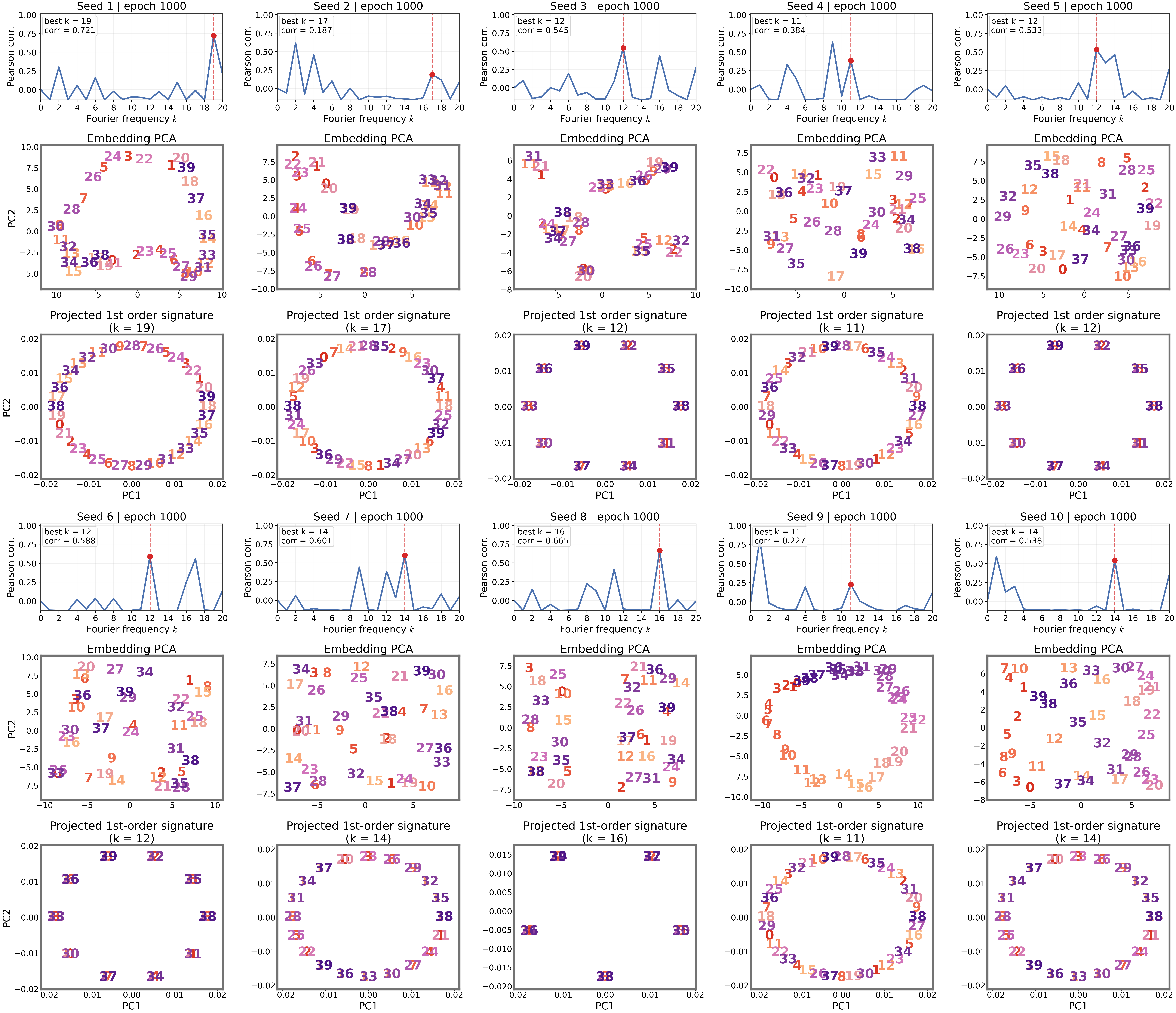}
    \caption{Results for ten random seeds at epoch 1,000. For each seed, the top panel shows the Pearson correlation between the embedding similarity structure and that induced by each Fourier frequency $k$ of the first-order signature, with the red marker and dashed line indicating the frequency of maximum correlation. The middle and bottom panels show the two-dimensional PCA projections of the learned embeddings and the first-order signature component at the selected frequency, respectively. Token indices are displayed directly in the PCA projections.}
    \label{fig:seed_modular_addition}
\end{figure}

\begin{figure}[htpb]
    \centering
    \includegraphics[width=1\linewidth]{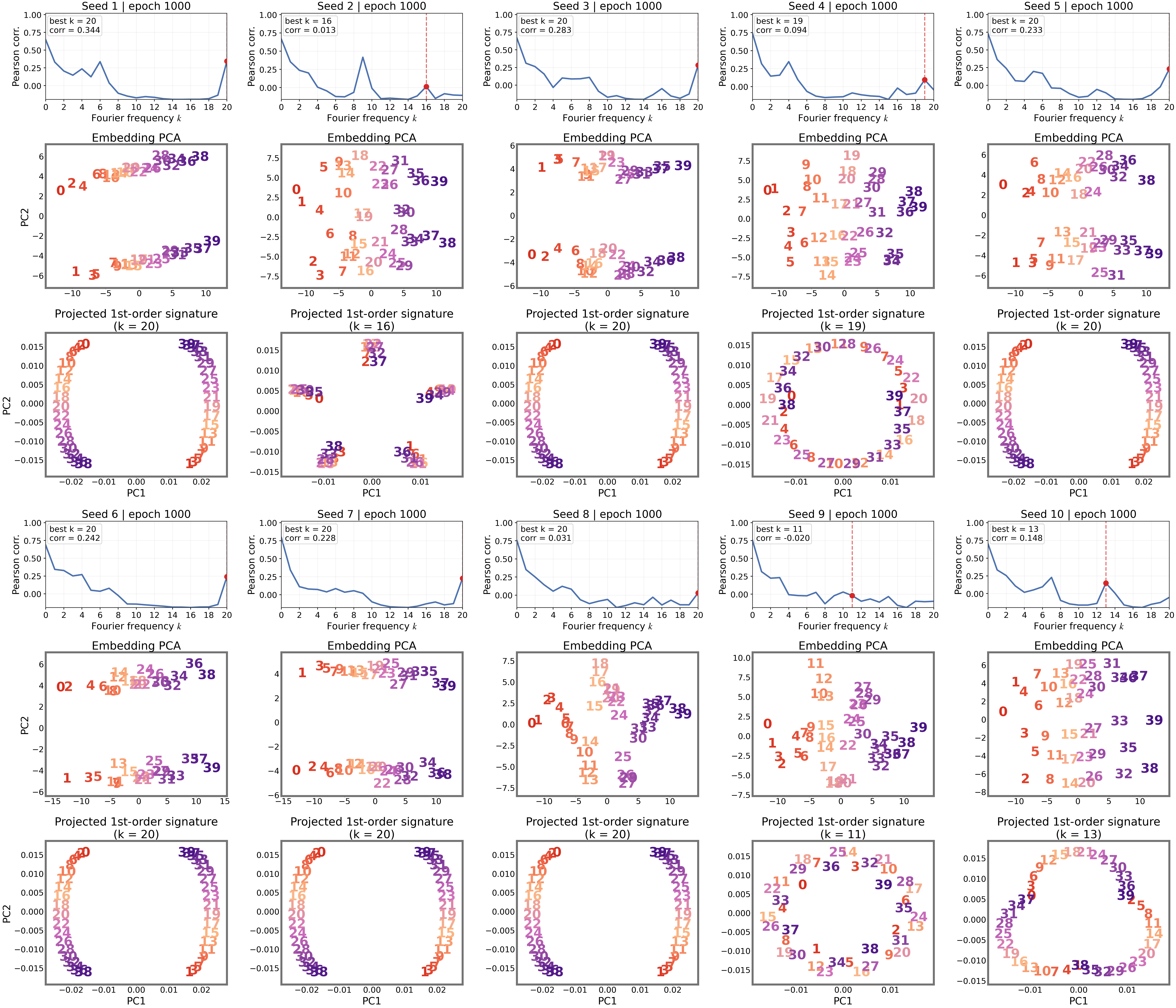}
    \caption{Results for ten random seeds at epoch 1,000. For each seed, the top panel shows the Pearson correlation between the embedding similarity structure and that induced by each Fourier frequency $k$ of the first-order signature, with the red marker and dashed line indicating the frequency of maximum correlation. The middle and bottom panels show the two-dimensional PCA projections of the learned embeddings and the first-order signature component at the selected frequency, respectively. Token indices are displayed directly in the PCA projections.}
    \label{fig:seed_addition}
\end{figure}

\clearpage

\vskip 0.2in
\bibliography{sample}

\end{document}